\PassOptionsToPackage{dvipsnames}{xcolor}
\documentclass{article}
\usepackage{iclr2027_conference,times}

\usepackage{amsmath,amsfonts,bm}

\def\eqref#1{equation~\ref{#1}}

\def\1{\bm{1}}

\DeclareMathAlphabet{\mathsfit}{\encodingdefault}{\sfdefault}{m}{sl}
\SetMathAlphabet{\mathsfit}{bold}{\encodingdefault}{\sfdefault}{bx}{n}

\newcommand{\KL}{{\mathrm{KL}}}

\DeclareMathOperator*{\argmin}{arg\,min}

\usepackage{amsmath,amssymb,amsthm,mathtools}
\usepackage{booktabs,multirow}
\usepackage{graphicx}
\usepackage{subcaption}
\usepackage{algorithm}
\usepackage[noend]{algpseudocode}
\usepackage{microtype}
\usepackage{tikz}
\usetikzlibrary{arrows.meta,positioning,fit}
\usepackage{hyperref}
\usepackage{url}
\usepackage{wrapfig}
\usepackage{caption} \usepackage{wasysym}

\usepackage{tcolorbox}
\theoremstyle{definition}
\newtheorem{definition}{Definition}
\theoremstyle{assumption}
\newtheorem{assumption}{Assumption}
\theoremstyle{plain}

\newtheorem{proposition}{Proposition}
\newtheorem{theorem}{Theorem}
\newcommand{\SB}{\operatorname{SB}}
\newcommand{\tsbm}{\textsc{TSBM}}
\newcommand{\hold}{h_{\mathrm{old}}}

\newcommand{\Fref}{b}
\newcommand{\Prwd}{P^{r}}
\newcommand{\nur}{p_1^{r}}
\title{Tilted Schrödinger Bridge Matching}

\author{Sergei Kholkin\thanks{Corresponding email: \texttt{kholkinsd@gmail.com}}\\Applied AI Institute
\And Evgeny Burnaev\\Applied AI Institute\\AXXX
\And Alexander Korotin\\Applied AI Institute\\AXXX}

\iclrfinalcopy

\begin{document}
\maketitle
\addtocontents{toc}{\protect\setcounter{tocdepth}{-1}}

\begin{abstract}

Schrödinger bridges provide an entropy-regularized framework and a principled solution for unpaired domain translation. In practice, a pretrained bridge may need to be adapted to human preferences or physical constraints through a reward a problem closely related to reward tilting in diffusion models but underexplored for Schrödinger bridges. We introduce Tilted Schrödinger Bridge Matching (\tsbm{}), a post-training method for fine-tuning a learned bridge $P$ between source $p_0$ and target $p_1$ toward a reward-tilted target $p_1^r\propto p_1e^r$, while preserving source $p_0$. We formulate this adaptation as alternating optimization initialized from $P$, provide theoretical justification, and derive a practical algorithm based on Adjoint Matching. We evaluate \tsbm{} on unpaired image-to-image translation targeting digit properties in MNIST and facial attributes in CelebA.

\end{abstract}

\vspace{-4mm}
\section{Introduction}
\vspace{-3mm}

Schrödinger Bridges (SBs) seek a path law that minimizes relative entropy with respect to a reference process while matching prescribed endpoint distributions \citep{schrodinger1931}. They connect stochastic control \cite{bellman1957dynamic}, entropic optimal transport \cite{lLeonard2014}, and diffusion-based generative modeling \cite{song2021scorebased}. SB methods learn \textit{minimal energy} transport from a source distribution $p_0$ to a terminal distribution $p_1$, with applications including unpaired domain transfer \citep{shi2023dsbm, gushchin2024light, kholkin2026ipmf}, trajectory inference \cite{tong2024simulationfree} and sampling from Boltzmann distributions \citep{liu2025asbs, tamogashev2026data}.

After training, however, one may wish to adjust the trained model to account for human preferences \cite{uehara2024entropy}, to enforce physical constraints \cite{li2025palsb, chung2023dps} or presence of image attributes \cite{dhariwal2021}. The standard way to induce such alignment is via diffusion model reward finetuning or reward tilting \cite{domingoenrich2025}, where the goal is to adjust the diffusion model output distribution from $p_0$ to $\nur(x)\propto p_1(x)e^{r(x)}$, where $r$ is a real-valued differentiable reward function. 

However, while there are a lot research for reward finetuning for Flow Matching \cite{liu2025flowgrpo} and Diffusion Models \cite{domingoenrich2025, black2024ddpo, clark2024draft} the reward finetuning for Schrödinger Bridges is underexplored. For memoryless diffusion references \cite{song2021scorebased}, a terminal reward yields the desired tilt $p_1^r\propto p_1e^r$. In diffusion bridges \cite{liu2023i2sb}, however, dependence between $X_0$ and $X_1$ generally biases the resulting terminal marginal. Existing reward-finetuning \cite{li2025palsb} and inference-time alignment \cite{chung2023cddb} methods are approximate and introduce different biases, which may lead to artifacts or reward hacking.

In this work, we tackle the problem of \textbf{reward tilting the Schrödinger Bridge models}. Our contributions are:
\begin{enumerate}
    \item \textbf{Theory.} We show that a Schrödinger bridge with target tilted by differentiable reward $p_1^r(x)\propto p_1(x)e^{r(x)}$ can be cast as stochastic optimal control relative to an exact pretrained bridge with target $p_1$ (\wasyparagraph~\ref{sec:theory}), enabling finetuning to the new target while preserving the source marginal.

    \item \textbf{Method.} We introduce Tilted Schrödinger Bridge Matching (\tsbm{}), a finetuning procedure which learns an incremental control via alternating Controller and Corrector Matching (\wasyparagraph~\ref{sec:method}). We validate \tsbm{} reward steering for reward tilted unpaired image translation on Colored MNIST and CelebA (\wasyparagraph~\ref{sec:experiments}) datasets.
    
\end{enumerate}

\begin{figure}[t]
\centering
\vspace{-7mm}
\includegraphics[width=\linewidth]{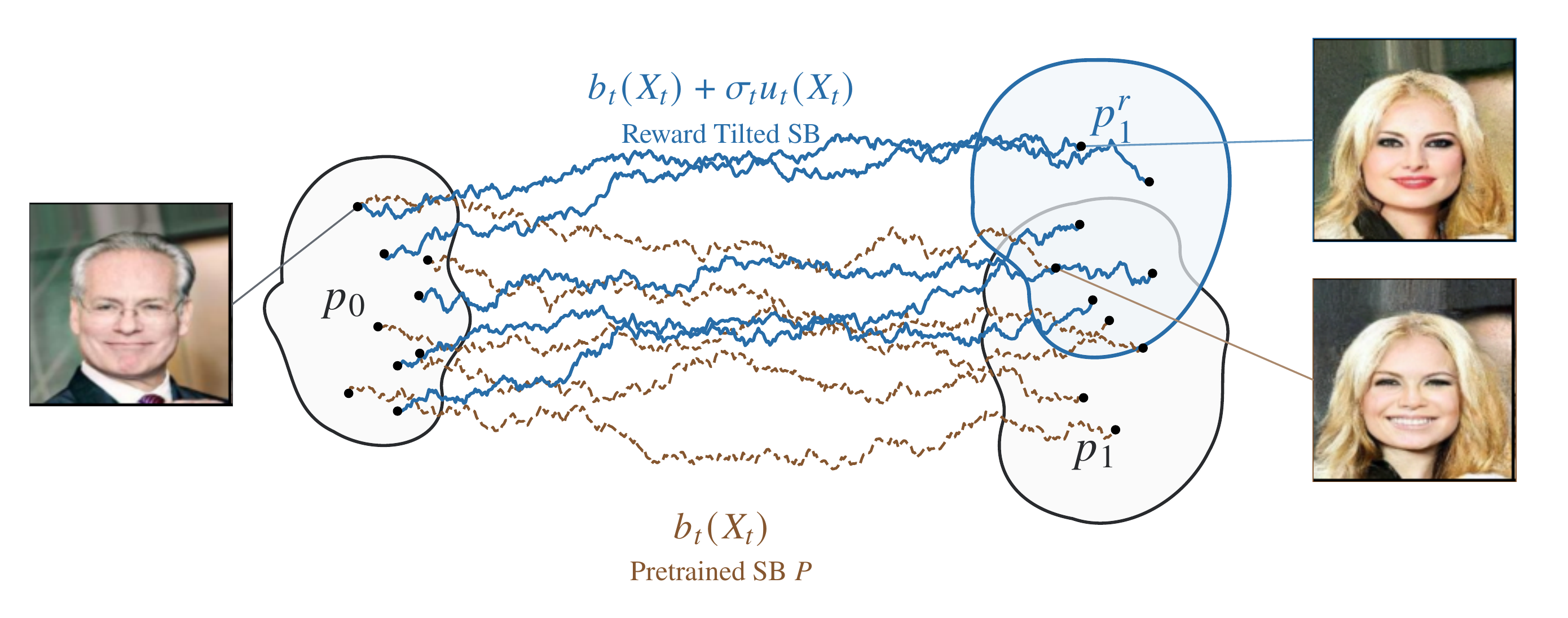}
\vspace{-7mm}
\caption{\textbf{Reward tilting a pretrained Schr\"odinger bridge.} The pretrained SB $P$ transports $p_0$ to $p_1$ with drift $b_t$ (\textcolor{brown}{brown}). TSBM learns an incremental control $u_t$, resulting in  drift $b_t+\sigma_tu_t$ (\textcolor{NavyBlue}{blue}), to target the reward $r(x)$ tilted marginal, i.e., $p_1^r(x) \propto p_1(x)r(x)$.}
\vspace{-5mm}
\label{fig:mechanism}
\end{figure}

\vspace{-2mm}
\paragraph{Notation.}
We work in $\mathbb{R}^d$, which is the D-dimensional Euclidean space equipped with the Euclidean norm $\Vert \cdot \Vert$. We use $\Omega$ to denote the space of trajectories, i.e., continuous $\mathbb{R}^D$-valued functions of $t \in [0, 1]$, then $X \in \Omega$ is the trajectory and $X_t \in \mathbb{R}^d$ is its slice at time $t$. We write $Q \in P(\Omega)$ to denote a path law $Q$, which is the  element of probability distributions set on the trajectories $\Omega$. For a path law $Q$, let $p_t^Q(x)$ denote the density of $X_t$ and let $p_{t\mid s}^Q(y\mid x)$ denote the conditional density of $X_t$ given $X_s=x$, for $0\leq s<t\leq1$. We assume that these distributions admit densities with respect to Lebesgue measure whenever the densities are used. 

In the whole manuscript we set the reference process $R$, which we assume can be represented as a diffusion with globally Lipschitz in $x_t$ drift function and continuous, finite and positive volatility coefficient $\sigma_t$:

\vspace{-8mm}
\begin{equation}
    R: \mathrm{d}X_t=f_t(X_t)\,\mathrm{d}t
    +\sigma_t\,\mathrm{d}W_t, \quad X_0\sim p_0.
\end{equation}
\vspace{-5mm}

The diffusion $R$ then can be augmented by controller $u_t: \mathbb{R}^d \times \mathbb{R}_{+}\rightarrow \mathbb{R}^d$, and the \textit{controlled diffusion} is denoted as $R^{u}$:
\vspace{-2mm}
\begin{equation}
    R^{v}: \mathrm{d}X_t=[f_t(X_t)+\sigma_tu_t(X_t)]\,\mathrm{d}t
    +\sigma_t\,\mathrm{d}W_t,\quad X_0\sim p_0,
\end{equation}
\vspace{-4mm}
where $W_t$ is standard Brownian motion.

\vspace{-1mm}
\section{Background}\label{sec:background}
\vspace{-2mm}

\vspace{-2mm}
\paragraph{Stochastic optimal control.}
In our work we consider commonly used in generative modeling Stochastic optimal control (SOC) problem  \cite{bellman1957dynamic} the quadratic cost control-affine problem formulation \cite{domingoenrich2024socm}. For the original reference $R$, our goal is add such a control $u_t$ to it to minimize the following objective:

\vspace{-3mm}
\begin{equation}
\begin{aligned}
    u^* = \min_v\;&\mathbb{E}_{R^u}\!\left[
    \frac12\int_0^1 \|u_t(X_t)\|^2\,\mathrm{d}t - r(X_1)
    \right] \\
    \text{s.t.}\;&
    \mathrm{d}X_t =
    \bigl[f_t(X_t)+\sigma_t u_t(X_t)\bigr]\,\mathrm{d}t
    + \sigma_t\,\mathrm{d}W_t,
    \qquad X_0\sim p_0,
\end{aligned}
\label{eq:soc_def}
\end{equation}
\vspace{-3mm}

where $X_t$ is the process trajectory slice, $r$ is a terminal reward, $f_t$ is reference drift, $\sigma_t$ is reference volatility, which are not optimized and $u^*$ is called optimal controller. 

Under the usual regularity assumptions the optimal distribution on endpoints has form:

\vspace{-4mm}
\begin{equation}
    p_{0, 1}^{R^{v_g}}(X_1|X_0)
    =p_{0, 1}^{R}(X_1|X_0)\exp\!\left[r(X_1)+V_0(X_0)\right],
    V_0(x)=-\log\mathbb{E}_{R}\!\left[e^{r(X_1)}\mid X_0=x\right],
    \label{eq:soc-optimal-law}
\end{equation}
\vspace{-3mm}

when $p_{0, 1}^{R}$ is memoryless, i.e., $p_{0, 1}^{R}(X_0, X_1)= p_{0}^{R}(X_0)p_{1}^{R}(X_1)$ the SOC problem admits a particularly simple characterization: the terminal marginal of the controlled process satisfies $p_1^{R^{v_g}}(X_1)\propto p_1^{R}(X_1)r(X_1)=p_1^r(X_1)$. In contrast, for non-memoryless models, such as Diffusion Bridges \cite{liu2023i2sb} or Schrödinger Bridges \cite{debortoli2021}, this simplification no longer holds and the resulting terminal marginal in general does not coincide with the desired reward-tilted distribution $p_1^r$. We provide further discussion in Appendix~\ref{app:reward-tilting-nonmemoryless}.

\vspace{-2mm}
\paragraph{Controller Matching.}  In this work, we are focused on Adjoint Matching \cite{domingoenrich2025} algorithm for solving SOC problems thanks to its scalability and robustness. Given a SOC problem Eq~\ref{eq:soc_def} one can define a "lean" adjoint state $\tilde{a}$:

\vspace{-3mm}
\begin{equation}
\begin{aligned}
\frac{\mathrm{d}\tilde{a}_t(X)}{\mathrm{d}t}
= -\nabla_x f_t(X_t)^\top \tilde{a}_t(X), \quad \tilde{a}_1(X)
= -\nabla_x r(X_1),
\label{eq:lean-adj}
\end{aligned}
\end{equation}
\vspace{-3mm}

where $\nabla_x f_t(X_t)$ is the Jacobian of reference drift $f_t(X_t)$. Which in turn allows to define the optimal controller $u^*$, as the argmin of  matching objective:

\vspace{-4mm}
\begin{equation}
    u^* = \argmin_u\mathbb{E}_{X \sim P^{\overline{u}}} \left[ \int_0^1
    \left\|u(X_t,t)+\sigma_t\tilde{a}_t(X)\right\|^2\mathrm{d}t\right], \quad \overline{u} = \text{stopgrad}(u),
    \label{eq:controller-loss}
\end{equation}
\vspace{-3mm}

where differentiation through the generation trajectory $X \sim P^{\overline{u}}$ may be ommitted, which makes algorithm scalable and popular in different applications \cite{havens2025adjointsampling, liu2025asbs, gutjahr2026constrained}. Although the SDE simulation for sampling $X \sim P^{\overline{u}}$ and ODE simulation to calculate the "lean" adjoint $\tilde{a}_t$ are still required and can be rather expensive.

\vspace{-2mm}
\paragraph{Schr\"odinger bridges.}

While regular SOC problem Eq~\ref{eq:soc_def} has only one marginal constraint, i.e., $X_0 \sim p_0$, the Schrödinger Bridge Problem imposes two marginal constraints on the endpoints, i.e., $X_0 \sim p_0$ and $X_1 \sim p_1$, lefts out the terminal reward $r(x)$, but keeps the $u_t(X_t, t)$ minimization term, which can be read as minimal energy constraint:

\vspace{-4mm}
\begin{equation}
\begin{aligned}
    u^* = \argmin_v&\mathbb{E}_{R^u}\!\left[
    \frac12\int_0^1 \|u_t(X_t)\|^2\,\mathrm{d}t
    \right] =  \argmin_v
    \text{KL}(R^u \Vert R)
    \\
    \text{s.t.}\;&
    \mathrm{d}X_t =
    \bigl[f_t(X_t)+\sigma_t u_t(X_t)\bigr]\,\mathrm{d}t
    + \sigma_t\,\mathrm{d}W_t,
    \qquad X_0\sim p_0, \quad X_1 \sim p_1.
\end{aligned}
\label{eq:sb}
\end{equation}
\vspace{-3mm}

We optimize over admissible finite-energy controls and assume the bridge is attained in this class. The resulting controlled process $P$ is called Schrödinger Bridge between source distribution $p_0$ and target distribution$p_1$ with reference $R$ and is denoted in our paper as:

\vspace{-4mm}
\begin{equation}
    P=\SB_R(p_0,p_1)
\end{equation}
\vspace{-5mm}

In other words, it takes the reference diffusion $R$ and searches for closest to it, in KL divergence, diffusion that satisfies marginal constrains \citep{lLeonard2014}. The most popular choice of $R$ is Wiener process with constant volatility $\epsilon$, i.e., $W^\epsilon$. In this case, minimizing the path-space KL divergence is equivalent to minimizing the expected control energy required to transport $p_0$ to $p_1$, providing a principled formulation of unpaired domain translation \cite{shi2023dsbm}.

\vspace{-2mm}
\paragraph{Schr\"odinger bridges and Stochastic Optimal Control.} Under mild regularity assumptions this problem can be converted into regular SOC problem with one marginal constraint \cite{liu2025asbs}: 

\vspace{-4mm}
\begin{equation}
\begin{gathered}
    u^\star = \arg\min_u
    \mathbb{E}_{R^u}\!\left[
    \frac12\int_0^1 \|u_t(X_t)\|^2\,\mathrm{d}t
    +\log\frac{\widehat\varphi_1(X_1)}{p_1(X_1)}
    \right],
    \\[6pt]
    \text{s.t.}\quad
    \mathrm{d}X_t =
    \bigl[f_t(X_t)+\sigma_t u_t(X_t)\bigr]\,\mathrm{d}t
    +\sigma_t\,\mathrm{d}W_t,
    \qquad X_0\sim p_0.
\end{gathered}
\label{eq:sb-terminal-cost-control}
\end{equation}
\vspace{-3mm}

where $\widehat\varphi_1(X_1)$ is the Schrödinger Potential \cite{lLeonard2014}. Which is defined as follows:

\vspace{-6mm}
\begin{subequations}
\label{eq:schrodinger-potentials}
\begin{align}
    \varphi_t(x_t)&=\int p_{1\mid t}^R(x_1\mid x_t)\varphi_1(x_1)\,\mathrm{d}x_1,
    &\varphi_0(x_0)\widehat\varphi_0(x_0)&=p_0(x_0),
    \label{eq:backward-potential}\\
    \widehat\varphi_t(x_t)
    &=\int p_{t\mid0}^R(x_t\mid x_0)\widehat\varphi_0(x_0)\,\mathrm{d}x_0,
    &\varphi_1(x_1)\widehat\varphi_1(x_1)&=p_1(x_1).
    \label{eq:terminal-density-potential}
\end{align}
\end{subequations}
\vspace{-3mm}

The Eq~\ref{eq:backward-potential} propagates $\varphi_1$ backward in time, while Eq~\ref{eq:terminal-density-potential} propagates $\widehat\varphi_0$ forward in the density convention. Their product gives the bridge marginal at time $t$: $p_t^P=\varphi_t\widehat\varphi_t$. One can as well say that Schrödinger Potential $\widehat\varphi_1(X_1)$ helps to \textit{debias} the SOC problem solution to fit the marginal $p_1$. Such result is widely used and can lead the application of SOC problems solving algorithms, such as \cite{domingoenrich2025, gushchin2023enot}, to solving the Schrodinger Bridge Problem. 

\vspace{-2mm}
\paragraph{Corrector Matching.} Several works solve the Schr\"odinger bridge problem through the SOC formulation Eq~\ref{eq:sb-terminal-cost-control} \citep{gushchin2023enot,liu2025asbs}. This formulation requires the terminal cost $\log(\widehat\varphi_1/p_1)$, which is initially unknown. \citet{gushchin2023enot} learn this cost through the entropic optimal transport dual formulation. In contrast, \citet{liu2025asbs} use Adjoint Matching Eq~\ref{eq:controller-loss}, which requires only its gradient. When the target density $p_1$ is known up to a normalizing constant, its score $\nabla_x\log p_1$ is available, leaving only the \textit{corrector} $h^*:=\nabla_x\log\widehat\varphi_1$ to be learned. The corrector admits the regression characterization:

\vspace{-3mm}
\begin{equation}
    h^*(x) := \nabla_x \log \varphi(x)= \argmin_h \mathbb{E}_{X_1, X_0 \sim P^{u^*}}[\Vert h(X_1) - \nabla_{x_1}\log p^{R}(X_1|X_0) \Vert^2].
    \label{eq:corrector-loss}
\end{equation}
\vspace{-3mm}

Note that this objective requires samples from the optimal bridge $P^{u^*}$, which is itself unknown. To resolve this interdependence, controller and corrector updates are alternated, using trajectories generated by the current controller to update the corrector \citep{liu2025asbs}.

\vspace{-3mm}
\section{Tilting a Schrödinger Bridge}
\label{sec:theory}
\vspace{-3mm}

We first define the Tilted Schrödinger Bridge Problem (TiltedSBP) in \wasyparagraph~\ref{sec:tilted-sbp}, then relate formulations using the base process $R$ and pretrained bridge $P$ as references. In \wasyparagraph~\ref{sec:relative-control}, we show equivalence of the corresponding path-space solutions (Prop.~\ref{prop:change-reference}), relate their Schrödinger potentials (Prop.~\ref{prop:relative-log-potentials}), and derive a relative SOC formulation that avoids explicit knowledge of $p_1$ (Prop.~\ref{prop:relative-soc-problem}).

\vspace{-2mm}
\subsection{Tilted Schrödinger Bridge Problem}\label{sec:tilted-sbp}
\vspace{-2mm}

First, let us recap the basic Schrodinger Bridge Problem Eq~\ref{eq:sb} between distributions $p_0, p_1$ and its solution $P=\SB_R(p_0,p_1)$, which we call \textbf{pretrained Schrödinger Bridge}. Then let us tilt the target distribution by some reward function $r(x)$, i.e., $\nur(x) \propto p_1(x)e^{r(x)}$. Then the central problem that we target is this paper is finding a Schrödinger Bridge between source distribution $p_0$ and target $p_1^r$, or more formally:

\begin{tcolorbox}[
    colback=gray!10,
    colframe=gray!60,
    boxrule=0.5pt,
    arc=3mm,
    left=3mm,
    right=3mm,
    top=2mm,
    bottom=2mm
]
\begin{definition}[TiltedSBP]
\label{def:tilted-sb}
Let $R \in P(\Omega)$ be a reference diffusion law on $\Omega = C([0, 1], \mathbb{R}^d)$.
Given a target and source probability densities $p_0, p_1 \in P(\mathbb{R}^d)$ and a measurable reward $r:\mathbb{R}^d\to\mathbb{R}$, assume $0 < \int p_1(x)e^{r(x)}dx < \infty$ and define the reward-tilted target density:

\vspace{-3mm}
\begin{equation}
    p_1^r(x) \propto p_1(x)e^{r(x)}
\end{equation}
\vspace{-4mm}

The \emph{Tilted Schrödinger Bridge problem} is to find:

\vspace{-3mm}
\begin{equation}
\begin{gathered}
    u^\star = \argmin_u
    \mathbb{E}_{R^u}\!\left[
    \frac12\int_0^1 \|u_t(X_t)\|^2\,\mathrm{d}t
    \right]
    \\[6pt]
    \text{s.t.}\quad
    \mathrm{d}X_t =
    \bigl[f_t(X_t)+\sigma_t u_t(X_t)\bigr]\,\mathrm{d}t
    +\sigma_t\,\mathrm{d}W_t,
    \qquad X_0\sim p_0,\quad X_1\sim\underline{p_1^r}.
\end{gathered}
\label{eq:tilted-sb-problem}
\end{equation}

\end{definition}
\end{tcolorbox}

Where the optimal solution is denoted by $P^{u^*}=\SB_R(p_0,p_1^r)$. While problem formulation being fully valid the practical solution can have problems. In many practical cases the distribution $p_1$ distribution can be given by empirical samples, while $r(x)$ can be given by reward, which would complicate the distribution $p_1^r$ representation.

\vspace{-2mm}
\subsection{Relative Stochastic Optimal Control formulation for TiltedSBP}
\label{sec:relative-control}
\vspace{-2mm}

As is standard in many reward finetuning methods \cite{uehara2024entropy, domingoenrich2025}, which also target reward tilted target distribution $p_1^r$, the tilting problem defined relative the to the base process, i.e., $P=\SB_R(p_0,p_1)$ in our case. Which gives us motivation to define the TiltedSBP problem rather with as $P$ as a reference, i.e., which we call \textbf{relative problem}, rather than with $R$ as a reference. We first state conditions that will be actively used throughout the paper and show that changing the reference preserves the TiltedSBP solution.

\begin{assumption}[Reference and reward] \label{assumptions}The reference starts from $p_0$, $p_{0, 1}^{R}$ and $p_0 \otimes p_1$ define equivalent measures with $\text{KL}(p_0 \otimes p_1 \Vert p^R_{0, 1}) < \infty$ and $r$ is bounded above.
\end{assumption}

\begin{proposition}[Change of reference]
\label{prop:change-reference}
Under Assumptions~\ref{assumptions},
\vspace{-1mm}
\begin{equation}
    \SB_P(p_0,\nur)=\SB_R(p_0,\nur)=\Prwd.
    \label{eq:change-reference}
\end{equation}
\vspace{-3mm}
\end{proposition}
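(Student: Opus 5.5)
We will use the standard static/dynamic decomposition of the Schrödinger problem \citep{lLeonard2014}. Write the chain rule for path-space KL,
\begin{equation*}
\KL(Q\Vert R)=\KL(p^Q_{0,1}\Vert p^R_{0,1})+\mathbb{E}_{p^Q_{0,1}}\bigl[\KL(Q_{\cdot\mid x_0,x_1}\Vert R_{\cdot\mid x_0,x_1})\bigr].
\end{equation*}
It implies that $\SB_R(p_0,q)=\int R_{\cdot\mid x_0,x_1}\,\mathrm{d}\pi^*(x_0,x_1)$, where $\pi^*$ minimizes $\KL(\pi\Vert p^R_{0,1})$ over couplings $\pi\in\Pi(p_0,q)$. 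The key structural fact is that $P=\SB_R(p_0,p_1)$ is a mixture of the bridges of $R$, so $P_{\cdot\mid x_0,x_1}=R_{\cdot\mid x_0,x_1}$. Consequently, for any $Q\ll P$,
\begin{equation*}
\KL(Q\Vert P)=\KL(p^Q_{0,1}\Vert p^P_{0,1})+\mathbb{E}_{p^Q_{0,1}}\bigl[\KL(Q_{\cdot\mid x_0,x_1}\Vert R_{\cdot\mid x_0,x_1})\bigr].
\end{equation*}
Both problems therefore reduce to the same bridge-matching choice $Q_{\cdot\mid x_0,x_1}=R_{\cdot\mid x_0,x_1}$, and it remains to compare the static problems.

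For the static comparison we use the Schrödinger system. Under Assumption~\ref{assumptions} (equivalence and finite $\KL(p_0\otimes p_1\Vert p^R_{0,1})$), the static problem with marginals $(p_0,p_1)$ has a unique solution of product form, $p^P_{0,1}(x_0,x_1)=a(x_0)b(x_1)p^R_{0,1}(x_0,x_1)$, with $a=\widehat\varphi_0/p_0$-type and $b=\varphi_1$ factors, consistent with Eq~\ref{eq:schrodinger-potentials}. For any coupling $\pi\in\Pi(p_0,\nur)$ we then get
\begin{equation*}
\KL(\pi\Vert p^P_{0,1})=\KL(\pi\Vert p^R_{0,1})-\mathbb{E}_{p_0}[\log a]-\mathbb{E}_{\nur}[\log b].
\end{equation*}
The subtracted terms depend only on the fixed marginals $p_0$ and $\nur$, not on $\pi$. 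Hence, whenever both sides are finite, the two static problems over $\Pi(p_0,\nur)$ share their minimizers. Gluing the common optimal coupling with the common bridges $R_{\cdot\mid x_0,x_1}$ gives $\SB_P(p_0,\nur)=\SB_R(p_0,\nur)=:\Prwd$. One can also verify this directly: $\Prwd$ has density $a'(x_0)b'(x_1)$ with respect to $R$, hence density $\frac{a'}{a}(x_0)\frac{b'}{b}(x_1)$ with respect to $P$, which is again of Schrödinger product form. By the characterization of SB solutions through this product form, $\Prwd$ is then the SB relative to $P$.

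The main obstacle is integrability and well-posedness, so that the "constant" terms above are finite and the problem with marginals $(p_0,\nur)$ is feasible with finite value. We plan to handle this as follows. Use the coupling $\tilde\pi=(p_0\otimes p_1)\,e^{r(x_1)}/Z$, which lies in $\Pi(p_0,\nur)$. Because $r$ is bounded above, $\KL(\tilde\pi\Vert p^R_{0,1})\le \frac{\sup r-\log Z}{Z}\,e^{\sup r}\cdot(\text{finite terms})+\dots$; more precisely, $\mathrm{d}\tilde\pi/\mathrm{d}(p_0\otimes p_1)\le e^{\sup r}/Z$. A short computation, splitting the log-density and using $\KL(p_0\otimes p_1\Vert p^R_{0,1})<\infty$, then bounds $\KL(\tilde\pi\Vert p^R_{0,1})$, where the negative part is controlled by standard $x\log x\ge -e^{-1}$ arguments. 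This yields existence and uniqueness of $\Prwd$ by strict convexity of KL. To justify splitting the logarithm, we need $\log a\in L^1(p_0)$ and $\log b\in L^1(\nur)$. We would derive these from $\KL(\tilde\pi\Vert p^P_{0,1})<\infty$ together with the same bound, or, if that fails, work instead with the difference of the two objectives restricted to finite-entropy couplings, which avoids needing the separate integrability of each term.
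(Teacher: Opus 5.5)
Your overall strategy is the paper's. Because $\mathrm{d}P/\mathrm{d}R$ factorizes over the endpoints, switching the reference from $R$ to $P$ shifts the objective on the feasible set by $-\mathbb{E}_{p_0}[\log a]-\mathbb{E}_{\nur}[\log b]$. In the paper's notation $a=1/\varphi_0^{\mathrm{old}}$ and $b=\varphi_1^{\mathrm{old}}$. The static/bridge decomposition is a harmless detour; the paper writes $\KL(Q\Vert P)=\KL(Q\Vert R)-\mathbb{E}_Q\log\frac{\mathrm{d}P}{\mathrm{d}R}$ directly on path space. Your feasibility bound for $p_0\otimes\nur$ via $\nur/p_1\le e^{\sup r}/Z$ also matches the paper.

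The gap is the step you flag and then leave open: showing $\log a\in L^1(p_0)$ and $\log b\in L^1(\nur)$. Without it the ``constant'' need not be finite. You also cannot compare the two objectives on couplings where one is finite and the other infinite. Nor do you know that $\SB_P(p_0,\nur)$ has finite value, which the product-form characterization in your ``direct verification'' also requires.

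Your first fix is circular. Given $\KL(p_0\otimes\nur\Vert p^R_{0,1})<\infty$, finiteness of $\KL(p_0\otimes\nur\Vert p^P_{0,1})$ is equivalent to $\log a\oplus\log b\in L^1(p_0\otimes\nur)$. By Fubini, that is equivalent to the separate integrability you are trying to prove. The fallback does not avoid the issue either. The difference of the objectives is $\int(\log a\oplus\log b)\,\mathrm{d}\pi$. Showing it is the same for all admissible $\pi$, and handling couplings where only one objective is finite, is exactly what separate integrability gives you.

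The missing ingredient is to establish integrability for the \emph{untilted} problem and then transfer it. Under Assumption~\ref{assumptions}, $\KL(p_0\otimes p_1\Vert p^R_{0,1})<\infty$ yields the standard fact that the old potentials are integrable against their own marginals. That is, $\log\varphi_0^{\mathrm{old}}\in L^1(p_0)$ and $\log\varphi_1^{\mathrm{old}}\in L^1(p_1)$; the paper cites Nutz, Lemma~2.21.

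The tilt leaves the source marginal unchanged and alters the target only through $w=\nur/p_1\le M$. Hence $\int|\log\varphi_1^{\mathrm{old}}|\,\nur\le M\int|\log\varphi_1^{\mathrm{old}}|\,p_1<\infty$. This is a second, essential use of ``$r$ bounded above'', beyond feasibility.

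It follows that $\log\frac{\mathrm{d}P}{\mathrm{d}R}\in L^1(Q)$ for every $Q$ with marginals $(p_0,\nur)$. Since $\mathrm{d}P/\mathrm{d}R>0$, we have $Q\ll R\iff Q\ll P$. So $\KL(Q\Vert P)=\KL(Q\Vert R)+C$ holds with finite $C$ for all such $Q$, including when both sides are infinite. This gives the common unique minimizer $\Prwd$.
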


Thus, terminal \textit{retargeting} can use the pretrained bridge as its reference without changing the desired solution. This permits learning an incremental control around the pretrained drift of $P$. The detailed proof is given in Appendix~\ref{app:proofs}. However, $\SB_P(p_0,\nur)$ and $\SB_R(p_0,\nur)$ problems while being equivalent have different Schrodinger Potentials Eq~\ref{eq:schrodinger-potentials}.

Let us further explore the connections between the TiltedSBP problem with $R$ as reference and TiltedSBP problem with $P$ as reference, in the context of Schrödinger Potentials. Let $\widehat\varphi^{\mathrm{old}}_1$, $\widehat\varphi^r_1$ and $\widehat\psi^r_1$ denote the terminal Schrodinger Potentials of $\SB_R(p_0, p_1)$, $\SB_R(p_0,\nur)$, $\SB_P(p_0,\nur)$ problems correspondingly, as defined in Eq~\ref{eq:schrodinger-potentials}. The relative SOC terminal cost $f_{\rm rel}^r(x) := \log\frac{\widehat\psi^r_1(x)}{p_1^r(x)}$ and can be expressed without $p_1$ factor:

\begin{proposition}[Relative terminal potential relation]
\label{prop:relative-log-potentials}
The terminal cost term of SOC formulated relative SB problem $\SB_P(p_0,\nur)$ admits:
\vspace{-2mm}
\begin{equation}
    f_{\rm rel}^r(x) := \log\frac{\widehat\psi^r_1(x)}{p_1^r(x)}
    =\log\widehat\varphi^r_1(x)
    -\log\widehat\varphi^{\mathrm{old}}_1(x) - r(x) + C,
    \label{eq:relative-terminal-log-potential}
\end{equation}
\vspace{-3mm}
where $C$ is a constant that doesn't depend on $x$.
\end{proposition}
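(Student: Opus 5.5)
The plan is to write every bridge involved through its static (endpoint) Schrödinger factorization and compare the factorizations of the single law $\Prwd$. Proposition~\ref{prop:change-reference} supplies this law, since it is both $\SB_R(p_0,\nur)$ and $\SB_P(p_0,\nur)$. The claimed identity should then follow by pure bookkeeping.

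\textbf{Step 1: the old bridge and its kernel.} In the density convention of Eq~\ref{eq:schrodinger-potentials}, the endpoint law of the pretrained bridge $P=\SB_R(p_0,p_1)$ is
\begin{equation*}
p^P_{0,1}(x_0,x_1)=\widehat\varphi^{\mathrm{old}}_0(x_0)\,p^R_{1\mid0}(x_1\mid x_0)\,\varphi^{\mathrm{old}}_1(x_1).
\end{equation*}
Using $p_0=\varphi^{\mathrm{old}}_0\widehat\varphi^{\mathrm{old}}_0$, its transition kernel is the Doob $h$-transform
\begin{equation*}
p^P_{1\mid0}(x_1\mid x_0)=p^R_{1\mid0}(x_1\mid x_0)\,\varphi^{\mathrm{old}}_1(x_1)/\varphi^{\mathrm{old}}_0(x_0).
\end{equation*}

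\textbf{Step 2: two factorizations of the tilted bridge.} With $R$ as reference,
\begin{equation*}
p^{\Prwd}_{0,1}=\widehat\varphi^r_0(x_0)\,p^R_{1\mid0}(x_1\mid x_0)\,\varphi^r_1(x_1).
\end{equation*}
With $P$ as reference, the potentials $(\psi^r,\widehat\psi^r)$ of $\SB_P(p_0,\nur)$ give
\begin{equation*}
p^{\Prwd}_{0,1}=\widehat\psi^r_0(x_0)\,p^P_{1\mid0}(x_1\mid x_0)\,\psi^r_1(x_1)
=\frac{\widehat\psi^r_0(x_0)}{\varphi^{\mathrm{old}}_0(x_0)}\,p^R_{1\mid0}(x_1\mid x_0)\,\varphi^{\mathrm{old}}_1(x_1)\psi^r_1(x_1).
\end{equation*}
Both are factorizations of the same coupling against the same kernel $p^R_{1\mid0}$. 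I would therefore invoke uniqueness of the Schrödinger system, i.e. that the product form $a(x_0)b(x_1)$ relative to $p^R_{0,1}$ is unique up to a multiplicative constant. This yields
\begin{equation*}
\widehat\psi^r_0/\varphi^{\mathrm{old}}_0=c\,\widehat\varphi^r_0 \quad\text{and}\quad \varphi^{\mathrm{old}}_1\psi^r_1=c^{-1}\varphi^r_1 \quad\text{a.e.}
\end{equation*}

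\textbf{Step 3: propagate forward and take logs.} Apply Eq~\ref{eq:terminal-density-potential} with reference $P$:
\begin{equation*}
\widehat\psi^r_1(x_1)=\int p^P_{1\mid0}(x_1\mid x_0)\widehat\psi^r_0(x_0)\,\mathrm{d}x_0
=\varphi^{\mathrm{old}}_1(x_1)\int p^R_{1\mid0}(x_1\mid x_0)\frac{\widehat\psi^r_0(x_0)}{\varphi^{\mathrm{old}}_0(x_0)}\,\mathrm{d}x_0
=c\,\varphi^{\mathrm{old}}_1(x_1)\,\widehat\varphi^r_1(x_1).
\end{equation*}
Now substitute $\nur=p_1e^{r}/Z$ and $p_1=\varphi^{\mathrm{old}}_1\widehat\varphi^{\mathrm{old}}_1$. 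The factor $\varphi^{\mathrm{old}}_1$, which carries the dependence on the unknown $p_1$, cancels. This leaves
\begin{equation*}
\log\frac{\widehat\psi^r_1}{\nur}=\log\widehat\varphi^r_1-\log\widehat\varphi^{\mathrm{old}}_1-r+\log(cZ),
\end{equation*}
which is Eq~\ref{eq:relative-terminal-log-potential} with $C=\log(cZ)$.

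\textbf{Main obstacle.} The hard step is the uniqueness claim in Step 2, together with making sure every potential is positive and finite a.e. so that the divisions and logarithms are legitimate. For this I would rely on Assumption~\ref{assumptions}. The equivalence of $p_0\otimes p_1$ and $p^R_{0,1}$ with finite KL gives existence and a.e.-uniqueness of the product-form solution (as in \citet{lLeonard2014}). Boundedness of $r$ from above makes $\nur\otimes p_0$ inherit finite KL relative to $p^R_{0,1}$, so the tilted problem also has a unique solution with positive potentials. Finally, a.e. equality of $\widehat\psi^r_0/\varphi^{\mathrm{old}}_0$ and $c\,\widehat\varphi^r_0$ passes through the integral in Step 3, giving the identity a.e. on the support of $p_1$.
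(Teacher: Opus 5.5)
Your proposal is correct and takes essentially the paper's route. Both arguments write the single law $\Prwd$ in product form relative to $R$ and relative to $P$, then separate the endpoint factors up to a constant, which is justified by the equivalence of $p^R_{0,1}$ with a product measure. The only cosmetic difference is how $\widehat\psi^r_1$ is recovered: you forward-propagate the initial factor through the $h$-transformed kernel to get $\widehat\psi^r_1=c\,\varphi^{\mathrm{old}}_1\widehat\varphi^r_1$, whereas the paper takes the terminal factor $\psi^r_1=c\,\varphi^r_1/\varphi^{\mathrm{old}}_1$ and uses $\psi^r_1\widehat\psi^r_1=p_1^r$.
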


The explicit target density $p_1$ cancels from the terminal cost, while its influence remains in the old potential $\widehat\varphi_1^{\mathrm{old}}$. The proof is given in Appendix~\ref{app:proofs}. Then to construct the terminal cost $f^r_{\rm rel}$ for relative SOC problem Eq~\ref{eq:sb-terminal-cost-control} one needs only the reward $r$ and Schrodinger Potentials of absolute problems $\log\widehat\varphi^r_1(x)$ with $R$ as reference, i.e., $\log\widehat\varphi^r_1(x)$ or $\log\widehat\varphi^{\mathrm{old}}_1(x)$.

Under the assumption that reward $r$ and the log-potentials admit continuous differentiable versions on a region, where $p_1(x) >0$ the identity also given the terminal-cost gradient there:

\vspace{-3mm}
\begin{equation}
    \nabla f_{\mathrm{rel}}^r(x)
    =\nabla\log\widehat\varphi^r_1(x) - \nabla\log\widehat\varphi^{\mathrm{old}}_1(x) -\nabla r(x)
    =h^r(x)-\hold(x) -\nabla r(x),
    \label{eq:relative-terminal-cost-gradient}
\end{equation}
\vspace{-3mm}

where $\nabla\log\widehat\varphi^r_1$ is $h^r$, $\nabla\log\widehat\varphi^{\mathrm{old}}_1$ is $\hold$.  Section~\ref{sec:method} makes use of Eq~\ref{eq:relative-terminal-cost-gradient} and develops a procedure to learn the terminal corrector $\nabla f_{\mathrm{rel}}^r$. Then finally, we can construct the relative to $P$ SOC Problem Eq~\ref{eq:sb-terminal-cost-control} with optimal control $u_r^\star$:

\vspace{-1mm}
\begin{tcolorbox}[
    colback=gray!10,
    colframe=gray!60,
    boxrule=0.5pt,
    arc=3mm,
    left=3mm,
    right=3mm,
    top=2mm,
    bottom=2mm
]
\begin{proposition}[Relative Schrödinger Bridge Stochastic Optimal Control Problem]
\label{prop:relative-soc-problem}
Given the Assumptions of Def~\ref{def:tilted-sb}, Schrödinger Bridge $P=\SB_R(p_0, p_1)$ with optimal control $u^\star$ and corresponding drift $b_t$ define such a controlled by $u_t$ diffusion $P^{u_t}$:

\begin{equation}
    P^{u_t}:\quad \mathrm{d}X_t=[\underbrace{f_t(X_t) + \sigma_tu^\star_t(X_t)}_{b_t(X_t)} +\sigma_t u_t(X_t)]\,\mathrm{d}t
    +\sigma_t\,\mathrm{d}W_t,\qquad X_0\sim p_0,
    \label{eq:controlled-diffusion}
\end{equation}

where $b_t(X_t)$ is the resulting controlled drift of $P$. Then optimal control $u_r^\star$ provided as a solution to the following optimization problem:
\begin{equation}
    u_r^\star = \arg\min_u
    \mathbb{E}_{P^u}\!\left[
    \frac12\int_0^1\|u_t(X_t)\|^2\,\mathrm{d}t
    +\log\widehat\varphi^r_1(x)
    -\log\widehat\varphi^{\mathrm{old}}_1(x)-r(x)\right],
    \label{eq:relative-terminal-cost-control}
\end{equation}

would result in solution to the TiltedSBP, i.e., $P^{u_r^\star} = \SB_R(p_0,\nur)$.
\end{proposition}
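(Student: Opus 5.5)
The plan is to chain the three ingredients already available. These are the SB-to-SOC conversion Eq~\ref{eq:sb-terminal-cost-control}, applied with an arbitrary Markov diffusion reference; the change of reference, Prop.~\ref{prop:change-reference}; and the potential relation, Prop.~\ref{prop:relative-log-potentials}.

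\textbf{Step 1: identify $P$ as a reference diffusion.} First observe that $P=\SB_R(p_0,p_1)$ is itself a Markov diffusion, with drift $b_t=f_t+\sigma_t u^\star_t$, the same volatility $\sigma_t$, and initial law $p_0$. Controlled versions of $P$ are exactly the processes $P^{u}$ of Eq~\ref{eq:controlled-diffusion}. By Girsanov, for finite-energy $u$,
\begin{equation*}
\KL(P^u\Vert P)=\mathbb{E}_{P^u}\Big[\tfrac12\int_0^1\|u_t(X_t)\|^2\,\mathrm{d}t\Big].
\end{equation*}
Hence the SB problem with reference $P$ and marginals $(p_0,\nur)$ is precisely the problem of Def.~\ref{def:tilted-sb} with $f_t$ replaced by $b_t$.

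\textbf{Step 2: convert to an SOC problem relative to $P$.} Apply the SB-to-SOC reformulation Eq~\ref{eq:sb-terminal-cost-control} with $P$ in place of $R$. Its minimizer $u^\star_r$ yields $P^{u_r^\star}=\SB_P(p_0,\nur)$, provided the terminal cost is $\log(\widehat\psi^r_1/\nur)=f^r_{\rm rel}$, where $\widehat\psi^r_1$ is the terminal potential built from the transition kernels of $P$. To justify this, I would recall the underlying argument. With $\varphi_t(x)=\mathbb{E}_P[e^{-f^r_{\rm rel}(X_1)}\mid X_t=x]$, the SOC optimum has path law
\begin{equation*}
P^{u^\star_r}=P\,e^{-f^r_{\rm rel}(X_1)}/\varphi_0(X_0).
\end{equation*}
Its terminal marginal is $\varphi_1\widehat\psi_1=\nur$ by the definition Eq~\ref{eq:schrodinger-potentials} of the potentials. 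It has the form $P\cdot g(X_0)h(X_1)$ with correct marginals, which characterizes $\SB_P(p_0,\nur)$ (Léonard). The initial marginal stays $p_0$ because $\varphi_0\widehat\psi_0=p_0$.

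\textbf{Step 3: rewrite the cost and conclude.} Substitute Prop.~\ref{prop:relative-log-potentials}, $f^r_{\rm rel}=\log\widehat\varphi^r_1-\log\widehat\varphi^{\mathrm{old}}_1-r+C$. The additive constant $C$ does not change the argmin, so the objective becomes exactly Eq~\ref{eq:relative-terminal-cost-control}. Then Prop.~\ref{prop:change-reference} gives $\SB_P(p_0,\nur)=\SB_R(p_0,\nur)$, hence $P^{u^\star_r}=\SB_R(p_0,\nur)$.

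\textbf{Main obstacle.} The hard part is Step 2, which legitimizes the SOC reformulation when the reference is the pretrained bridge $P$ rather than the simple diffusion $R$. Two things must be checked. First, $b_t$ need not be globally Lipschitz, so well-posedness of $P^u$ and the Girsanov identity must be obtained from finite energy of $u^\star$ instead. Second, the potentials of $\SB_P(p_0,\nur)$ must exist and be integrable. For the latter I would use Assumption~\ref{assumptions}: $r$ bounded above and $\KL(p_0\otimes p_1\Vert p^R_{0,1})<\infty$ give $\KL(p_0\otimes\nur\Vert p^P_{0,1})<\infty$. By Léonard's theory this guarantees an $f$-$g$ factorization of the static bridge relative to $p^P_{0,1}$, which supplies the potentials $(\psi,\widehat\psi)$ needed for the verification argument.
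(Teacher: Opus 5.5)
Your proposal is correct and follows the paper's own proof. Up to the constant from Prop.~\ref{prop:relative-log-potentials}, the objective in Eq~\ref{eq:relative-terminal-cost-control} is the SOC problem relative to $P$ with terminal cost $\log(\widehat\psi^r_1/\nur)$; Eq~\ref{eq:sb-terminal-cost-control}, applied with reference $P$, shows its optimum is $\SB_P(p_0,\nur)$, and Prop.~\ref{prop:change-reference} identifies this with $\SB_R(p_0,\nur)$. Your Step~2 verification and your remarks on admissibility and on existence of the $P$-relative potentials just unpack what the paper compresses into ``over its admissible controls'' (one small slip there: $X_0\sim p_0$ holds because $\varphi_0(X_0)=\mathbb{E}_P[e^{-f^r_{\rm rel}(X_1)}\mid X_0]$ normalizes the tilt, not because of $\varphi_0\widehat\psi_0=p_0$, which is what your terminal-marginal computation actually uses).
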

\end{tcolorbox}
\vspace{-1mm}

This formulation requires the pretrained bridge $P$, its terminal potential $\log\widehat\varphi^{\mathrm{old}}_1$, and reward $r$, but importantly no further access to the base target $p_1$ through samples or density evaluations. The remaining unknown potential $\log\widehat\varphi^r_1$ is addressed in the next section.

\vspace{-3mm}
\section{Tilted Schrödinger Bridge Matching}
\label{sec:method}
\vspace{-3mm}
In this section, we introduce an algorithm for solving the TiltedSBP. Prop.~\ref{prop:relative-soc-problem} gives a feasible SOC formulation, but obtaining the optimal controller $u^{\*}$ requires the corrector $\log\widehat\varphi^r_1(x)$. Since the controller and corrector are interdependent, we propose an alternating optimization procedure: update the controller $u^{(k)}$ with fixed corrector $h^{(k)}$, then update the corrector $h^{(k+1)}$ with fixed controller $u^{(k)}$. For controller optimization, we use Controller Matching Eq~\ref{eq:controller-loss} and define its iterative fixed-corrector version as the \textbf{Controller Update}:

\begin{algorithm}[t]
\caption{Tilted Schrödinger Bridge Matching (\tsbm{})}
\label{alg:tsbm}
\begin{algorithmic}[1]
\Require frozen old drift $\Fref$, old corrector $\hold$, reference process $R$, reward $r$, stages $K$, number of controller updates $M_{\rm ctrl}$, number of corrector updates $M_{\rm corr}$
\State initialize $u^{(0)} \gets0$ and $h^{(0)}\gets\hold$
\For{$k=0,\ldots,K-1$}
    \State $u \gets \text{copy}(u^{(k)})$
    \For{$m=0,\ldots,M_{\rm ctrl}-1$}
      \State Sample \textit{paths} $(X_t)_{t \in [0, 1]}$ from $P^{\overline{u}}$ by the inference of SDE with drift $\Fref+\sigma u$;
      \State $\tilde{a}^{(k)}_1\gets-\nabla r(X_1)+h^{(k)}(X_1)-\hold(X_1)$;
      \State Propagate $\tilde{a}^{(k)}_t$ by solving backward adjoint lean ODE using Eq~\ref{eq:lean-adj};
      \State Make a gradient step on $\mathcal{L}_{\rm ctrl}$ w.r.t. $u$ parameters; 
   \EndFor
   \State Sample \textit{endpoints} $(X_0, X_1)$ from $P^{\overline{u}}$ by the inference of SDE with drift $\Fref+\sigma u$;
   \State $h \gets \text{copy}(h^{(k)})$
   \For{$m=0,\ldots,M_{\rm corr}-1$}
\State Make a gradient step on $\mathcal{L}_{\rm corr}$ w.r.t. $h$ parameters;
  \EndFor $u^{(k+1)} \gets u$, $h^{(k+1)}\gets h$
  \State 
\EndFor
\State \Return $u^{(K)}, h^{(K)}$
\end{algorithmic}
\end{algorithm}

\vspace{-3mm}
\begin{equation}
    u^{(k)} = \argmin_u \mathbb{E}_{P^{\overline{u}}} [\mathcal{L}_{\mathrm{ctrl}}(u, X, h^{(k)})]
    =: \argmin_u \mathbb{E}_{P^{\overline{u}}}\int_0^1
    \left\|u(X_t,t)+\sigma_t\tilde{a}^{(k)}_t(X)\right\|^2\mathrm{d}t,
    \label{eq:tsbm-controller-update}
\end{equation}
\vspace{-3mm}

\begin{equation}
\begin{aligned}
\tilde{a}_1(X)
&= \nabla_x f^r_{\rm rel}(X_1)
 = -\nabla_x r(X_1) + h^{(k)}(X_1) - h_{\rm old}(X_1), \\
-\frac{\mathrm{d}\tilde{a}^{(k)}_t(X)}{\mathrm{d}t}
&= \nabla_x \Fref_t(X_t)^\top \tilde{a}^{(k)}_t(X), \label{eq:lean-ode-tsbm}
\end{aligned}
\end{equation}
\vspace{-3mm}

where $\overline{u}=\text{stopgrad}(\overline{u})$ and for optimization of corrector we choose Corrector Matching \wasyparagraph~\ref{sec:background} and call its iterative version as \textbf{Corrector Update}:

\vspace{-3mm}
\begin{equation}
    h^{(k+1)} = \argmin_h \mathbb{E}_{P^{u^{(k)}}}[\mathcal{L}_{\mathrm{corr}}(h, X_0, X_1)] =: \argmin_h \mathbb{E}_{P^{u^{(k)}}}[\Vert h(X_1) - \nabla_{x_1}\log p^{R}(X_1|X_0) \Vert^2].
    \label{eq:tsbm-corrector-update}
\end{equation}
\vspace{-3mm}

Now both Controller and Corrector Matching updates do relax their interdependency into dependency on the previous iteration $(k)$ results and yield a fixed point alternative updates algorithm. We call this iterative procedure \textbf{\tsbm{}}, each iteration $k$ we call \textbf{stage}. Next we show that this algorithm indeed converges to the Tilted Schrodinger Bridge:

\begin{theorem}[TSBM convergence]
\label{thm:population}
Let $P=\SB_R(p_0,p_1)$. Under Assumptions~\ref{assumptions} and~\ref{ass:population-convergence}, the exact population updates in Eq~\ref{eq:tsbm-controller-update} and Eq~\ref{eq:tsbm-corrector-update}, initialized by $u^{(0)}=0$ and $h^{(0)}=\hold$, satisfy
\begin{equation}
    \KL(P^{u^*}\Vert P^{u^{(k)}})\longrightarrow 0,
    \qquad P^{u^*}=\SB_R(p_0,\nur).
\end{equation}
\end{theorem}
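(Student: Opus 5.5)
The plan is to recognize the alternating Controller/Corrector updates as the Iterative Proportional Fitting (IPF) / Sinkhorn scheme for $\SB_R(p_0,\nur)$, written in the potential parametrization used by \citet{liu2025asbs}. Then I will invoke the known monotone convergence of IPF in KL, which is presumably what Assumption~\ref{ass:population-convergence} encodes. First I identify each iterate explicitly. With $h^{(k)}=\nabla\log\widehat\varphi^{(k)}_1$, the exact Controller Update Eq~\ref{eq:tsbm-controller-update} solves the SOC problem of Prop.~\ref{prop:relative-soc-problem} with terminal cost $\log\widehat\varphi^{(k)}_1-\log\widehat\varphi^{\mathrm{old}}_1-r$. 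By Eq~\ref{eq:soc-optimal-law} applied with reference $P$, the endpoint law is
\[
p^{P^{u^{(k)}}}_{0,1}(x_0,x_1)\propto p^P_{0,1}(x_0,x_1)\,\frac{\widehat\varphi^{\mathrm{old}}_1(x_1)e^{r(x_1)}}{\widehat\varphi^{(k)}_1(x_1)}\,e^{V^{(k)}_0(x_0)} .
\]
Using $p^P_{0,1}=\varphi_0^{\mathrm{old}}\,p^R_{1|0}\,\widehat\varphi^{\mathrm{old}}_0\cdot\varphi^{\mathrm{old}}_1(x_1)/\varphi^{\mathrm{old}}_0(x_0)$ and $\varphi^{\mathrm{old}}_1\widehat\varphi^{\mathrm{old}}_1=p_1$, the $p_1$-independent part collapses to $p^R_{1|0}(x_1|x_0)\,p_1^r(x_1)/\widehat\varphi^{(k)}_1(x_1)$ times a function of $x_0$ that is fixed by the constraint $X_0\sim p_0$. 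Hence $P^{u^{(k)}}$ is the $R$-bridge mixture with static coupling $\pi^{(k)}(x_0,x_1)\propto p_0(x_0)\,p^R_{1|0}(x_1|x_0)\,\varphi^{(k)}_1(x_1)/\varphi^{(k)}_0(x_0)$, where $\varphi^{(k)}_1:=\nur/\widehat\varphi^{(k)}_1$ and $\varphi^{(k)}_0$ is its backward propagation under $R$. This is exactly the forward IPF half-step: it projects onto the set of path laws with $X_0\sim p_0$ of the form $\varphi_1(X_1)\,R$.

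Second, I show that the Corrector Update Eq~\ref{eq:tsbm-corrector-update} is the other half-step. Its population minimizer is $\mathbb{E}[\nabla_{x_1}\log p^R_{1|0}(X_1|X_0)\mid X_1]$ under $\pi^{(k)}$, which equals $\nabla\log\int p^R_{1|0}(x_1|x_0)\,\pi^{(k)}_0(x_0)/\varphi^{(k)}_0(x_0)\,\ldots$. This is the standard identity giving $\nabla\log\widehat\varphi^{(k+1)}_1$ with $\widehat\varphi^{(k+1)}_0=p_0/\varphi^{(k)}_0$ propagated forward, as in \citet{liu2025asbs}. So $(\varphi^{(k)},\widehat\varphi^{(k+1)})$ follows the Sinkhorn recursion for marginals $(p_0,\nur)$ and reference $p^R_{0,1}$. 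The initialization $h^{(0)}=\hold$ gives $\varphi^{(0)}_1=e^{r}\varphi^{\mathrm{old}}_1$ up to a constant, and $u^{(0)}=0$ gives $P^{u^{(0)}}=P$ consistently. Gradients determine potentials only up to constants, but these constants are absorbed by normalization.

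Third, I conclude convergence. Sinkhorn iterates $\pi^{(k)}$ satisfy $\KL(\pi^*\Vert\pi^{(k)})$ nonincreasing and tending to $0$, provided the initial potentials have finite entropy relative to the solution (Rüschendorf / Léonard / Nutz results). Assumption~\ref{assumptions} ($\KL(p_0\otimes p_1\Vert p^R_{0,1})<\infty$, $r$ bounded above, hence $\KL(p_0\otimes\nur\Vert p^R_{0,1})<\infty$) supplies this, and Prop.~\ref{prop:change-reference} identifies the limit $\pi^*$ as the static coupling of $\SB_R(p_0,\nur)$. Since both $P^{u^*}$ and $P^{u^{(k)}}$ are $R$-bridge mixtures of their endpoint couplings, the disintegration of KL gives $\KL(P^{u^*}\Vert P^{u^{(k)}})=\KL(\pi^*\Vert\pi^{(k)})\to0$.

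The main obstacle is the first step. It requires rigorously verifying that the terminal cost in Prop.~\ref{prop:relative-soc-problem}, built with the \emph{iterate} $h^{(k)}$ rather than the optimal $h^r$, produces exactly the IPF half-bridge relative to $R$. That means carefully tracking how $\varphi^{\mathrm{old}}$, $\widehat\varphi^{\mathrm{old}}$ and $p_1$ cancel, using Prop.~\ref{prop:relative-log-potentials}'s computation with $\widehat\varphi^r_1$ replaced by $\widehat\varphi^{(k)}_1$. It also means justifying integrability of $e^{-V_0}$ along the iterates, which I would take from Assumption~\ref{ass:population-convergence}.
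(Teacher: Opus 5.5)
Your architecture matches the paper's. Step 1 is its forward half-bridge proposition: the controller update with terminal cost $\log\widehat\varphi^{(k)}_1-\log\widehat\varphi^{\mathrm{old}}_1-r$ yields the $R$-bridge mixture with $\varphi_1=\nur/\widehat\varphi^{(k)}_1$. Step 2 is its corrector proposition: $h$ becomes $\nabla\log\widehat\varphi_1$ of the next backward half bridge. Your final disintegration is its endpoint-to-path lifting. (Minor: if $u^{(k)}$ is computed from $h^{(k)}$, then $P^{u^{(0)}}$ is already the first forward half bridge, not $P$; the paper shifts indices so that $u^{(k+1)}$ uses $h^{(k)}$ and $u^{(0)}=0$.) The cancellation you call the main obstacle is routine, and you have essentially done it. The genuine gap is Step 3.

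Finite entropy, i.e.\ $\KL(\pi^*\Vert\pi^{(0)})<\infty$, gives only the Pythagorean bookkeeping:
\begin{itemize}
\item $\KL(\pi^*\Vert\pi^{(k)})$ is nonincreasing;
\item the marginal errors $d_k=\KL(\nur\Vert p_1^{P^{u^{(k)}}})$ are summable;
\item at best, by R\"uschendorf's theorem, the couplings converge in total variation.
\end{itemize}
It does not show that the monotone limit of $\KL(\pi^*\Vert\pi^{(k)})$ is zero. The log-ratio of the couplings is a sum of potential differences, $(\alpha^{(k)}-\alpha^*)(x_0)+(\widetilde\beta^{(k+1)}-\widetilde\beta^*)(x_1)$ in the paper's notation. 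Passing from vanishing marginal errors to vanishing joint KL therefore needs integrability of these differences uniformly in $k$, which for unbounded costs means growth control of the potentials.

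There is also an initialization issue. Your recursion is warm-started at the pretrained potentials ($\varphi^{(0)}_1\propto e^r\varphi^{\mathrm{old}}_1$). A cold-start Sinkhorn theorem applies only after re-basing at the first backward half bridge, with cost $c=-\log p^R_{1\mid0}+\log\varphi_0^{\mathrm{old}}+\log\widehat\varphi_1^{\mathrm{old}}$. Its hypotheses must then be checked for this $c$, which involves the unknown old potentials.

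This is where the paper does its real work:
\begin{itemize}
\item it integrates the score bound of Assumption~\ref{ass:population-convergence} to get quadratic growth of $-\log p^R_{1\mid0}$;
\item it applies unbounded-cost Sinkhorn estimates to the old Schr\"odinger system to get quadratic growth of $\log\varphi_0^{\mathrm{old}}$ and $\log\widehat\varphi_1^{\mathrm{old}}$;
\item it deduces a $k$-uniform quadratic bound on $\alpha^{(k)}$ and $\alpha^*$, hence, via the sub-Gaussian moments, a uniform exponential moment;
\item it closes with $F_{k+1}\le S_k\le\int(\alpha^{(k)}-\alpha^*)(p_0^{Q^{v^{(k)}}}-p_0)\,\mathrm{d}x\le C_4(\sqrt{m_k}+\tfrac12 m_k)$, where $m_k\le d_k\to0$.
\end{itemize}
Your proposal never uses the moment and score-growth parts of Assumption~\ref{ass:population-convergence}. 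That indicates this step is missing, not merely abbreviated.
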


The proof follows the ASBS/IPF strategy \citep{liu2025asbs} in our relative setting and is given in Appendix~\ref{app:proofs}, with the pseudo-algorithm in Algorithm~\ref{alg:tsbm} and implementation details in Appendix~\ref{app:tsbm-design-choices}. In practice, we finetune a pretrained Schrödinger bridge $P$, typically represented by forward and backward drift networks learned iteratively \citep{shi2023dsbm,vargas2023dds,liu2025asbs,tamogashev2026data,kholkin2026ipmf}. The forward drift approximates $b$ and is used in the Controller Update (Eq.~\ref{eq:tsbm-controller-update}), while the backward drift approximates the pretrained corrector $h_{\rm old}$ \cite{liu2025asbs} and is used in both updates (Eqs.~\ref{eq:tsbm-controller-update} and \ref{eq:tsbm-corrector-update}). We use the same parameterization for \tsbm{}. Replay buffers, network architectures, and other details are deferred to Appendix~\ref{app:tsbm-design-choices}.

\vspace{-3mm}
\section{Related Work}
\label{sec:related-work}
\vspace{-3mm}

\paragraph{Schrödinger bridge solvers.}
Schrödinger Bridge solvers are mostly based on diffusion models \cite{song2021scorebased}. Some methods make use of Entropic Optimal Transport connection \cite{gushchin2024light, gushchin2023enot}, while most popular methods are based on iterative procedures, such as Iterative Proportional Fitting (IPF) \cite{debortoli2021, vargas2021maximum}, Iterative Markovian Fitting (IMF) \cite{shi2023dsbm, gushchin2024adversarial, ksenofontov2025categorical} or their hybrid \cite{kholkin2026ipmf}. The Adjoint Matching is utilized as a part of SB solvers \cite{liu2025asbs, havens2025adjointsampling}. In practice, the SB models are utilized for unpaired domain translation \cite{shi2023dsbm}, trajectory inference \cite{tong2024simulationfree, noble2026twisted}, sampling from Boltzman distributions \cite{liu2025asbs, guo2026dasbs, havens2025adjointsampling, tamogashev2026data}, Mutual Information Estimation \cite{kholkin2026infobridge, zabarianska2026discrete} or even data generation \cite{shin2026asbm}. However, there are no current methods that do support exact marginal reward tilting of the pretrained SB models.

\paragraph{Reward steering and finetuning.} Reward steering adapts pretrained generative models to objectives
beyond matching the training distribution, such as human preferences
in image generation \citep{black2024ddpo}, DNA and protein design
\citep{wang2025drakes}, and physical constraints
\citep{tauberschmidt2026physics}.
Methods include inference-time guidance, such as classifier
and classifier-free guidance \citep{dhariwal2021,ho2022}, diffusion
posterior sampling \citep{chung2023dps}, and particle-based methods
\citep{delmoral2006,wu2023tds}, which can increase sampling costs or
introduce artifacts under strong guidance. Finetuning alternatives
include reward optimization
\citep{black2024ddpo,clark2024draft}
and preference learning \citep{wallace2024diffusiondpo,kim2024pfm}. However, accounting for the pretrained distribution further allows for regularization and targets its exponential tilt rather than only maximizing reward \citep{domingoenrich2025,uehara2024entropy,zhao2025d1,liu2025flowgrpo}.
Stochastic optimal control methods emerged as alternative branch of methods allowing for  targeting an exponential tilt, while the methods include matching based methods \cite{domingoenrich2024socm, domingoenrich2025, bergmeister2026ram} or differentiation through trajectory based methods \cite{wang2025drakes, vargas2023dds}. However, these works do target mostly only noise-to-data generative models.

\begin{figure}[!t]
\centering

\captionsetup[subfigure]{font=small,labelfont=bf}
\scriptsize
\setlength{\tabcolsep}{0pt}

\vspace{-3mm}
\begin{tabular}{@{}r@{\hspace{3pt}}c@{\hspace{4pt}}c@{\hspace{4pt}}c@{}}

\textbf{Input} &
\includegraphics[
    width=0.295\linewidth,
    trim=192bp 274bp 884bp 0bp,
    clip
]{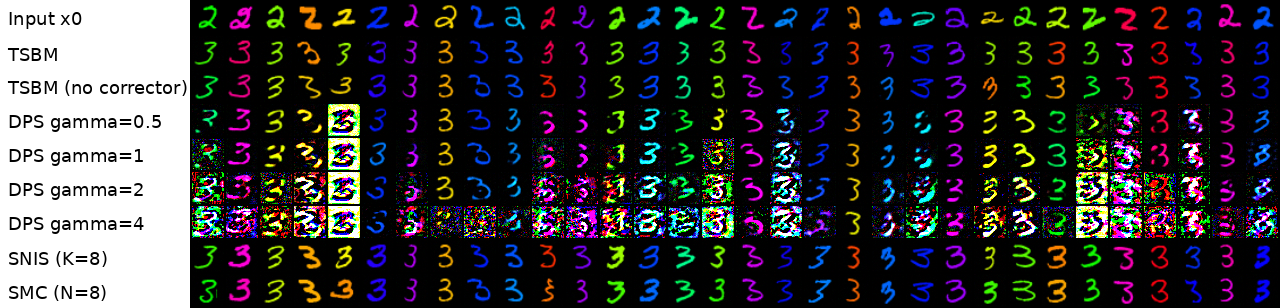} &
\includegraphics[
    width=0.295\linewidth,
    trim=192bp 886bp 884bp 0bp,
    clip
]{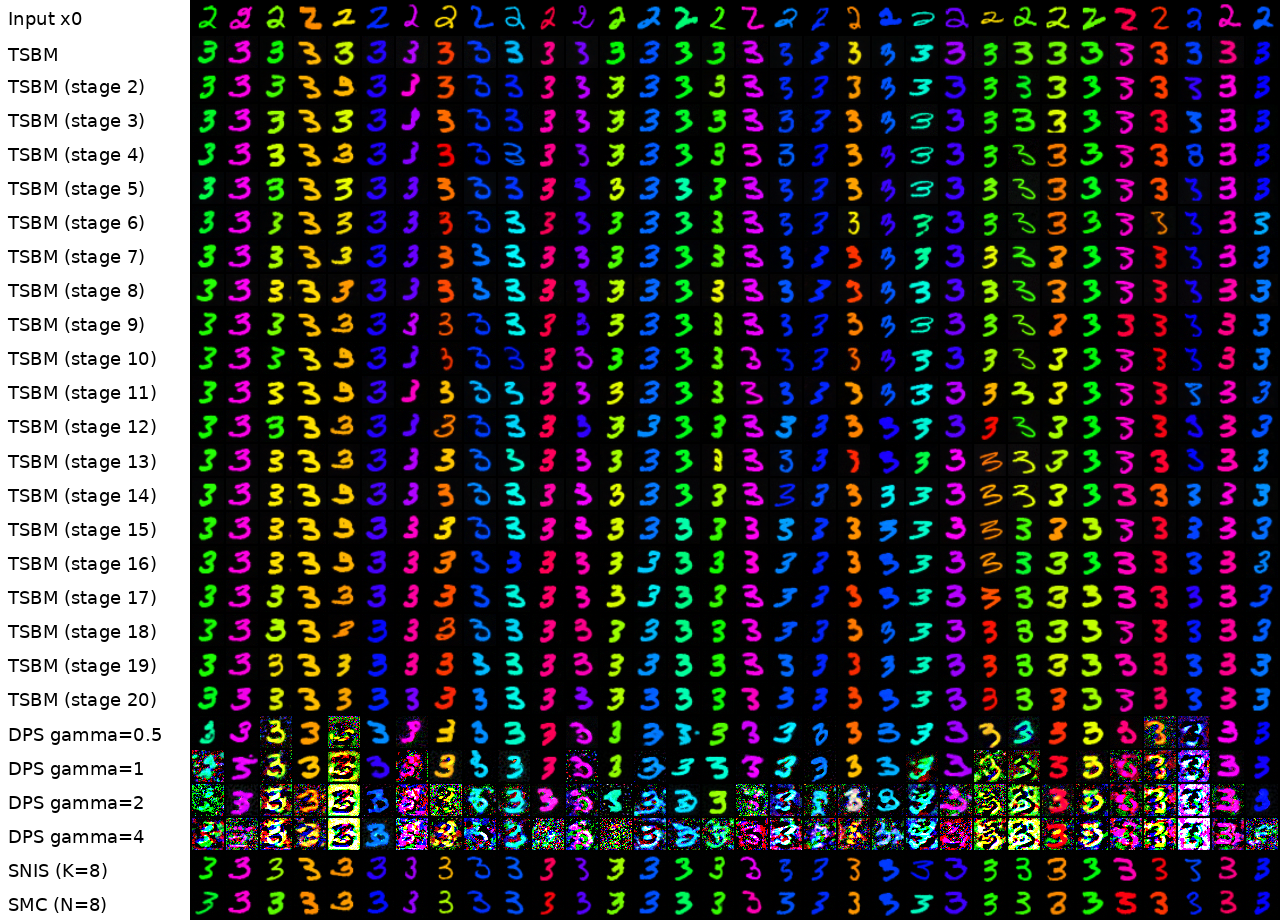} &
\includegraphics[
    width=0.295\linewidth,
    trim=192bp 274bp 884bp 0bp,
    clip
]{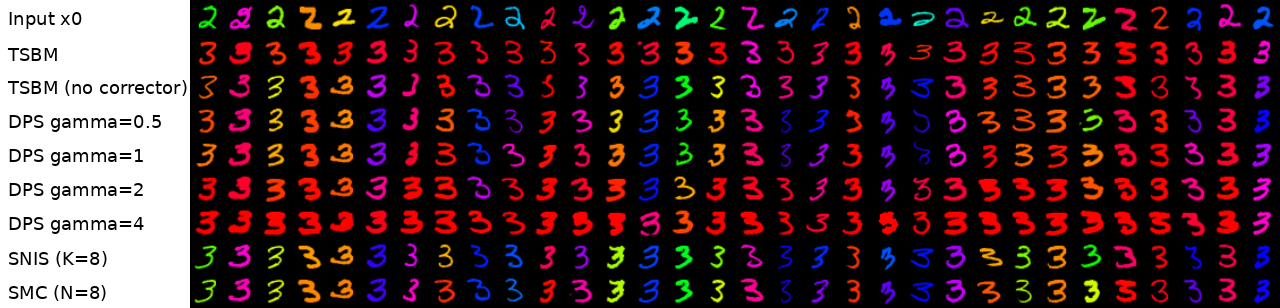}
\\[-2pt]

\textbf{DSBM} &
\includegraphics[
    width=0.295\linewidth,
    trim=192bp 274bp 884bp 34bp,
    clip
]{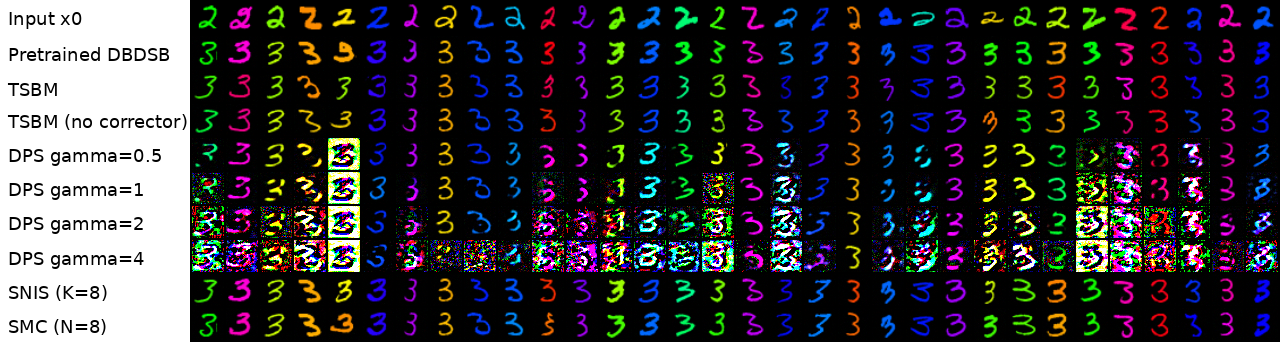} &
\includegraphics[
    width=0.295\linewidth,
    trim=192bp 886bp 884bp 34bp,
    clip
]{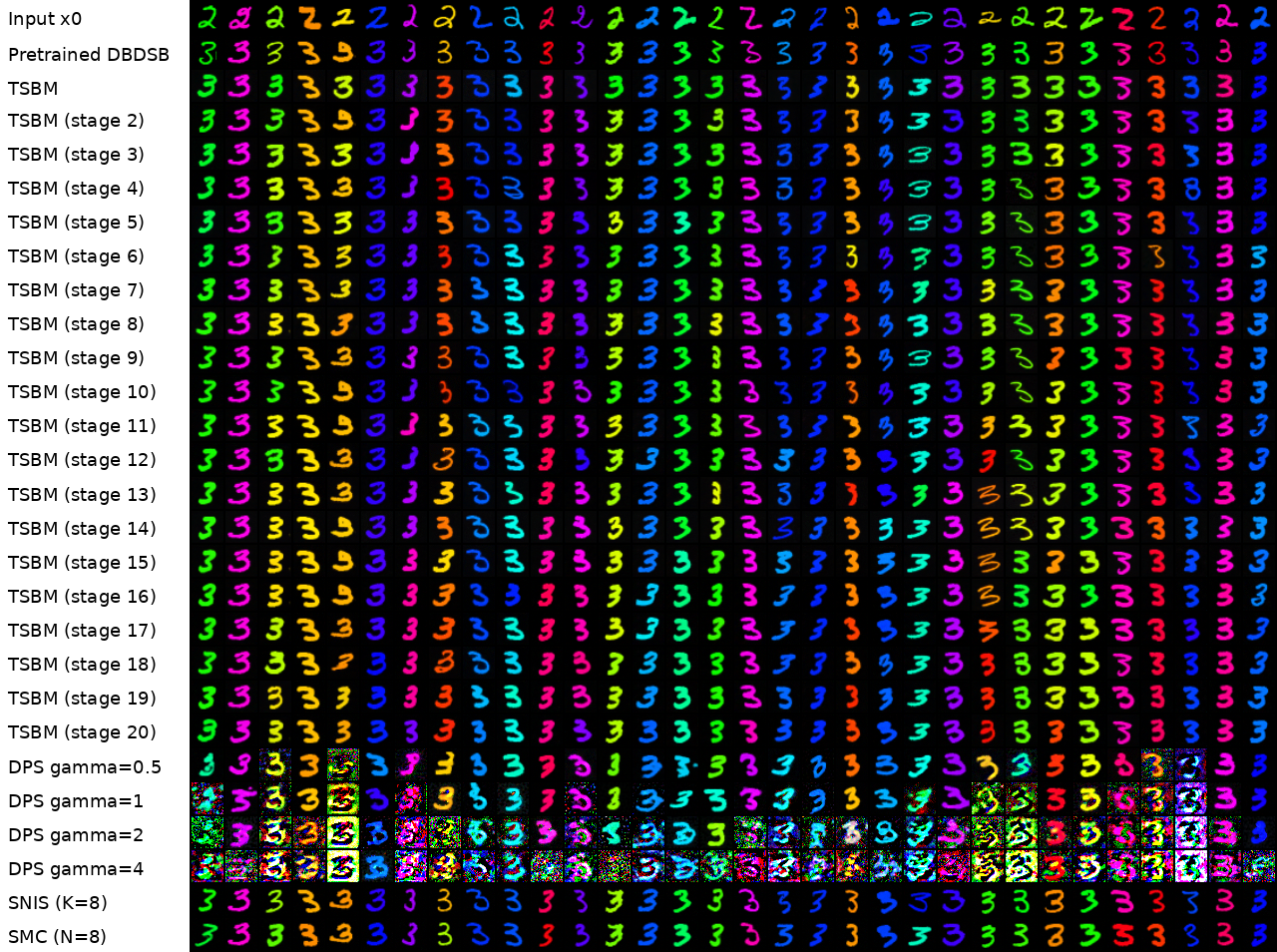} &
\includegraphics[
    width=0.295\linewidth,
    trim=192bp 546bp 884bp 34bp,
    clip
]{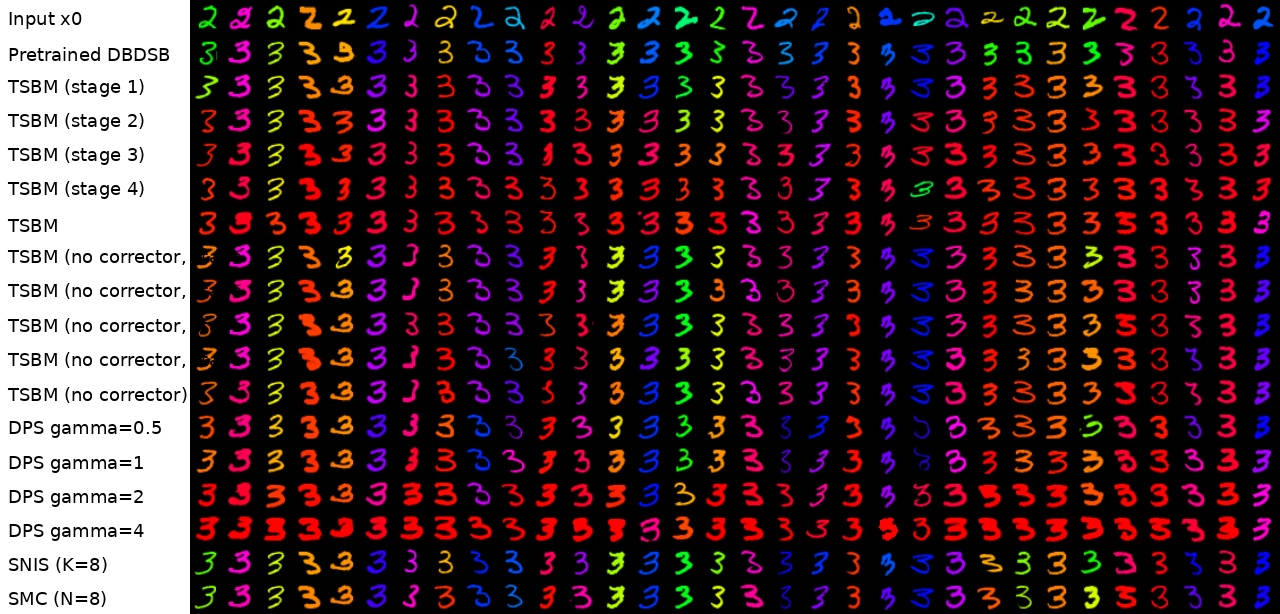}
\\[-2pt]

\textbf{TSBM (ours)} &
\includegraphics[
    width=0.295\linewidth,
    trim=192bp 240bp 884bp 34bp,
    clip
]{figures/colored_mnist_thin_target365_source.png} &
\includegraphics[
    width=0.295\linewidth,
    trim=192bp 716bp 884bp 170bp,
    clip
]{figures/colored_mnist_thick_target7_all_stages_source.png} &
\includegraphics[
    width=0.295\linewidth,
    trim=192bp 240bp 884bp 34bp,
    clip
]{figures/colored_mnist_red_chroma_target_source.png}
\\[-2pt]

\textbf{DPS 1} &
\includegraphics[
    width=0.295\linewidth,
    trim=192bp 138bp 884bp 136bp,
    clip
]{figures/colored_mnist_thin_target365_source.png} &
\includegraphics[
    width=0.295\linewidth,
    trim=192bp 138bp 884bp 748bp,
    clip
]{figures/colored_mnist_thick_target7_all_stages_source.png} &
\includegraphics[
    width=0.295\linewidth,
    trim=192bp 138bp 884bp 136bp,
    clip
]{figures/colored_mnist_red_chroma_target_source.png}
\\[-2pt]

\textbf{CondSMC} &
\includegraphics[
    width=0.295\linewidth,
    trim=192bp 2bp 884bp 272bp,
    clip
]{figures/colored_mnist_thin_target365_source.png} &
\includegraphics[
    width=0.295\linewidth,
    trim=192bp 2bp 884bp 884bp,
    clip
]{figures/colored_mnist_thick_target7_all_stages_source.png} &
\includegraphics[
    width=0.295\linewidth,
    trim=192bp 2bp 884bp 272bp,
    clip
]{figures/colored_mnist_red_chroma_target_source.png}
\\[-2pt]

&
\subcaptionbox{
    Thin stroke
    \label{fig:colored-mnist-thin-q10}
}[0.25\linewidth]{} &
\subcaptionbox{
    Thick stroke
    \label{fig:colored-mnist-thick-q90}
}[0.25\linewidth]{} &
\subcaptionbox{
    Red chroma
    \label{fig:colored-mnist-red-chroma}
}[0.25\linewidth]{}

\end{tabular}
\vspace{-3mm}
\caption{\small
\textbf{Qualitative results on Colored MNIST.}
Rows show the source input, pretrained DSBM, TSBM, DPS with
$\gamma=1$, and CondSMC with $N=8$ particles for the thin-stroke,
thick-stroke, and red-chroma targets.
Comparisons with additional baselines are in
Appendix~\ref{app:additional-qualitative-results}.
}
\vspace{-2mm}
\label{fig:colored-mnist-qualitative}
\end{figure}

\begin{figure}[!t]
\includegraphics[
    width=\linewidth
]{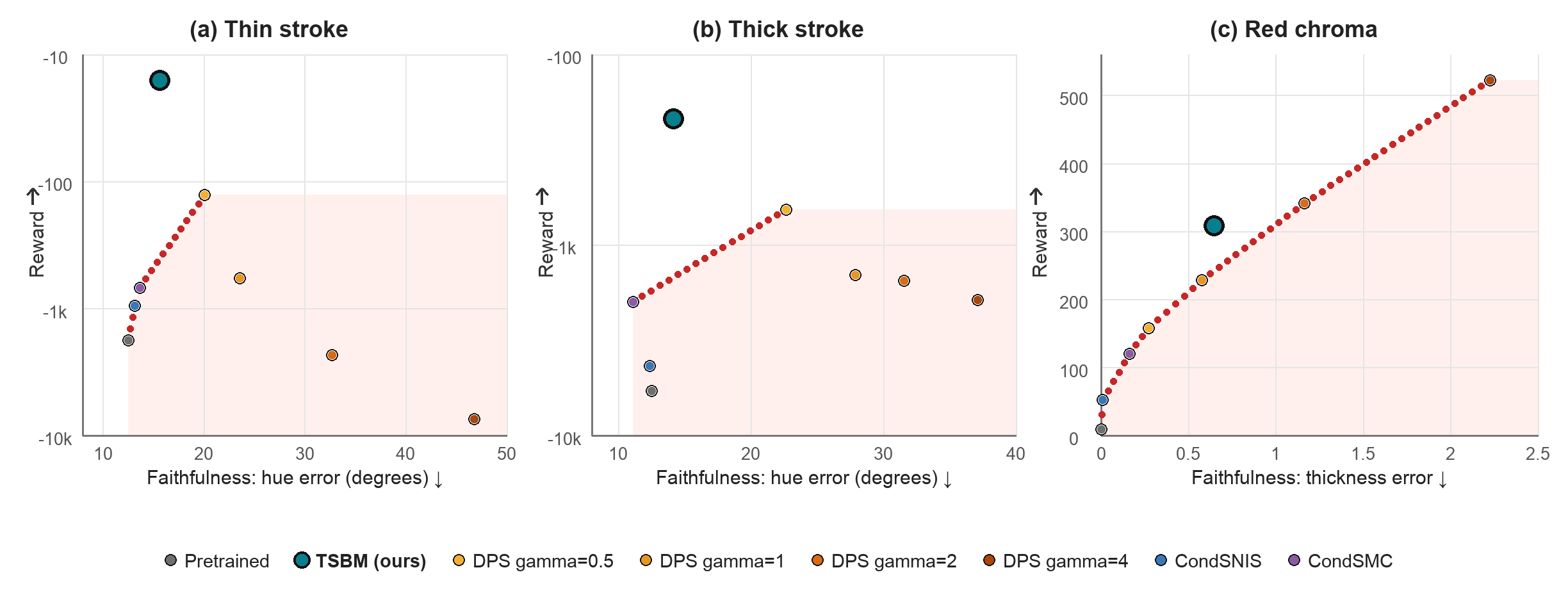}

\vspace{-3mm}
\caption{
\small
\textbf{Quantitative analysis on Colored MNIST.}
Reward $\uparrow$ is the mean unscaled terminal reward $r(x)$. The two stroke panels use logarithmic spacing by reward magnitude.
Faithfulness $\downarrow$ is foreground hue error relative to the input for thin and thick strokes, and mean thickness deviation from the input for red chroma.
The red dotted lines connect the non-TSBM Pareto points, while pale red shading marks values below them.
}
\vspace{-4mm}
\label{fig:colored-mnist-tradeoffs}

\end{figure}

\paragraph{Reward steering and diffusion bridges.} While reward steering is well studied for standard diffusion models,
extending it to diffusion bridges, including Schrödinger bridges,
requires accounting for source target dependence.
In particular, conditional tilting,
$q_{1|0}(x_1|x_0)\propto
p^P_{1|0}(x_1|x_0)\exp(r(x_1))$,
generally does not produce the desired terminal marginal
$q_1(x_1)\propto p^P_1(x_1)\exp(r(x_1))$
when the source marginal is fixed. For bridges trained on paired data \citep{zhou2024ddbm,liu2023i2sb},
CDDB applies inference-time data-consistency guidance with a
DPS-like variant \citep{chung2023cddb,chung2023dps}, \cite{he2026rne} shows the practical applicability of SMC-like  \cite{delmoral2006} population steering, but doesn't evaluate it,
while PalSB uses physics-informed finetuning through truncated
trajectory differentiation \citep{li2025palsb}. 
Schrödinger Bridges were also reward aligned to learn Boltzmann samplers
with evaluable target energies \citep{liu2025asbs,guo2026dasbs,havens2025adjointsampling},
a different setting from reward tilting of a pretrained
bridge whose terminal density is implicit. These works do not establish the joint guarantee considered in our work: 
a prescribed terminal \textit{marginal} reward tilt with preservation of the source marginal when finetuning a general pretrained Schrödinger Bridge model.

\vspace{-3mm}
\section{Experiments}
\label{sec:experiments}
\vspace{-3mm}

In this section we experimentally evaluate the \tsbm{}, as the Schrodinger Bridges reward finetuning method. In all the cases we do start from Schrodinger Bridge trained via DSBM method \cite{shi2023dsbm}. We test \tsbm{} on unpaired image translation problems: 1) Colored MNIST dataset image translation between digit "2" to digit "3" with reward tilt towards digit thickness and color; 2) Celeba dataset male to female image translation with reward tilts toward different image attributes: "natural toothy smile", "heavy makeup" and "elderly woman" with reward functions provided by ImageReward \cite{xu2023imagereward} with corresponding prompts. In addition during training the reward functions are scaled linearly by \textit{strength} parameter $\lambda$, i.e., $\lambda * r(x)$. Additional experimental details are presented in Appendix~\ref{app:exp-details} and more experiments are presented in Appendix~\ref{app:additional-experiments}.

\vspace{-2mm}
\paragraph{Competitors.} As was noted in the \wasyparagraph~\ref{sec:related-work} the reward finetuning of Schrodinger Bridge models is underexplored and has lack of established methods. However, there are approximate methods for \textit{tilting the conditional distribution}, such as CDDB \cite{chung2023cddb}, which we call just \textbf{DPS} in the experimental section and test with different \textit{strength} parameter $\gamma$. Additionally, we test the conditional population based inference time adaptation methods for \textit{tilting the conditional distribution}: Sequential Monte Carlo (\textbf{CondSMC}) \cite{delmoral2006} and Self Normalized Importance Sampling (\textbf{CondSNIS}). Competitor algorithms in particular are described in Appendix~\ref{app:competitors}.

\begingroup
\newcommand{\celebSmileGrid}{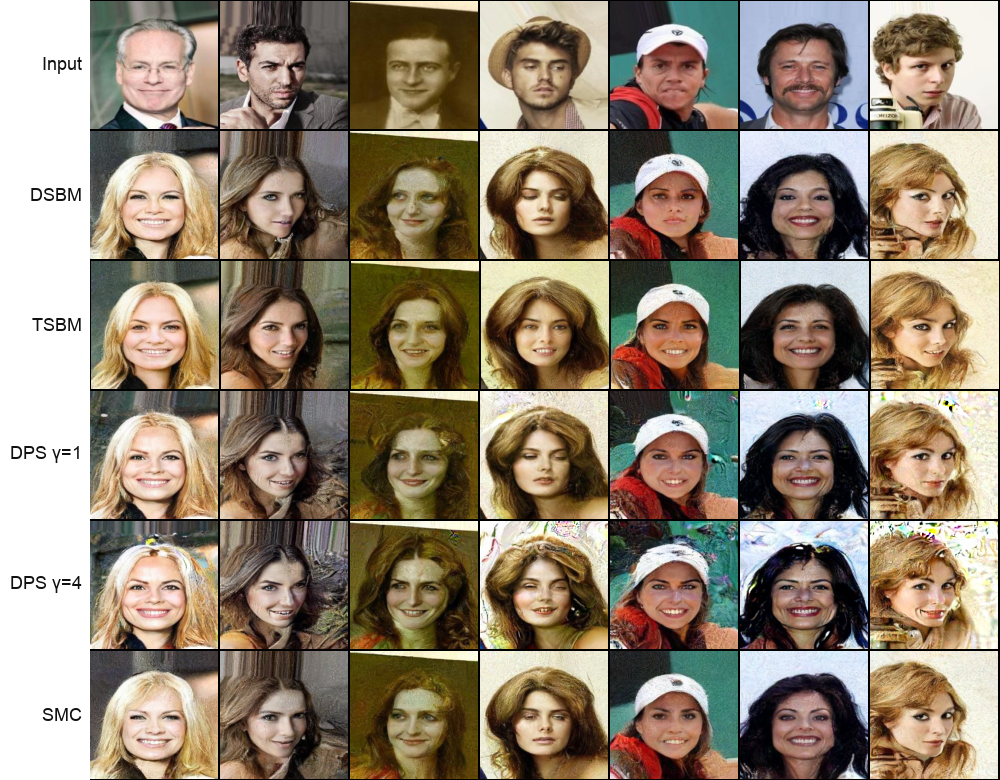}
\newcommand{\celebMakeupGrid}{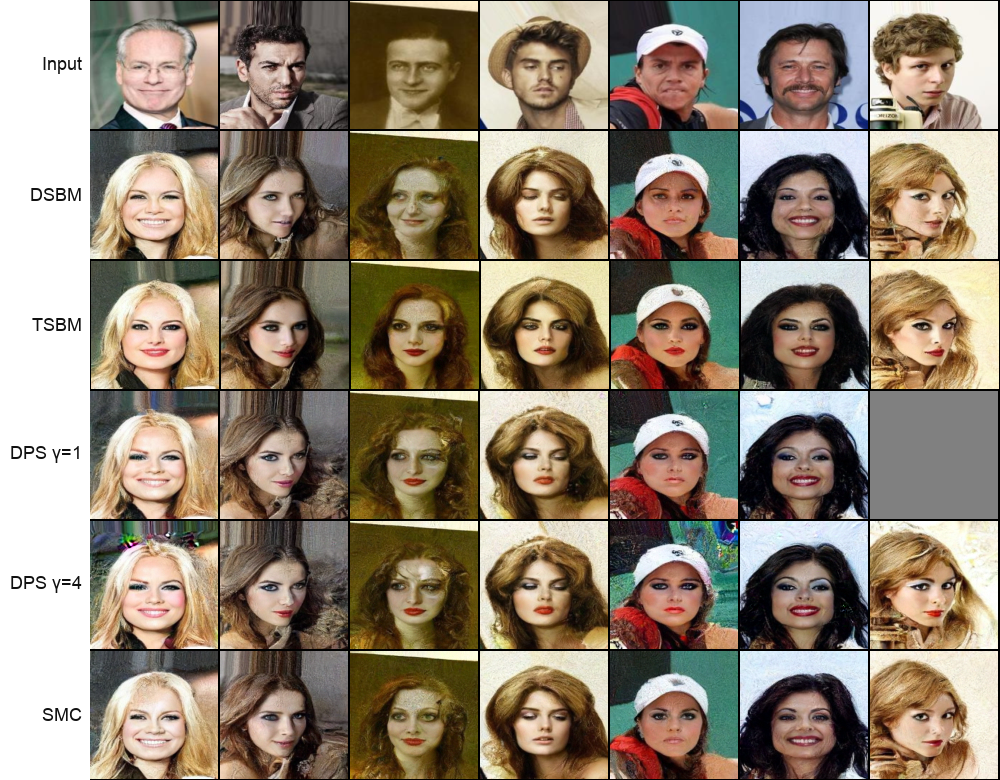}
\newcommand{\celebElderlyGrid}{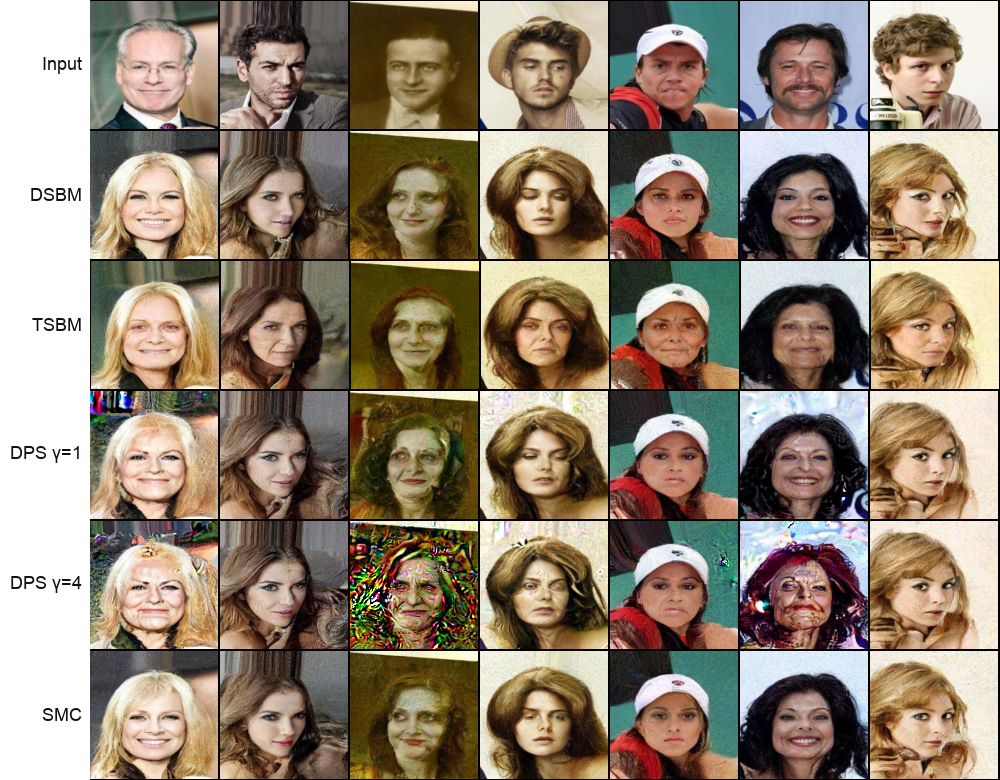}
\newcommand{\celebFace}[6][0]{\ifnum#1=1\relax
    \begin{tikzpicture}[baseline=(face.base)]
      \node[inner sep=0pt,outer sep=0pt] (face) {\includegraphics[
        width=0.071\linewidth,trim=#3bp #4bp #5bp #6bp,clip]{#2}};
      \draw[red,line width=0.8pt,overlay]
        ([xshift=0.4pt,yshift=0.4pt]face.south west) rectangle
        ([xshift=-0.4pt,yshift=-0.4pt]face.north east);
    \end{tikzpicture}\else
    \includegraphics[width=0.071\linewidth,trim=#3bp #4bp #5bp #6bp,clip]{#2}\fi
}
\newcommand{\celebMethodHead}[1]{\parbox[c]{0.071\linewidth}{\centering\scriptsize\bfseries #1}}
\newcommand{\celebSourceRow}[5]{\celebFace{\celebSmileGrid}{#1}{650}{#2}{0} &
  \celebFace{\celebSmileGrid}{#1}{520}{#2}{130} &
  \celebFace{\celebSmileGrid}{#1}{390}{#2}{260} &
  \celebFace{\celebSmileGrid}{#1}{260}{#2}{390} &
  \celebFace[#3]{\celebSmileGrid}{#1}{130}{#2}{520} &
  \celebFace{\celebSmileGrid}{#1}{0}{#2}{650} &
  \celebFace{\celebMakeupGrid}{#1}{390}{#2}{260} &
  \celebFace[#4]{\celebMakeupGrid}{#1}{260}{#2}{390} &
  \celebFace{\celebMakeupGrid}{#1}{130}{#2}{520} &
  \celebFace{\celebMakeupGrid}{#1}{0}{#2}{650} &
  \celebFace{\celebElderlyGrid}{#1}{390}{#2}{260} &
  \celebFace{\celebElderlyGrid}{#1}{260}{#2}{390} &
  \celebFace[#5]{\celebElderlyGrid}{#1}{130}{#2}{520} &
  \celebFace{\celebElderlyGrid}{#1}{0}{#2}{650}}

\begin{figure}[!t]\label{fig:celeba-main}
\vspace{-3mm}
\centering
\setlength{\tabcolsep}{0pt}
\renewcommand{\arraystretch}{0.96}
\begin{tabular}{@{}*{14}{c}@{}}
\multicolumn{1}{c}{\scriptsize\bfseries Input}
& \multicolumn{1}{c}{\scriptsize\bfseries DSBM}
& \multicolumn{4}{c}{\scriptsize\bfseries Natural Toothy Smile}
& \multicolumn{4}{c}{\scriptsize\bfseries Heavy Makeup}
& \multicolumn{4}{c}{\scriptsize\bfseries Elderly Woman} \\
\cmidrule(lr){3-6}\cmidrule(lr){7-10}\cmidrule(lr){11-14}
\celebMethodHead{} & \celebMethodHead{}
& \celebMethodHead{TSBM (ours)} & \celebMethodHead{DPS\\$\gamma=1$} & \celebMethodHead{DPS\\$\gamma=4$} & \celebMethodHead{\fontsize{6}{7}\selectfont CondSMC}
& \celebMethodHead{TSBM (ours)} & \celebMethodHead{DPS\\$\gamma=1$} & \celebMethodHead{DPS\\$\gamma=4$} & \celebMethodHead{\fontsize{6}{7}\selectfont CondSMC}
& \celebMethodHead{TSBM (ours)} & \celebMethodHead{DPS\\$\gamma=1$} & \celebMethodHead{DPS\\$\gamma=4$} & \celebMethodHead{\fontsize{6}{7}\selectfont CondSMC} \\
\noalign{\vskip 4pt}
\celebSourceRow{90}{780}{0}{0}{0} \\[-2pt]
\celebSourceRow{350}{520}{0}{0}{1} \\[-2pt]
\celebSourceRow{480}{390}{1}{0}{0} \\[-2pt]
\celebSourceRow{610}{260}{1}{0}{0} \\[-2pt]
\celebSourceRow{870}{0}{1}{1}{0}
\end{tabular}
\vspace{-2mm}

\caption{
\small
\textbf{Qualitative results for CelebA experiment.} Each row uses the same source across the pretrained DSBM and three CelebA prompt groups. Within each group, we compare TSBM, DPS with $\gamma\in\{1,4\}$, and CondSMC. Red borders highlight selected outputs with visible artifacts.
}
\vspace{-5mm}
\label{fig:celeba-imagereward-grouped}
\end{figure}
\endgroup

\vspace{-2mm}
\subsection{Colored MNIST unpaired translation attributes manipulation}
\vspace{-2mm}

\paragraph{Setup.}  We test the \tsbm{} on images, starting from Colored MNIST digit "2" to digit "3" unpaired domain translation. We take the Colored MNIST translation setup from \cite{gushchin2024adversarial}. DSBM pretrain si trained by $20$ stages with $\sigma_t=1$ SB volatility coefficient, while \tsbm{} is trained for $5$ stages with strength $\lambda=1000$. The reward functions are: 1) "red chroma" which favors the red rgb channel vs others 2) "thin stroke" and "thick stroke", which do calculate the approximate thickness of a digit by computing the ratio between strokes area and foreground perimeter to then penalize the thickness different from $3.65$ and $7$ for "thin stroke" and "thick stroke" correspondingly. See Appendix~\ref{app:reward-functions} for more information of reward functions, Appendix~\ref{app:additional-experiments} for ablation studies on number of TSBM stages and corrector necessity and Appendix~\ref{app:additional-qualitative-results} for qualitative results.

\vspace{-2mm}
\paragraph{Results.} Figure~\ref{fig:colored-mnist-qualitative} compares outputs from TSBM
and the baselines with the source inputs and pretrained DSBM. TSBM produces thinner, thicker, or redder digits as specified
by the reward function, while largely preserving attributes unrelated
to the reward: thin- and thick-stroke outputs retain the source color,
whereas red-chroma outputs retain approximately the source thickness.
The quantitative results support this balance between reward
optimization and faithfulness to the source. In comparison, DPS
introduces visible artifacts, while CondSMC makes limited changes
to the targeted attributes. Figure~\ref{fig:colored-mnist-tradeoffs} reports quantitative results, i.e., \textit{reward} and \textit{faithfulness}, and one can see that \textbf{TSBM improves upon the Pareto frontier}. These results demonstrate that
\textbf{TSBM} extends to image translation and achieves the most
favorable \textbf{trade-off between reward optimization and
source faithfulness} among the evaluated methods.

\vspace{-2mm}
\subsection{CelebA unpaired translation attributes manipulation}
\vspace{-2mm}

\paragraph{Setup.} Next, we test \tsbm{} on unpaired male-to-female CelebA translation at $128\times128$ resolution. Starting from a pretrain DSBM trained for $20$ stages with $\sigma_t=1$, we train a \tsbm{} for $5$ stages. We use a frozen ImageReward model~\citep{xu2023imagereward}, which is normalized w.r.t. our CelebA data, to define our \textit{train reward} for optimization and baselines sample generation with the corresponding prompts: \emph{Natural Toothy Smile}, \emph{Heavy Makeup}, and \emph{Elderly Woman}. While PickScore~\cite{kirstain2023pickapic} with the same prompts is used for evaluation for exposing the potential \textit{reward hacking}. The reward function details can be seen at Appendix~\ref{app:reward-celeba}. In addition we report CLIP-IQA (naturalness) metric for assessment of resulting image quality. All the methods use reward strength $\lambda=30$ and 100 NFE for evaluation. See Appendix~\ref{app:additional-qualitative-results} for additional qualitative results.

\vspace{-2mm}
\paragraph{Results.} Figure~\ref{fig:celeba-imagereward-grouped} shows the samples on the same five sources for each prompt. TSBM follows the requested changes in smile, makeup, or apparent age, while other methods may struggle. Stronger DPS guidance shows itself also succesfull and slighly weaker than \tsbm{}, but introduces visible artifacts in some examples, while CondSMC produces weaker tilt. Table~\ref{tab:celeba-imagereward-grouped} quantifies the reward-faithfulness trade-off: TSBM improves ImageReward over the pretrained DSBM for all three train reward, but not as strongly as DPS.
However, TSBM obtains the \textbf{highest eval PickScore reward} in each comparison, which suggests that \textbf{TSBM results generalize well}. While DPS huge gains in train ImageReward reward does not result in visual artifacts and modest eval reward growth, which suggests that DPS is doing rather \textit{reward hacking}. TSBM also has the highest CLIP-IQA for the smile and elderly prompts, while the heavy-makeup result is less favorable and on contrary lowers CLIP-IQA, which may indicate that heavy make up tilting is not doesn't improve CLIP-IQA in general. In addition, \tsbm{} is faster than baselines on inference, see Appendix~\ref{app:celeba-inference-cost}. Thus, \textbf{TSBM improves prompt alignment, which generalizes well and keeps images visually high-fidelity}.

\begin{table}[!t]
\centering
\vspace{-2mm}
\caption{\textbf{Quantitative results for CelebA experiment}. Means are over 1024 fixed inputs per prompt, with TSBM evaluated after stage 5. ImageReward and PickScore use translation, cropping, and horizontal-flip augmentation; CLIP-IQA measures naturalness. All metrics are higher-is-better; bold marks the largest mean. ($^\ast$) indicates the appearance of visual artifacts.}
\label{tab:celeba-imagereward-grouped}

\scriptsize
\setlength{\tabcolsep}{2.5pt}
\renewcommand{\arraystretch}{0.94}

\resizebox{\textwidth}{!}{\begin{tabular}{@{}l*{3}{rrr}@{}}
\toprule
& \multicolumn{3}{c}{\textbf{Natural Toothy Smile}}
& \multicolumn{3}{c}{\textbf{Heavy Makeup}}
& \multicolumn{3}{c}{\textbf{Elderly Woman}} \\
\cmidrule(lr){2-4}
\cmidrule(lr){5-7}
\cmidrule(lr){8-10}

Method
& \shortstack{ImageReward\\(train)$\uparrow$}
& \shortstack{PickScore\\(eval)$\uparrow$}
& \shortstack{CLIP-IQA\\$\uparrow$}
& \shortstack{ImageReward\\(train)$\uparrow$}
& \shortstack{PickScore\\(eval)$\uparrow$}
& \shortstack{CLIP-IQA\\$\uparrow$}
& \shortstack{ImageReward\\(train)$\uparrow$}
& \shortstack{PickScore\\(eval)$\uparrow$}
& \shortstack{CLIP-IQA\\$\uparrow$} \\
\midrule

Pretrained
& -1.2348 & 17.6109 & 0.5950
& -1.4054 & 17.3777 & \textbf{0.5950}
& -1.4960 & 18.1592 & 0.5950 \\
\midrule

TSBM (ours)
& 0.1398 & \textbf{18.2345} & \textbf{0.6595}
& 0.9146 & \textbf{18.0390} & 0.3508
& -0.0204 & \textbf{18.8200} & \textbf{0.6340} \\
\midrule

DPS$^\ast$ $\gamma=1$
& 0.0622 & 17.7347 & 0.6192
& 0.8576 & 17.5803 & 0.3789
& -0.7328 & 18.2559 & 0.5623 \\

DPS$^\ast$ $\gamma=2$
& 0.5993 & 17.6831 & 0.6312
& 1.2507 & 17.6124 & 0.3401
& -0.2527 & 18.3329 & 0.5218 \\

DPS$^\ast$ $\gamma=4$
& \textbf{0.9783} & 17.4688 & 0.6164
& \textbf{1.5151} & 17.6236 & 0.3133
& \textbf{0.2239} & 18.3346 & 0.4618 \\

CondSMC
& -0.5728 & 17.8375 & 0.6107
& -0.1525 & 17.5395 & 0.4805
& -1.2800 & 18.1762 & 0.5927 \\
\bottomrule
\end{tabular}}
\vspace{-4mm}
\end{table}

\vspace{-2mm}
 
\vspace{-2mm}
\section{Discussion}
\label{sec:limitations}
\vspace{-2mm}

\paragraph{Potential impact.} For the problem of reward finetuning of pretrained Schrödinger Bridges we introduce \tsbm{}, the first practical algorithm for this problem, which alternates controller and
corrector matching. On Colored MNIST and CelebA, \tsbm{} steers attributes
in unpaired image translation while largely preserving source content.
These results suggest that pretrained bridges can be adapted to new
preferences without training a new bridge from scratch.

\vspace{-2mm}
\paragraph{Limitations and future work.} We assume exact optimization and validity of all theoretical assumptions, which may fail in practice. The method also requires an accessible endpoint transition score and differentiable reward, motivating extensions to non-differentiable rewards \cite{bergmeister2026ram}. Possible extensions include adapting unpaired scientific translation models to physical constraints \cite{li2025palsb}, image aesthetics improvement \cite{domingoenrich2025} or counterfactual explanations \cite{jeanneret2022diffusion}. Further methodological research includes other training objectives, such as other adjoint algorithms \cite{domingoenrich2024socm, vargas2023dds} or log variance losses \cite{tamogashev2026data}. Training \tsbm{} requires repeated bridge simulations and adjoint ODE solves, making finetuning computationally demanding, while distillation-based variant \citep{gushchin2025inverse} may offer a way to reduce this cost. Recent work on discrete state space reward optimization and adjoint matching \citep{wang2025drakes,so2026dam,guo2026dasbs} also suggests extending \tsbm{} to discrete state spaces.

\bibliography{iclr2027_conference}
\bibliographystyle{iclr2027_conference}

\clearpage
\appendix
\addtocontents{toc}{\protect\setcounter{tocdepth}{2}}
\renewcommand{\contentsname}{Appendix Contents}
\tableofcontents

\section{Proofs}
\label{app:proofs}

\subsection{Proof of Proposition~\ref{prop:change-reference}}
\label{app:change-reference}

\begin{proof}
Write $(\varphi_t^{\mathrm{old}},\widehat\varphi_t^{\mathrm{old}})$ for the potentials of $P$ relative to $R$ in Eq~\ref{eq:schrodinger-potentials}. Under Assumption~\ref{assumptions}, \citet[Proposition~2.3]{lLeonard2014}, together with the endpoint factorization and integrability results of \citet[Theorem~2.1 and Lemma~2.21]{nutz2022entropic}, gives
\[
    \frac{\mathrm{d}P}{\mathrm{d}R}
    =\frac{\varphi_1^{\mathrm{old}}(X_1)}{\varphi_0^{\mathrm{old}}(X_0)}>0,
\qquad
    \int |\log\varphi_i^{\mathrm{old}}(x)|p_i(x)\,\mathrm{d}x<\infty,\quad i\in\{0,1\}.
\]
Here $\mathrm{d}P/\mathrm{d}R$ denotes the path-law density, and Eq~\ref{eq:schrodinger-potentials} gives
\[
    \varphi_0^{\mathrm{old}}\widehat\varphi_0^{\mathrm{old}}=p_0
    \quad\Longrightarrow\quad
    \frac{\widehat\varphi_0^{\mathrm{old}}}{p_0}
    =\frac{1}{\varphi_0^{\mathrm{old}}}.
\]
Since $r$ is bounded above, for some finite $M$,
\begin{align*}
    0<w(x):=\frac{p_1^r(x)}{p_1(x)}&\leq M,\\
    \int|\log\varphi_1^{\mathrm{old}}(x)|p_1^r(x)\,\mathrm{d}x
    &\leq M\int|\log\varphi_1^{\mathrm{old}}(x)|p_1(x)\,\mathrm{d}x<\infty.
\end{align*}
The boundedness of $w$ and $w|\log w|$, together with Assumption~\ref{assumptions}, gives
\begin{align*}
    \KL(p_0\otimes p_1^r\Vert p_{0,1}^R)
    &=\iint w(x_1)p_0(x_0)p_1(x_1)
      \log\frac{p_0(x_0)p_1(x_1)}{p_{0,1}^R(x_0,x_1)}\,\mathrm{d}x_0\mathrm{d}x_1\\
    &\quad+\int w(x)\log w(x)p_1(x)\,\mathrm{d}x<\infty.
\end{align*}
Applying \citet[Theorem~2.1]{nutz2022entropic} and \citet[Proposition~2.3]{lLeonard2014} to $(p_0,p_1^r)$ yields the unique tilted bridge $\Prwd$, with
\[
    \KL(\Prwd\Vert R)<\infty,
    \qquad
    \frac{\mathrm{d}\Prwd}{\mathrm{d}R}
    =\frac{\widehat\varphi_0^r(X_0)}{p_0(X_0)}\varphi_1^r(X_1)>0,
\]
where $(\varphi_t^r,\widehat\varphi_t^r)$ are its potentials relative to $R$.

For any path law $Q$ with marginals $p_0,p_1^r$ that is absolutely continuous with respect to $R$, the chain rule gives
\begin{equation}
\begin{aligned}
    \KL(Q\Vert P)
    &=\KL(Q\Vert R)-\mathbb{E}_Q\log\frac{\mathrm{d}P}{\mathrm{d}R}\\
    &=\KL(Q\Vert R)+\int\log\varphi_0^{\mathrm{old}}(x)p_0(x)\,\mathrm{d}x\\
    &\quad-\int\log\varphi_1^{\mathrm{old}}(x)p_1^r(x)\,\mathrm{d}x\\
    &=\KL(Q\Vert R)+C(p_0,p_1,r),
\end{aligned}
\label{eq:reference-kl-shift}
\end{equation}

where $C(p_0, p_1, r)$ is constant, which does not depend on $Q$. Therefore both Schrödinger Bridge Problmes do have the same unique minimizer $\Prwd$:

\begin{equation}
    \SB_P(p_0,\nur)=\SB_R(p_0,\nur)=\Prwd.
\end{equation}

\end{proof}

An analogous change-of-reference argument appears in \citet[equation~(2.6)]{lLeonard2014}: reweighting the reference by endpoint factors shifts the KL objective by terms that are constant when the marginals are fixed.

\subsection{Proof of Proposition~\ref{prop:relative-log-potentials}}
\label{app:relative-potentials}

\begin{proof}
Let $(\varphi_t^r,\widehat\varphi_t^r)$ and $(\psi_t^r,\widehat\psi_t^r)$ be the potentials of $\Prwd$ relative to $R$ and $P$, respectively. The Eq~\ref{eq:schrodinger-potentials} and Proposition~\ref{prop:change-reference} give:
\begin{align*}
    \frac{\mathrm{d}P}{\mathrm{d}R}
    &=\frac{\widehat\varphi_0^{\mathrm{old}}(X_0)}{p_0(X_0)}
      \varphi_1^{\mathrm{old}}(X_1),
    &\frac{\mathrm{d}\Prwd}{\mathrm{d}R}
    &=\frac{\widehat\varphi_0^r(X_0)}{p_0(X_0)}\varphi_1^r(X_1),\\
    \frac{\mathrm{d}\Prwd}{\mathrm{d}P}
    &=\frac{\widehat\varphi_0^r(X_0)}{\widehat\varphi_0^{\mathrm{old}}(X_0)}
      \frac{\varphi_1^r(X_1)}{\varphi_1^{\mathrm{old}}(X_1)}
    =\frac{\widehat\psi_0^r(X_0)}{p_0(X_0)}\psi_1^r(X_1).
\end{align*}
Since $p_{0,1}^P$ and $p_0\otimes p_1$ have the same null sets, separation of the endpoint factors implies, for some constant $c>0$,
\[
    \psi_1^r(x)=c\,\frac{\varphi_1^r(x)}{\varphi_1^{\mathrm{old}}(x)}
    \qquad p_1^r\text{-almost everywhere}.
\]
Using the terminal marginal identities,
\[
    \psi_1^r\widehat\psi_1^r=p_1^r,
    \qquad \varphi_1^r\widehat\varphi_1^r=p_1^r,
    \qquad \varphi_1^{\mathrm{old}}\widehat\varphi_1^{\mathrm{old}}=p_1,
\]
we obtain
\begin{align*}
    f_{\mathrm{rel}}^r
    &=-\log\psi_1^r
      =-\log\varphi_1^r+\log\varphi_1^{\mathrm{old}}-\log c\\
    &=\log\widehat\varphi_1^r-\log\widehat\varphi_1^{\mathrm{old}}
      +\log\frac{p_1}{p_1^r}-\log c\\
    &=\log\widehat\varphi_1^r-\log\widehat\varphi_1^{\mathrm{old}}-r+C,
\end{align*}
where the last equality uses $p_1^r\propto p_1e^r$ and $C$ absorbs the target and potential normalizations. This proves Eq~\ref{eq:relative-terminal-log-potential} $p_1^r$-almost everywhere. Under the stated smoothness conditions, continuity extends the identity throughout the open region and differentiation gives Eq~\ref{eq:relative-terminal-cost-gradient}.
\end{proof}

\subsection{Proof of Proposition~\ref{prop:relative-soc-problem}}
\label{app:relative-soc}

\begin{proof}
With the terminal terms evaluated at $X_1$, Proposition~\ref{prop:relative-log-potentials} shows that the objective in Eq~\ref{eq:relative-terminal-cost-control} differs by a control-independent constant from
\begin{equation}
    \mathbb{E}_{P^u}\!\left[
       \frac12\int_0^1\|u_t(X_t)\|^2\,\mathrm{d}t
       +\log\frac{\widehat\psi_1^r(X_1)}{p_1^r(X_1)}\right].
\label{eq:relative-soc-equivalent-objective}
\end{equation}
Applying Eq~\ref{eq:sb-terminal-cost-control} with reference $P$ and target $p_1^r$, over its admissible controls, then Proposition~\ref{prop:change-reference}, gives
\begin{equation}
    P^{u_r^\star}=\SB_P(p_0,p_1^r)=\SB_R(p_0,p_1^r).
\label{eq:relative-soc-optimal-bridge}
\end{equation}
\end{proof}

\subsection{Assumptions and controlled diffusions for convergence}
\label{app:half-bridge-setup}

We introduce the forward and backward controlled diffusions used in the proof of Theorem~\ref{thm:population}:
\begin{equation}
\begin{aligned}
    P^u:\quad \mathrm{d}X_t
    &=[\Fref_t(X_t)+\sigma_tu_t(X_t)]\,\mathrm{d}t
      +\sigma_t\,\mathrm{d}W_t, &&X_0\sim p_0,\\
    Q^v:\quad \mathrm{d}Y_s
    &=[-f_{1-s}(Y_s)+\sigma_{1-s}v_s(Y_s)]\,\mathrm{d}s
      +\sigma_{1-s}\,\mathrm{d}W_s, &&Y_0\sim p_1^r.
\end{aligned}
\label{eq:controlled-half-bridges}
\end{equation}
Here $s=1-t$ and $Y_s=X_{1-s}$. All density subscripts and KL divergences use forward time $t$, so $p_0^{P^u}=p_0$ and $p_1^{Q^v}=p_1^r$. All optimization variables below are controls.

We retain the reference conditions in the notation paragraph and Assumption~\ref{assumptions}. The following additional assumptions specify the setting of the two half-bridge results and the proof of Theorem~\ref{thm:population}. The convergence of controls in that theorem is understood in the integrated mean-square sense of Eq~\ref{eq:convergence-control-limit}.

\begin{assumption}[Population matching and convergence]
\label{ass:population-convergence}
\leavevmode
\begin{enumerate}
\item \emph{Endpoint moments and reward.} For some $\lambda>0$ and
finite constant $r_{\max}$,
\begin{equation}
    \int e^{\lambda\|x\|^2}
    \bigl(p_0(x)+p_1(x)\bigr)\,\mathrm{d}x<\infty,
    \qquad
    r\in C^1(\mathbb{R}^d),
    \qquad
    r(x)\leq r_{\max}.
\label{eq:ass-endpoint-reward}
\end{equation}
\item \emph{Reference transition scores.} The density $p_{1\mid0}^R$ is positive and continuously differentiable in both endpoints, and, for a constant $C<\infty$ independent of the endpoints,
\begin{equation}
    \|\nabla_{x_0}\log p_{1\mid0}^R(x_1\mid x_0)\|
    +\|\nabla_{x_1}\log p_{1\mid0}^R(x_1\mid x_0)\|
    \leq C(1+\|x_0\|+\|x_1\|).
\label{eq:ass-reference-scores}
\end{equation}
\item \emph{Admissibility and change of drift.} The controlled SDEs below have unique nonexplosive solutions. The admissible control classes contain the half-bridge solutions and the optimal relative controller $u^*$. For each comparison of $P^u$ with $P^{\widetilde u}$ used below,
\begin{equation}
\begin{aligned}
    \mathbb{E}_{P^{\widetilde u}}\exp\!\left(
      \frac12\int_0^1\|u_t(X_t)-\widetilde u_t(X_t)\|^2\,\mathrm{d}t\right)&<\infty,\\
    \mathbb{E}_{P^u}\int_0^1
      \|u_t(X_t)-\widetilde u_t(X_t)\|^2\,\mathrm{d}t&<\infty.
\end{aligned}
\label{eq:ass-novikov-energy}
\end{equation}
These comparisons are $(u,\widetilde u)=(u,0)$ for the SOC candidates and $(u^*,u^{(k)})$ for the final control-energy identity. The analogous conditions are imposed on $R^u$ relative to $R$ when using the control formulation Eq~\ref{eq:sb}.
\item \emph{Differentiation and matching targets.} The pretrained drift $b_t$ is continuously differentiable in space, and the adjoint ODE below is well posed. The propagated potentials have the classical time and space derivatives used in their diffusion representations on $0<t<1$, with the endpoint gradients used below. For every $k\geq0$ and compact $K\subset\mathbb{R}^d$, there is a nonnegative integrable function $D_{k,K}$ such that
\begin{equation}
\begin{aligned}
    \sup_{x_1\in K}\|\nabla_{x_1}p_{1\mid0}^R(x_1\mid x_0)\|
       \widehat\varphi_0^{(k)}(x_0)&\leq D_{k,K}(x_0),\\
    \int D_{k,K}(x_0)\,\mathrm{d}x_0&<\infty.
\end{aligned}
\label{eq:ass-kernel-domination}
\end{equation}
Here $\widehat\varphi_0^{(k)}$ is the stage factor in Eq~\ref{eq:stage-controlled-potentials}. For the lean adjoint $\widetilde a^{(k)}$ in Eq~\ref{eq:appendix-population-updates},
\begin{equation}
    \mathbb{E}_{P^{u^{(k+1)}}}\!\left[
       \int_0^1\sigma_t^2\|\widetilde a_t^{(k)}\|^2\,\mathrm{d}t
       +\|\nabla_{x_1}\log p_{1\mid0}^R(X_1\mid X_0)\|^2\right]<\infty.
\label{eq:ass-matching-moments}
\end{equation}
\end{enumerate}
\end{assumption}

The domination and moment conditions justify the kernel differentiation and conditional regressions in the half-bridge proofs; Eq~\ref{eq:ass-novikov-energy} justifies their control-energy identities. The endpoint moments and score bound are used in the convergence proof.

\begin{samepage}
In particular, for $w:=p_1^r/p_1$, Assumption~\ref{assumptions} gives a finite $M$ such that

\begin{equation}
    0<w(x)\leq M,
    \qquad
    \int e^{\lambda\|x\|^2}p_1^r(x)\,\mathrm{d}x
      \leq M\int e^{\lambda\|x\|^2}p_1(x)\,\mathrm{d}x<\infty.
\label{eq:tilted-subgaussian-moment}
\end{equation}

\end{samepage}

For this derivation, index a complete stage by $(u^{(k)},h^{(k)})\mapsto(u^{(k+1)},h^{(k+1)})$, starting at $u^{(0)}=0$, $h^{(0)}=\hold$. Its exact population updates are
\begin{equation}
\begin{aligned}
    u_t^{(k+1)}(x)
    &=-\sigma_t\mathbb{E}_{P^{u^{(k+1)}}}
      [\widetilde a_t^{(k)}\mid X_t=x],\\
    \widetilde a_1^{(k)}
    &=h^{(k)}(X_1)-\hold(X_1)-\nabla r(X_1),
    \qquad
    -\frac{\mathrm{d}\widetilde a_t^{(k)}}{\mathrm{d}t}
      =\nabla_x\Fref_t(X_t)^\top\widetilde a_t^{(k)},\\
    h^{(k+1)}
    &\in\argmin_h\mathbb{E}_{P^{u^{(k+1)}}}
      \left\|h(X_1)-\nabla_{x_1}\log p_{1\mid0}^R(X_1\mid X_0)\right\|^2.
\end{aligned}
\label{eq:appendix-population-updates}
\end{equation}

For an exact stage controller $u^{(k)}$, write $(\varphi_t^{(k)},\widehat\varphi_t^{(k)})$ for its potentials relative to $R$, so that
\begin{equation}
\begin{aligned}
    \frac{\mathrm{d}P^{u^{(k)}}}{\mathrm{d}R}
    &=\frac{\widehat\varphi_0^{(k)}(X_0)}{p_0(X_0)}\varphi_1^{(k)}(X_1),
    &p_t^{P^{u^{(k)}}}&=\varphi_t^{(k)}\widehat\varphi_t^{(k)},\\
    \varphi_t^{(k)}(x)
    &=\int p_{1\mid t}^R(y\mid x)\varphi_1^{(k)}(y)\,\mathrm{d}y,
    &\widehat\varphi_0^{(k)}&=p_0/\varphi_0^{(k)},\\
    \widehat\varphi_t^{(k)}(x)
    &=\int p_{t\mid0}^R(x\mid y)\widehat\varphi_0^{(k)}(y)\,\mathrm{d}y,
    &\Fref_t+\sigma_tu_t^{(k)}&=f_t+\sigma_t^2\nabla\log\varphi_t^{(k)}.
\end{aligned}
\label{eq:stage-controlled-potentials}
\end{equation}
At $k=0$, these are the old potentials. The forward half-bridge proof below establishes this representation inductively at each subsequent stage; it does not assume that intermediate diffusions already have terminal marginal $p_1^r$.

Time reversal of $P^{u^{(k)}}$, as used in \citet[proof of Theorem~4.2]{liu2025asbs}, gives the backward control
\begin{equation}
\begin{aligned}
    v_s^{(k)}(x)
    &=\frac{f_{1-s}(x)-\Fref_{1-s}(x)}{\sigma_{1-s}}
      -u_{1-s}^{(k)}(x)
      +\sigma_{1-s}\nabla\log p_{1-s}^{P^{u^{(k)}}}(x)\\
    &=\sigma_{1-s}\nabla\log\widehat\varphi_{1-s}^{(k)}(x).
\end{aligned}
\label{eq:stage-backward-control}
\end{equation}
The diffusion $Q^{v^{(k)}}$ uses this control and starts from $p_1^r$ in reverse time. Therefore it preserves the conditional trajectories given $X_1$:
\begin{equation}
\begin{aligned}
    Q^{v^{(k)}}(\cdot\mid X_1)&=P^{u^{(k)}}(\cdot\mid X_1),\\
    \frac{\mathrm{d}Q^{v^{(k)}}}{\mathrm{d}P^{u^{(k)}}}
    &=\frac{p_1^r(X_1)}{p_1^{P^{u^{(k)}}}(X_1)},\\
    \frac{\mathrm{d}Q^{v^{(k)}}}{\mathrm{d}R}
    &=\frac{\widehat\varphi_0^{(k)}(X_0)}{p_0(X_0)}
      \frac{p_1^r(X_1)}{\widehat\varphi_1^{(k)}(X_1)}.
\end{aligned}
\label{eq:backward-controlled-density}
\end{equation}
Conditional expressions here refer to the same controlled diffusions, conditioned on their indicated endpoint.

\subsection{Controller Matching and the forward half bridge}
\label{app:controller-half-bridge}

\begin{proposition}[Controller Matching solves the forward half bridge]
\label{prop:controller-half-bridge}
Under Assumption~\ref{ass:population-convergence}, the exact population Controller Update in Eq~\ref{eq:appendix-population-updates} satisfies
\begin{equation}
    u^{(k+1)}\in\argmin_u\KL(P^u\Vert Q^{v^{(k)}}),
\label{eq:forward-half-bridge-control}
\end{equation}
where $v^{(k)}$ is defined by Eq~\ref{eq:stage-backward-control}.
\end{proposition}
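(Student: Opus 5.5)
We will show that $\min_u\KL(P^u\Vert Q^{v^{(k)}})$ is, up to a control-independent constant, an SOC problem of the form Eq~\ref{eq:soc_def} with reference $P=P^{0}$ and terminal cost $f^{(k)}(x):=\log\bigl(\widehat\varphi_1^{(k)}(x)/p_1^r(x)\bigr)$. We will then check that the lean-adjoint fixed point in Eq~\ref{eq:appendix-population-updates} is exactly the Adjoint Matching characterization of the optimizer of that SOC problem.

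\textbf{Step 1: reduce the KL to an SOC problem.} By the chain rule,
\[
\KL(P^u\Vert Q^{v^{(k)}})=\KL(P^u\Vert P)-\mathbb{E}_{P^u}\log\frac{\mathrm{d}Q^{v^{(k)}}}{\mathrm{d}P}.
\]
By Girsanov, using Eq~\ref{eq:ass-novikov-energy} with the comparison $(u,0)$, the first term equals $\mathbb{E}_{P^u}\tfrac12\int_0^1\|u_t\|^2\mathrm{d}t$. For the second term, we combine Eq~\ref{eq:backward-controlled-density} with $\mathrm{d}P/\mathrm{d}R=\widehat\varphi^{\mathrm{old}}_0(X_0)\varphi^{\mathrm{old}}_1(X_1)/p_0(X_0)$. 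This gives
\[
\log\frac{\mathrm{d}Q^{v^{(k)}}}{\mathrm{d}P}=\log\frac{\widehat\varphi_0^{(k)}}{\widehat\varphi_0^{\mathrm{old}}}(X_0)+\log\frac{p_1^r}{\widehat\varphi_1^{(k)}\varphi_1^{\mathrm{old}}}(X_1).
\]
Since $X_0\sim p_0$ under every $P^u$, the $X_0$ term is a constant independent of $u$, with integrability coming from the $\log$-potential integrability established in the proof of Proposition~\ref{prop:change-reference}. The $X_1$ term becomes the terminal cost. We have $\varphi_1^{\mathrm{old}}=p_1/\widehat\varphi_1^{\mathrm{old}}$ and $p_1^r/p_1=e^{r}/Z$, so up to a constant the terminal cost is
\[
\log\widehat\varphi_1^{(k)}-\log\widehat\varphi_1^{\mathrm{old}}-r,
\]
which mirrors Proposition~\ref{prop:relative-log-potentials}.

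\textbf{Step 2: identify the corrector.} We next show inductively that $h^{(k)}=\nabla\log\widehat\varphi_1^{(k)}$. For $k=0$ this holds by initialization. For $k\ge1$, the corrector regression's population minimizer is
\[
\mathbb{E}_{P^{u^{(k)}}}\bigl[\nabla_{x_1}\log p^R_{1|0}(X_1|X_0)\mid X_1\bigr].
\]
The joint density of $P^{u^{(k)}}$ is $\widehat\varphi_0^{(k)}(x_0)\,p^R_{1|0}(x_1|x_0)\,\varphi_1^{(k)}(x_1)$. Therefore this conditional expectation equals
\[
\frac{\int\nabla_{x_1}p^R_{1|0}\,\widehat\varphi_0^{(k)}\,\mathrm{d}x_0}{\widehat\varphi_1^{(k)}}=\nabla\log\widehat\varphi_1^{(k)}.
\]
Differentiating under the integral is justified by the domination in Eq~\ref{eq:ass-kernel-domination}, and the conditional regression is well defined by Eq~\ref{eq:ass-matching-moments}. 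Hence the terminal adjoint condition $\widetilde a_1^{(k)}$ is exactly $\nabla f^{(k)}$.

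\textbf{Step 3: match the optimizer.} For the SOC problem with reference drift $b$ and terminal cost $f^{(k)}$, Adjoint Matching theory applies (the lean adjoint ODE Eq~\ref{eq:lean-adj} with $f$ replaced by $b$). The optimal control satisfies $u^*_t(x)=-\sigma_t\mathbb{E}_{P^{u^*}}[\widetilde a_t\mid X_t=x]$, and any control satisfying this fixed-point relation is optimal. This is exactly the population Controller Update, so $u^{(k+1)}$ minimizes the KL.

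I expect the main obstacle to be this last identification. The Adjoint Matching fixed point characterizes the minimizer only under regularity: the value function must be $C^{1,2}$, and the optimal controller must equal $-\sigma_t\nabla V_t$ with $\nabla V_t(x)=\mathbb{E}[\widetilde a_t\mid X_t=x]$. We would establish this directly. The optimal path law is $P\,e^{-f^{(k)}(X_1)}/Z$. Its control is $\sigma_t\nabla\log\mathbb{E}_P[e^{-f^{(k)}(X_1)}\mid X_t=x]$. Differentiating this along the flow of $b$, using the differentiability in Assumption~\ref{ass:population-convergence}(4), yields the lean-adjoint conditional expectation. The resulting law is $Q^{v^{(k)}}$-like, and it also provides the inductive potential representation Eq~\ref{eq:stage-controlled-potentials} for stage $k+1$.
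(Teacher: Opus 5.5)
Your proof is essentially the paper's: the same chain-rule reduction of $\KL(P^u\Vert Q^{v^{(k)}})$ to an SOC problem relative to $P$ with terminal cost $\log\widehat\varphi_1^{(k)}-\log\widehat\varphi_1^{\mathrm{old}}-r$, the same corrector identity $h^{(k)}=\nabla\log\widehat\varphi_1^{(k)}$ (which the paper takes from its Corrector Matching proposition instead of re-deriving it inline), and the same appeal to the Adjoint Matching fixed-point characterization, which the paper simply cites. Two fixes are needed: your opening $f^{(k)}=\log(\widehat\varphi_1^{(k)}/p_1^r)$ is the terminal cost relative to $R$, not $P$ (the correct cost, and the one for which $\widetilde a_1^{(k)}=\nabla f^{(k)}$ actually holds, is $\log(\widehat\varphi_1^{(k)}\varphi_1^{\mathrm{old}}/p_1^r)$, which is what your Step~1 computes); and in your last paragraph, the optimal law must be normalized by $\mathbb{E}_P[e^{-f^{(k)}(X_1)}\mid X_0]$ rather than $Z$ to keep $X_0\sim p_0$, while that argument only shows the optimizer is a fixed point, not the converse (every fixed point is optimal) that the statement needs and that you, like the paper, must take from Adjoint Matching.
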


\begin{proof}
At stage $k$, $h^{(k)}=\nabla\log\widehat\varphi_1^{(k)}$: this holds at initialization and follows from Proposition~\ref{prop:corrector-half-bridge} after each corrector update. Define the stage terminal cost
\[
    g^{(k)}:=\log\widehat\varphi_1^{(k)}
      -\log\widehat\varphi_1^{\mathrm{old}}-r,
    \qquad \nabla g^{(k)}=h^{(k)}-\hold-\nabla r.
\]
Using Eq~\ref{eq:backward-controlled-density}, $P^{u^{(0)}}=P$, and $p_1=\varphi_1^{\mathrm{old}}\widehat\varphi_1^{\mathrm{old}}$, we obtain for a candidate controller $u$,
\begin{align*}
    \KL(P^u\Vert Q^{v^{(k)}})
    &=\KL(P^u\Vert P^{u^{(0)}})
      +\mathbb{E}_{P^u}\log\frac{\mathrm{d}P^{u^{(0)}}/\mathrm{d}R}
                                      {\mathrm{d}Q^{v^{(k)}}/\mathrm{d}R}\\
    &=\KL(P^u\Vert P^{u^{(0)}})
      +\int p_0\log\frac{\widehat\varphi_0^{\mathrm{old}}}
                             {\widehat\varphi_0^{(k)}}\,\mathrm{d}x
      +\mathbb{E}_{P^u}\log\frac{\widehat\varphi_1^{(k)}(X_1)
                                      \varphi_1^{\mathrm{old}}(X_1)}{p_1^r(X_1)}\\
    &=\mathbb{E}_{P^u}\left[\frac12\int_0^1\|u_t(X_t)\|^2\,\mathrm{d}t
                    +g^{(k)}(X_1)\right]+C_k.
\end{align*}
Here $C_k$ is independent of $u$; the last equality uses $p_1^r\propto p_1e^r$ and the control-energy identity for the reference $P^{u^{(0)}}$. Adjoint Matching characterizes the SOC optimizer through the self-consistent update Eq~\ref{eq:appendix-population-updates}, with reference drift $\Fref$ and terminal gradient $\nabla g^{(k)}$ \citep[Appendix~A.1, equations~(26)--(27)]{liu2025asbs}. Thus
\[
    u^{(k+1)}\in\argmin_u\KL(P^u\Vert Q^{v^{(k)}}).
\]
To identify the resulting controlled diffusion explicitly, set
\begin{equation}
\begin{aligned}
    \varphi_1^{(k+1)}&:=\frac{p_1^r}{\widehat\varphi_1^{(k)}},
    &\varphi_t^{(k+1)}(x)&:=\int p_{1\mid t}^R(y\mid x)
                              \varphi_1^{(k+1)}(y)\,\mathrm{d}y,\\
    \widehat\varphi_0^{(k+1)}&:=\frac{p_0}{\varphi_0^{(k+1)}},
    &u_t^{(k+1)}&=\sigma_t\nabla\log
                    \frac{\varphi_t^{(k+1)}}{\varphi_t^{\mathrm{old}}}.
\end{aligned}
\label{eq:forward-control-potential-update}
\end{equation}
The SOC endpoint tilt then yields
\begin{align*}
    \frac{\mathrm{d}P^{u^{(k+1)}}}{\mathrm{d}R}
    &=\frac{\varphi_1^{(k+1)}(X_1)}{\varphi_0^{(k+1)}(X_0)},
    &p_0^{Q^{v^{(k)}}}&=\widehat\varphi_0^{(k)}\varphi_0^{(k+1)},\\
    \frac{\mathrm{d}P^{u^{(k+1)}}}{\mathrm{d}Q^{v^{(k)}}}
    &=\frac{p_0(X_0)}{p_0^{Q^{v^{(k)}}}(X_0)}.
\end{align*}
This proves the stage representation Eq~\ref{eq:stage-controlled-potentials} and identifies the forward half bridge: the initial marginal is replaced by $p_0$, with conditional trajectories given $X_0$ preserved.
\end{proof}

\subsection{Corrector Matching and the backward half bridge}
\label{app:corrector-half-bridge}

\begin{proposition}[Corrector Matching identifies the backward half bridge]
\label{prop:corrector-half-bridge}
Under Assumption~\ref{ass:population-convergence}, after the exact Controller Update, the backward control in Eq~\ref{eq:stage-backward-control} satisfies
\begin{equation}
    v^{(k+1)}\in\argmin_v\KL(Q^v\Vert P^{u^{(k+1)}}),
    \qquad v_0^{(k+1)}=\sigma_1h^{(k+1)},
\label{eq:backward-half-bridge-control}
\end{equation}
where $h^{(k+1)}$ is the exact Corrector Update in Eq~\ref{eq:appendix-population-updates}.
\end{proposition}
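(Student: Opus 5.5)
The plan is to mirror the forward half-bridge argument of Proposition~\ref{prop:controller-half-bridge}, now with the roles of the endpoints exchanged. First I solve the backward half bridge $\min_v\KL(Q^v\Vert P^{u^{(k+1)}})$ explicitly. By the disintegration (chain) rule for KL along $X_1$,
\[
\KL(Q^v\Vert P^{u^{(k+1)}})=\KL(p_1^r\Vert p_1^{P^{u^{(k+1)}}})+\mathbb{E}_{p_1^r}\KL\bigl(Q^v(\cdot\mid X_1)\Vert P^{u^{(k+1)}}(\cdot\mid X_1)\bigr).
\]
Every $Q^v$ starts from $p_1^r$ in reverse time, so the first term does not depend on $v$. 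The minimum is therefore attained exactly when the conditional laws given $X_1$ coincide, that is, when $Q^v$ is the time reversal of $P^{u^{(k+1)}}$ restarted from $p_1^r$. Applying Eq~\ref{eq:stage-backward-control} with $k+1$ in place of $k$, this reversal has control $v^{(k+1)}_s=\sigma_{1-s}\nabla\log\widehat\varphi^{(k+1)}_{1-s}$. To use that formula I rely on the stage representation Eq~\ref{eq:stage-controlled-potentials} for $u^{(k+1)}$, which was established in the proof of Proposition~\ref{prop:controller-half-bridge}. The first claim then follows from Eq~\ref{eq:backward-controlled-density}.

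Second, I identify $v^{(k+1)}_0=\sigma_1\nabla\log\widehat\varphi^{(k+1)}_1$ with $\sigma_1 h^{(k+1)}$. The squared-loss regression in Eq~\ref{eq:appendix-population-updates} has the conditional expectation as its population minimizer:
\[
h^{(k+1)}(x)=\mathbb{E}_{P^{u^{(k+1)}}}\!\left[\nabla_{x_1}\log p^R_{1\mid0}(X_1\mid X_0)\,\middle|\,X_1=x\right].
\]
This is well defined and unique $p_1^{P^{u^{(k+1)}}}$-a.e., by the moment bound Eq~\ref{eq:ass-matching-moments}. To evaluate it I use $\mathrm{d}P^{u^{(k+1)}}/\mathrm{d}R=\frac{\widehat\varphi^{(k+1)}_0(X_0)}{p_0(X_0)}\varphi^{(k+1)}_1(X_1)$. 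With this, the joint endpoint density is $\widehat\varphi^{(k+1)}_0(x_0)\,p^R_{1\mid0}(x_1\mid x_0)\,\varphi^{(k+1)}_1(x_1)$ and the terminal marginal is $\varphi^{(k+1)}_1\widehat\varphi^{(k+1)}_1$. The conditional law of $X_0$ given $X_1=x$ is then
\[
\frac{\widehat\varphi^{(k+1)}_0(x_0)\,p^R_{1\mid0}(x\mid x_0)}{\widehat\varphi^{(k+1)}_1(x)}.
\]
Hence
\[
h^{(k+1)}(x)=\frac{\int \nabla_x p^R_{1\mid0}(x\mid x_0)\,\widehat\varphi^{(k+1)}_0(x_0)\,\mathrm{d}x_0}{\widehat\varphi^{(k+1)}_1(x)}=\nabla\log\widehat\varphi^{(k+1)}_1(x),
\]
where the last equality is differentiation under the integral in Eq~\ref{eq:terminal-density-potential}. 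This also confirms the inductive hypothesis $h^{(k+1)}=\nabla\log\widehat\varphi^{(k+1)}_1$ used in Proposition~\ref{prop:controller-half-bridge}.

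The main obstacle is justifying the analytic steps in the second part. Differentiating under the integral is legitimate locally uniformly on compacts by the domination Eq~\ref{eq:ass-kernel-domination} and dominated convergence, which gives continuous differentiability of $\widehat\varphi^{(k+1)}_1$. I also need positivity of $\widehat\varphi^{(k+1)}_1$, which follows from positivity of $p^R_{1\mid0}$. Finally, the a.e.\ regression identity must be upgraded to an everywhere identity by choosing the continuous version, and the time-reversal formula for $v^{(k+1)}$ requires the classical regularity of the propagated potentials in item~4 of Assumption~\ref{ass:population-convergence}. With these in place, both claims of Eq~\ref{eq:backward-half-bridge-control} follow.
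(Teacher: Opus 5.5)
Your proposal is correct and follows essentially the same route as the paper. The KL chain rule at $X_1$ together with Eq~\ref{eq:backward-controlled-density} shows that $v^{(k+1)}$ attains the lower bound $\KL(p_1^r\Vert p_1^{P^{u^{(k+1)}}})$, and the conditional-expectation form of the regression, combined with the stage representation and differentiation under the integral, gives $h^{(k+1)}=\nabla\log\widehat\varphi_1^{(k+1)}=v_0^{(k+1)}/\sigma_1$; the paper presents these two steps in the opposite order and with less detail on the analytic justifications, which you supply via Eq~\ref{eq:ass-kernel-domination} and Eq~\ref{eq:ass-matching-moments}.
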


\begin{proof}
For the controller $u^{(k+1)}$ from Proposition~\ref{prop:controller-half-bridge}, the conditional density of $X_0$ given $X_1=x$ follows from Eq~\ref{eq:stage-controlled-potentials}:
\[
    p_{0\mid1}^{P^{u^{(k+1)}}}(x_0\mid x)
    =\frac{p_{1\mid0}^R(x\mid x_0)\widehat\varphi_0^{(k+1)}(x_0)}
           {\widehat\varphi_1^{(k+1)}(x)}.
\]
The moment bound Eq~\ref{eq:ass-matching-moments} and domination Eq~\ref{eq:ass-kernel-domination} give
\begin{align*}
    h^{(k+1)}(x)
    &=\mathbb{E}_{P^{u^{(k+1)}}}\!\left[
        \nabla_x\log p_{1\mid0}^R(x\mid X_0)\mid X_1=x\right]\\
    &=\frac{\int\nabla_xp_{1\mid0}^R(x\mid x_0)
                      \widehat\varphi_0^{(k+1)}(x_0)\,\mathrm{d}x_0}
            {\widehat\varphi_1^{(k+1)}(x)}\\
    &=\nabla\log\widehat\varphi_1^{(k+1)}(x)
      =\frac{v_0^{(k+1)}(x)}{\sigma_1}.
\end{align*}
The full backward control is given by Eq~\ref{eq:stage-backward-control}. For any candidate $v$, the KL chain rule at $X_1$ gives
\begin{align*}
    \KL(Q^v\Vert P^{u^{(k+1)}})
    &=\KL(p_1^r\Vert p_1^{P^{u^{(k+1)}}})\\
    &\quad+\int p_1^r(x)\,
       \KL\!\left(Q^v(\cdot\mid X_1=x)
            \Vert P^{u^{(k+1)}}(\cdot\mid X_1=x)\right)\,\mathrm{d}x\\
    &\geq\KL(p_1^r\Vert p_1^{P^{u^{(k+1)}}}).
\end{align*}
By Eq~\ref{eq:backward-controlled-density}, equality is attained by $v=v^{(k+1)}$, proving Eq~\ref{eq:backward-half-bridge-control}.
\end{proof}

\subsection{Proof of Theorem~\ref{thm:population}}
\label{app:population-convergence}

We prove the theorem under Assumption~\ref{ass:population-convergence}.

\begin{proof}
Let $u^*$ be the optimal relative controller, so that $P^{u^*}=\SB_P(p_0,p_1^r)$. The initialization gives
\begin{equation}
    P^{u^{(0)}}=P,
    \qquad v_s^{(0)}=\sigma_{1-s}\nabla\log\widehat\varphi_{1-s}^{\mathrm{old}},
    \qquad
    \frac{\mathrm{d}Q^{v^{(0)}}}{\mathrm{d}P^{u^{(0)}}}
    =\frac{p_1^r(X_1)}{p_1(X_1)}.
\label{eq:convergence-initialization}
\end{equation}
Propositions~\ref{prop:controller-half-bridge}--\ref{prop:corrector-half-bridge} yield
\begin{equation}
\begin{aligned}
    \frac{\mathrm{d}P^{u^{(k+1)}}}{\mathrm{d}Q^{v^{(k)}}}
    &=\frac{p_0(X_0)}{p_0^{Q^{v^{(k)}}}(X_0)},\\
    \frac{\mathrm{d}Q^{v^{(k+1)}}}{\mathrm{d}P^{u^{(k+1)}}}
    &=\frac{p_1^r(X_1)}{p_1^{P^{u^{(k+1)}}}(X_1)}.
\end{aligned}
\label{eq:controlled-ipf-density-updates}
\end{equation}
In particular, Eq~\ref{eq:controlled-ipf-density-updates} gives the endpoint updates
\begin{equation}
\begin{aligned}
    p_{0,1}^{Q^{v^{(k)}}}(x_0,x_1)
    &=p_{0,1}^{P^{u^{(k)}}}(x_0,x_1)
      \frac{p_1^r(x_1)}{p_1^{P^{u^{(k)}}}(x_1)},\\
    p_{0,1}^{P^{u^{(k+1)}}}(x_0,x_1)
    &=p_{0,1}^{Q^{v^{(k)}}}(x_0,x_1)
      \frac{p_0(x_0)}{p_0^{Q^{v^{(k)}}}(x_0)}.
\end{aligned}
\label{eq:convergence-endpoint-updates}
\end{equation}
The stage factors in Eq~\ref{eq:stage-controlled-potentials} give the endpoint densities relative to the same reference kernel:
\begin{equation}
\begin{aligned}
    p_{0,1}^{P^{u^{(k)}}}(x_0,x_1)
    &=p_{1\mid0}^{R}(x_1\mid x_0)
      \widehat\varphi_0^{(k)}(x_0)\varphi_1^{(k)}(x_1),\\
    p_{0,1}^{P^{u^{(0)}}}(x_0,x_1)
    &=p_{1\mid0}^{R}(x_1\mid x_0)
      \widehat\varphi_0^{\mathrm{old}}(x_0)\varphi_1^{\mathrm{old}}(x_1).
\end{aligned}
\label{eq:convergence-endpoint-factorizations}
\end{equation}
Thus the log-factors relative to the pretrained diffusion are
\begin{equation}
    \alpha^{(k)}:=\log\frac{\widehat\varphi_0^{(k)}}{\widehat\varphi_0^{\mathrm{old}}},
    \qquad
    \beta^{(k)}:=\log\frac{\varphi_1^{(k)}}{\varphi_1^{\mathrm{old}}},
    \qquad \alpha^{(0)}=\beta^{(0)}=0.
\label{eq:convergence-log-factors}
\end{equation}
Dividing the two densities in Eq~\ref{eq:convergence-endpoint-factorizations} cancels their common kernel:
\begin{equation}
\begin{aligned}
    \frac{p_{0,1}^{P^{u^{(k)}}}(x_0,x_1)}{p_{0,1}^{P^{u^{(0)}}}(x_0,x_1)}
    &=\frac{\widehat\varphi_0^{(k)}(x_0)}{\widehat\varphi_0^{\mathrm{old}}(x_0)}
      \frac{\varphi_1^{(k)}(x_1)}{\varphi_1^{\mathrm{old}}(x_1)}\\
    &=e^{\alpha^{(k)}(x_0)+\beta^{(k)}(x_1)}.
\end{aligned}
\label{eq:convergence-relative-endpoint-density}
\end{equation}
For the two marginal normalizations, Eq~\ref{eq:stage-controlled-potentials} gives
\begin{equation}
\begin{aligned}
    \int p_{0,1}^{P^{u^{(0)}}}(x_0,x_1)e^{\alpha^{(k)}(x_0)}\,\mathrm{d}x_0
    &=\varphi_1^{\mathrm{old}}(x_1)
      \int p_{1\mid0}^{R}(x_1\mid x_0)\widehat\varphi_0^{(k)}(x_0)\,\mathrm{d}x_0\\
    &=\varphi_1^{\mathrm{old}}(x_1)\widehat\varphi_1^{(k)}(x_1),\\
    \int p_{0,1}^{P^{u^{(0)}}}(x_0,x_1)e^{\beta^{(k+1)}(x_1)}\,\mathrm{d}x_1
    &=\widehat\varphi_0^{\mathrm{old}}(x_0)
      \int p_{1\mid0}^{R}(x_1\mid x_0)\varphi_1^{(k+1)}(x_1)\,\mathrm{d}x_1\\
    &=\widehat\varphi_0^{\mathrm{old}}(x_0)\varphi_0^{(k+1)}(x_0).
\end{aligned}
\label{eq:convergence-factor-integrals}
\end{equation}
The endpoint replacements in Eq~\ref{eq:controlled-ipf-density-updates}, with the factor normalization fixed in Eq~\ref{eq:forward-control-potential-update}, therefore yield
\begin{equation}
\begin{aligned}
    \beta^{(k+1)}(x_1)
    &=\log\frac{\varphi_1^{(k+1)}(x_1)}{\varphi_1^{\mathrm{old}}(x_1)}
      =\log\frac{p_1^r(x_1)}{\widehat\varphi_1^{(k)}(x_1)\varphi_1^{\mathrm{old}}(x_1)}\\
    &=\log p_1^r(x_1)
      -\log\int p_{0,1}^{P^{u^{(0)}}}(x_0,x_1)e^{\alpha^{(k)}(x_0)}\,\mathrm{d}x_0,\\
    \alpha^{(k+1)}(x_0)
    &=\log\frac{\widehat\varphi_0^{(k+1)}(x_0)}{\widehat\varphi_0^{\mathrm{old}}(x_0)}
      =\log\frac{p_0(x_0)}{\varphi_0^{(k+1)}(x_0)\widehat\varphi_0^{\mathrm{old}}(x_0)}\\
    &=\log p_0(x_0)
      -\log\int p_{0,1}^{P^{u^{(0)}}}(x_0,x_1)e^{\beta^{(k+1)}(x_1)}\,\mathrm{d}x_1.
\end{aligned}
\label{eq:convergence-sinkhorn-recursions}
\end{equation}

By Eq~\ref{eq:convergence-initialization}, $p_{0,1}^{Q^{v^{(0)}}}=w p_{0,1}^{P^{u^{(0)}}}$. Shifting the terminal factor in Eq~\ref{eq:convergence-sinkhorn-recursions} gives
\begin{equation}
\begin{aligned}
    \widetilde\beta^{(k)}(x_1)
    &:=\beta^{(k)}(x_1)-\log w(x_1),\\
    \widetilde\beta^{(k+1)}(x_1)
    &=\log p_1^r(x_1)
      -\log\int p_{0,1}^{Q^{v^{(0)}}}(x_0,x_1)e^{\alpha^{(k)}(x_0)}\,\mathrm{d}x_0,\\
    \alpha^{(k+1)}(x_0)
    &=\log p_0(x_0)
      -\log\int p_{0,1}^{Q^{v^{(0)}}}(x_0,x_1)e^{\widetilde\beta^{(k+1)}(x_1)}\,\mathrm{d}x_1.
\end{aligned}
\label{eq:convergence-shifted-recursions}
\end{equation}
Since $p_1^{Q^{v^{(0)}}}=p_1^r$, Eq~\ref{eq:convergence-shifted-recursions} gives $\widetilde\beta^{(1)}=0$. These are the Sinkhorn updates initialized at the first backward half bridge, with endpoint cost
\begin{equation}
    c(x_0,x_1):=\log\frac{p_0(x_0)p_1(x_1)}{p_{0,1}^{P^{u^{(0)}}}(x_0,x_1)}.
\label{eq:convergence-endpoint-cost}
\end{equation}

To verify the growth conditions for the endpoint cost in Eq~\ref{eq:convergence-endpoint-cost}, set $c_R(x_0,x_1):=-\log p_{1\mid0}^R(x_1\mid x_0)$. Integrating Eq~\ref{eq:ass-reference-scores} along line segments gives
\begin{equation}
\begin{aligned}
    |c_R(x_0,x_1)-c_R(0,x_1)|
    &\leq C'\bigl(\|x_0\|+\|x_0\|^2+\|x_0\|\|x_1\|\bigr),\\
    |c_R(0,x_1)|&\leq C'(1+\|x_1\|^2).
\end{aligned}
\label{eq:convergence-reference-cost-growth}
\end{equation}
The same bounds hold with the endpoints exchanged. The old Schr\"odinger system Eq~\ref{eq:schrodinger-potentials} can be written as
\begin{equation}
\begin{aligned}
    -\log\varphi_0^{\mathrm{old}}(x_0)
      &=-\log\int e^{-c_R(x_0,x_1)-\log\widehat\varphi_1^{\mathrm{old}}(x_1)}
                    p_1(x_1)\,\mathrm{d}x_1,\\
    -\log\widehat\varphi_1^{\mathrm{old}}(x_1)
      &=-\log\int e^{-c_R(x_0,x_1)-\log\varphi_0^{\mathrm{old}}(x_0)}
                    p_0(x_0)\,\mathrm{d}x_0.
\end{aligned}
\label{eq:convergence-old-conjugates}
\end{equation}
Assumption~\ref{assumptions} supplies finite entropy; Eq~\ref{eq:ass-endpoint-reward} and Eq~\ref{eq:convergence-reference-cost-growth} supply the moment and growth hypotheses. Applied to Eq~\ref{eq:convergence-old-conjugates}, \citet[Lemma~5.2 and Theorem~5.7]{ghosal2025sinkhorn} give, after fixing the additive constant,
\begin{equation}
    |\log\varphi_0^{\mathrm{old}}(x_0)|\leq C_0(1+\|x_0\|^2),
    \qquad
    |\log\widehat\varphi_1^{\mathrm{old}}(x_1)|\leq C_0(1+\|x_1\|^2).
\label{eq:convergence-old-potential-growth}
\end{equation}
Substitution of Eq~\ref{eq:convergence-endpoint-factorizations} and the terminal identity $p_1=\varphi_1^{\mathrm{old}}\widehat\varphi_1^{\mathrm{old}}$ into Eq~\ref{eq:convergence-endpoint-cost} gives

\begin{equation}
\begin{aligned}
    c(x_0,x_1)
      &=c_R(x_0,x_1)+\log\varphi_0^{\mathrm{old}}(x_0)
         +\log\widehat\varphi_1^{\mathrm{old}}(x_1)\\
      &=a_0(x_0)+a_1(x_1)+\widehat c(x_0,x_1),\\
    a_0(x_0)&:=\log\varphi_0^{\mathrm{old}}(x_0),\\
    a_1(x_1)&:=c_R(0,x_1)+\log\widehat\varphi_1^{\mathrm{old}}(x_1),\\
    \widehat c(x_0,x_1)&:=c_R(x_0,x_1)-c_R(0,x_1).
\end{aligned}
\label{eq:convergence-cost-decomposition}
\end{equation}
By Eq~\ref{eq:convergence-reference-cost-growth} and Eq~\ref{eq:convergence-old-potential-growth}, the terms in Eq~\ref{eq:convergence-cost-decomposition} satisfy $|a_i(x)|\leq C_2(1+\|x\|^2)$ for $i=0,1$, and $|\widehat c(x_0,x_1)|\leq C'(\|x_0\|+\|x_0\|^2+\|x_0\|\|x_1\|)$. Eq~\ref{eq:convergence-endpoint-cost} and these bounds, together with the endpoint moments in Eq~\ref{eq:ass-endpoint-reward} and Eq~\ref{eq:tilted-subgaussian-moment}, give

\begin{equation}
    e^{-c(x_0,x_1)}p_0(x_0)p_1^r(x_1)=p_{0,1}^{Q^{v^{(0)}}}(x_0,x_1),
    \qquad
    \KL(p_0\otimes p_1^r\Vert p_{0,1}^{Q^{v^{(0)}}})
       =\iint c\,p_0p_1^r<\infty.
\label{eq:convergence-cost-feasibility}
\end{equation}

Eq~\ref{eq:convergence-cost-feasibility} identifies the Sinkhorn initialization with the first backward half bridge and supplies a finite-entropy feasible coupling. Let $\alpha^*,\beta^*$ be the optimal relative endpoint factors, so that

\begin{equation}
\begin{aligned}
    p_{0,1}^{P^{u^*}}(x_0,x_1)
      &=p_{0,1}^{P^{u^{(0)}}}(x_0,x_1)e^{\alpha^*(x_0)+\beta^*(x_1)}\\
      &=p_{0,1}^{Q^{v^{(0)}}}(x_0,x_1)e^{\alpha^*(x_0)+\widetilde\beta^*(x_1)},\\
    \widetilde\beta^*&:=\beta^*-\log w.
\end{aligned}
\label{eq:convergence-optimal-factors}
\end{equation}

Eq~\ref{eq:convergence-endpoint-cost}--Eq~\ref{eq:convergence-cost-decomposition} establish quadratic growth of the separate endpoint terms and the required bound on their interaction. Together with Eq~\ref{eq:convergence-cost-feasibility} and the sub-Gaussian moments in Eq~\ref{eq:ass-endpoint-reward} and Eq~\ref{eq:tilted-subgaussian-moment}, these results verify the hypotheses of \citet[Theorem~5.7 and Corollary~5.8]{ghosal2025sinkhorn}. Applied to the iterates in Eq~\ref{eq:convergence-shifted-recursions} and the optimal factor in Eq~\ref{eq:convergence-optimal-factors}, those results give the following uniform quadratic bound, which implies the exponential-moment bound:
\begin{equation}
\begin{aligned}
    |\alpha^{(k)}(x)|+|\alpha^*(x)|&\leq C_3(1+\|x\|^2),\qquad k\geq0,\\
    M_\gamma:=\sup_{k\geq0}\int
       e^{\gamma|\alpha^{(k)}-\alpha^*|}p_0\,\mathrm{d}x
       &\leq e^{\gamma C_3}\int e^{\lambda\|x\|^2}p_0(x)\,\mathrm{d}x<\infty,
       \qquad 0<\gamma C_3\leq\lambda.
\end{aligned}
\label{eq:convergence-potential-moments}
\end{equation}
The constant $C_3>0$ is independent of $k$.

Define the endpoint entropies
\begin{equation}
\begin{aligned}
    F_k&:=\KL\!\left(p_{0,1}^{P^{u^*}}\Vert p_{0,1}^{P^{u^{(k)}}}\right),
    &S_k&:=\KL\!\left(p_{0,1}^{P^{u^*}}\Vert p_{0,1}^{Q^{v^{(k)}}}\right),\\
    d_k&:=\KL(p_1^r\Vert p_1^{P^{u^{(k)}}}),
    &m_k&:=\KL(p_0^{Q^{v^{(k)}}}\Vert p_0).
\end{aligned}
\label{eq:convergence-entropy-sequences}
\end{equation}
Proposition~\ref{prop:change-reference} and the endpoint KL chain rule give $F_0\leq\KL(P^{u^*}\Vert P)<\infty$.
Using the definitions in Eq~\ref{eq:convergence-entropy-sequences}, the endpoint updates Eq~\ref{eq:convergence-endpoint-updates} imply
\begin{equation}
\begin{aligned}
    F_k-S_k
      &=\iint p_{0,1}^{P^{u^*}}\log\frac{p_{0,1}^{Q^{v^{(k)}}}}
                    {p_{0,1}^{P^{u^{(k)}}}}\,\mathrm{d}x_0\mathrm{d}x_1
        =\int p_1^r\log\frac{p_1^r}{p_1^{P^{u^{(k)}}}}\,\mathrm{d}x_1=d_k,\\
    S_k-F_{k+1}
      &=\int p_0\log\frac{p_0}{p_0^{Q^{v^{(k)}}}}\,\mathrm{d}x_0
        =\KL(p_0\Vert p_0^{Q^{v^{(k)}}})\geq0,\\
    \sum_{k=0}^{n}d_k&\leq F_0-F_{n+1}\leq F_0<\infty.
\end{aligned}
\label{eq:convergence-entropy-descent}
\end{equation}
Eq~\ref{eq:convergence-entropy-descent} implies $d_k\to0$. Marginalizing the terminal replacement in Eq~\ref{eq:convergence-endpoint-updates} then bounds $m_k$ from Eq~\ref{eq:convergence-entropy-sequences}:
\begin{equation}
\begin{aligned}
    0\leq m_k
      &\leq\KL\!\left(p_{0,1}^{Q^{v^{(k)}}}\Vert p_{0,1}^{P^{u^{(k)}}}\right)\\
      &=\int p_1^r\log\frac{p_1^r}{p_1^{P^{u^{(k)}}}}\,\mathrm{d}x_1
        =d_k\longrightarrow0.
\end{aligned}
\label{eq:convergence-marginal-entropy}
\end{equation}
The entropy inequality and Eq~\ref{eq:convergence-potential-moments} give

\begin{equation}
    \int|\alpha^{(k)}-\alpha^*|p_0^{Q^{v^{(k)}}}\,\mathrm{d}x
      \leq\gamma^{-1}(m_k+\log M_\gamma)<\infty.
\label{eq:convergence-factor-integrability}
\end{equation}

Eq~\ref{eq:convergence-potential-moments} ensures that $C_4:=2\gamma^{-1}(3/2+\log M_\gamma)$ is finite and independent of $k$. Combining the stage factorization Eq~\ref{eq:convergence-relative-endpoint-density} with the terminal replacement Eq~\ref{eq:convergence-endpoint-updates} and the optimal factorization Eq~\ref{eq:convergence-optimal-factors} gives

\begin{equation}
\begin{aligned}
    p_{0,1}^{Q^{v^{(k)}}}(x_0,x_1)
      &=p_{0,1}^{Q^{v^{(0)}}}(x_0,x_1)
        e^{\alpha^{(k)}(x_0)+\widetilde\beta^{(k+1)}(x_1)},\\
    \log\frac{p_{0,1}^{Q^{v^{(k)}}}(x_0,x_1)}
              {p_{0,1}^{P^{u^*}}(x_0,x_1)}
      &=\alpha^{(k)}(x_0)-\alpha^*(x_0)
        +\widetilde\beta^{(k+1)}(x_1)-\widetilde\beta^*(x_1).
\end{aligned}
\label{eq:convergence-backward-factor-ratio}
\end{equation}

Both laws in Eq~\ref{eq:convergence-backward-factor-ratio} have terminal marginal $p_1^r$. Conditioning on $X_1$, their terminal factor cancels in the symmetric KL; Eq~\ref{eq:convergence-factor-integrability} justifies the remaining integral. With $F_{k+1}\leq S_k$ from Eq~\ref{eq:convergence-entropy-descent} and $m_k\to0$ from Eq~\ref{eq:convergence-marginal-entropy}, we obtain
\begin{equation}
\begin{aligned}
    0\leq F_{k+1}\leq S_k
      &\leq\KL\!\left(p_{0,1}^{Q^{v^{(k)}}}\Vert p_{0,1}^{P^{u^*}}\right)+S_k\\
      &=\int(\alpha^{(k)}-\alpha^*)
               (p_0^{Q^{v^{(k)}}}-p_0)\,\mathrm{d}x\\
      &\leq C_4\left(\sqrt{m_k}+\frac12m_k\right)\longrightarrow0.
\end{aligned}
\label{eq:convergence-endpoint-kl}
\end{equation}
The last inequality is the weighted entropy estimate of \citet[Lemma~3.1]{ghosal2025sinkhorn}, with its exponential-moment hypothesis verified in Eq~\ref{eq:convergence-potential-moments}. This proves endpoint entropy convergence directly from the two half-bridge updates.

Eq~\ref{eq:convergence-endpoint-kl} proves $F_k\to0$, where $F_k$ is the endpoint KL defined in Eq~\ref{eq:convergence-entropy-sequences}. Each multiplier in Eq~\ref{eq:controlled-ipf-density-updates} depends only on the endpoints, and the optimal bridge also preserves the pretrained conditional trajectories given both endpoints. The conditional term in the KL chain rule therefore vanishes, so the same bound gives path-space convergence, for $k\geq1$:
\begin{equation}
\begin{aligned}
    P^{u^{(k)}}(\cdot\mid X_0,X_1)
    &=Q^{v^{(k)}}(\cdot\mid X_0,X_1)
      =P^{u^{(0)}}(\cdot\mid X_0,X_1),\\
    P^{u^*}(\cdot\mid X_0,X_1)
    &=P^{u^{(0)}}(\cdot\mid X_0,X_1),\\
    \KL(P^{u^*}\Vert P^{u^{(k)}})
    &=\KL\!\left(p_{0,1}^{P^{u^*}}\Vert p_{0,1}^{P^{u^{(k)}}}\right)\\
    &\quad+\mathbb{E}_{P^{u^*}}
      \underbrace{\KL\!\left(P^{u^*}(\cdot\mid X_0,X_1)
                   \Vert P^{u^{(k)}}(\cdot\mid X_0,X_1)\right)}_{=0}
      \\
    &=F_k\leq C_4\left(\sqrt{m_{k-1}}+\frac12m_{k-1}\right)
      \longrightarrow0.
\end{aligned}
\label{eq:convergence-diffusion-kl}
\end{equation}
This is the endpoint-to-diffusion lifting described by \citet[Section~3.5]{debortoli2021}.

Finally, both controlled diffusions start from $p_0$ and have volatility $\sigma_t$. The control-energy identity justified by Eq~\ref{eq:ass-novikov-energy}, together with the KL convergence in Eq~\ref{eq:convergence-diffusion-kl}, gives
\begin{equation}
\begin{aligned}
    \mathbb{E}_{P^{u^*}}\int_0^1
        \|u_t^{(k)}(X_t)-u_t^*(X_t)\|^2\,\mathrm{d}t
    &=2\KL(P^{u^*}\Vert P^{u^{(k)}})\longrightarrow0,\\
    P^{u^*}
    &=\SB_P(p_0,p_1^r)
      \stackrel{\text{Proposition~\ref{prop:change-reference}}}{=}
      \SB_R(p_0,p_1^r).
\end{aligned}
\label{eq:convergence-control-limit}
\end{equation}
Thus the controllers converge in integrated mean square under the optimal controlled diffusion, with the limit solving the tilted SB problem.
\end{proof}
 \section{Competitors}\label{app:competitors}

\paragraph{Shared pretrained sampler.}
All competitors use the frozen bridge $P$. For source $x_0$ and grid
$0=t_0<\cdots<t_L=1$, its Euler step is
\begin{equation}
    X_{i+1}=X_i+\Delta_i b_{t_i}(X_i)
    +\sigma\sqrt{\Delta_i}\,\xi_i,\qquad
    \Delta_i=t_{i+1}-t_i,\quad \xi_i\sim\mathcal N(0,I).
    \label{eq:competitor-base-step}
\end{equation}
Image runs omit noise in the final step. Let $r$ denote the terminal log
reward: $r=-\lambda E$ for images and the
terminal log density ratio for the two-dimensional experiment. The pretrained drift predicts the endpoint as
\begin{equation}
    \widehat x_1(x,t)=x+(1-t)b_t(x),\qquad
    \widehat x_1(x,1)=x.
    \label{eq:competitor-endpoint-estimate}
\end{equation}

\subsection{Reward tilting with non-memoryless priors}
\label{app:reward-tilting-nonmemoryless}

Reward finetuning for the memoryless diffusion models \cite{song2021scorebased} is widely explored research field \cite{uehara2024entropy, domingoenrich2025, black2024ddpo}. A lot of memoryless diffusion reward finetuning methodology is based on a fact of independence of start of generation $X_0$ and the end of generation $X_1$, i.e., memorylessness property: 

Let $R_{\mathrm{diff}}$ be a memoryless diffusion with marginals $p_0$ and $p_1$:
\begin{equation}
\begin{aligned}
R_{\mathrm{diff}}:\quad
\mathrm{d}X_t
&=f_t^{\mathrm{diff}}(X_t)\,\mathrm{d}t
  +\sigma_t^{\mathrm{diff}}\,\mathrm{d}W_t,\\
p_{0,1}^{R_{\mathrm{diff}}}(x_0,x_1)
&=p_0(x_0)p_1(x_1).
\end{aligned}
\end{equation}

Then considering the diffusion bridges \cite{zhou2024ddbm}, which in general do not have a memoryless property:

\begin{equation}
\begin{aligned}
R_{\mathrm{bridge}}:\quad
\mathrm{d}X_t
&=b_t(X_t)\,\mathrm{d}t+\sigma_t\,\mathrm{d}W_t,\\
p_{0,1}^{R_{\mathrm{bridge}}}(x_0,x_1)
&=p_0(x_0)p_{1\mid0}^{R_{\mathrm{bridge}}}(x_1\mid x_0)
 \ne p_0(x_0)p_1(x_1)\quad\text{in general}.
\end{aligned}
\end{equation}

For either reference $S\in\{R_{\mathrm{diff}},R_{\mathrm{bridge}}\}$,
let $S^r$ be the SOC solution with terminal reward $r$ in
Eq~\ref{eq:soc_def}. When the controlled solution exists and the
conditional exponential moment is finite and positive,
Eq~\ref{eq:soc-optimal-law} gives:
\begin{equation}
\begin{aligned}
V_0^S(x_0)
&=-\log\int p_{1\mid0}^S(y\mid x_0)e^{r(y)}\,\mathrm{d}y,\\
p_{0,1}^{S^r}(x_0,x_1)
&=p_0(x_0)p_{1\mid0}^S(x_1\mid x_0)
  e^{r(x_1)+V_0^S(x_0)}.
\end{aligned}
\end{equation}

Write $\bar Z_r=\int p_1(y)e^{r(y)}\,\mathrm{d}y$, so that
$p_1^r\propto p_1e^r$. For the memoryless diffusion,
$V_0^{R_{\mathrm{diff}}}(x_0)=-\log\bar Z_r$, and therefore the SOC solution marginalization gives unbiased reward tilted $x_1$ distribution $p_1^{R_{\mathrm{diff}}^r}(x_1)=p_1^r(x_1)$:
\begin{equation}
\begin{aligned}
p_1^{R_{\mathrm{diff}}^r}(x_1)
&=\int p_0(x_0)p_1(x_1)
       e^{r(x_1)-\log\bar Z_r}\,\mathrm{d}x_0=\frac{p_1(x_1)e^{r(x_1)}}{\bar Z_r}
 =p_1^r(x_1).
\end{aligned}
\end{equation}

For the diffusion bridge, $V_0^{R_{\mathrm{bridge}}}(x_0)$ generally
depends on the source. Marginalizing the SOC solution gives 
\begin{equation}
\begin{aligned}
p_1^{R_{\mathrm{bridge}}^r}(x_1)
&=e^{r(x_1)}
  \int p_0(x_0)
       p_{1\mid0}^{R_{\mathrm{bridge}}}(x_1\mid x_0)
       e^{V_0^{R_{\mathrm{bridge}}}(x_0)}
       \,\mathrm{d}x_0\\
&=p_1(x_1)e^{r(x_1)}
  \mathbb E_{R_{\mathrm{bridge}}}
  \!\left[e^{V_0^{R_{\mathrm{bridge}}}(X_0)}
          \mid X_1=x_1\right]\\
&=p_1^r(x_1)\,
  \underbrace{\bar Z_r\,
  \mathbb E_{R_{\mathrm{bridge}}}
  \!\left[e^{V_0^{R_{\mathrm{bridge}}}(X_0)}
          \mid X_1=x_1\right]}_{\mathcal B_r(x_1)}.
\end{aligned}
\end{equation}

The factor $\mathcal B_r$ need not be constant, so the \textbf{reward alone
does not produce $p_1^r$ in general}.

\subsection{DPS-style guidance}

DPS differentiates a likelihood at a predicted endpoint
\citep{chung2023dps} and CDDB-deep applies this idea to diffusion
bridges for inverse problems \citep{chung2023cddb}. Our methodology is similar to CDDB, we guide the
frozen bridge with the terminal reward:
\begin{equation}
    X_{i+1}=X_i+\Delta_i b_{t_i}(X_i)
    +\gamma\sigma^2\Delta_i
       \left.\nabla_x r\!\left(\widehat x_1(x,t_i)\right)\right|_{x=X_i}
    +\sigma\sqrt{\Delta_i}\,\xi_i.
    \label{eq:competitor-dps-step}
\end{equation}
$\gamma$ is the guidance scale, separate from reward strength. We use the full Jacobian of $\widehat x_1$, with
$\gamma\in\{0.5,1,2,4\}$. 

\subsection{Conditional self-normalized importance sampling (CondSNIS)}

For each fixed source $x_0$, CondSNIS samples a population of $K$
independent old-bridge trajectories and targets:
\begin{equation}
    p^r_{1\mid0}(y\mid x_0)
    \propto e^{r(y)}p^P_{1\mid0}(y\mid x_0)\qquad
    Z_r(x_0)=\mathbb E_P[e^{r(X_1)}\mid X_0=x_0].
    \label{eq:competitor-conditional-tilt}
\end{equation}
For terminals $Y^1,\ldots,Y^K$, the weighted population is
\begin{equation}
    \widehat p^r_{1\mid0}(y\mid x_0)
    =\sum_{j=1}^K w_j\delta_{Y^j}(\mathrm dy),\qquad
    w_j=\frac{e^{r(Y^j)}}{\sum_{k=1}^K e^{r(Y^k)}},
    \label{eq:competitor-snis}
\end{equation}

where $\delta$ is the Dirac delta function. As the number of particles $N \rightarrow{} \infty$ this procedure results in conditional tilt $p^r_{1\mid0}(y\mid x_0)
    \propto e^{r(y)}p^P_{1\mid0}(y\mid x_0)$ \cite{delmoral2006}. At the end of the procedure we sample the single final sample with log weights $w$.

\subsection{Conditional sequential Monte Carlo (CondSMC)}

CondSMC propagates $N$ old-bridge particles per source
\citep{delmoral2006}. For this base proposal, RNE's bridge-compatible
path-ratio weights reduce to our potential ratios \citep{he2026rne}.
We define $\rho_i(x)$ as particle $i$ score and $\ell_i$ as particle $i$ log importance weights, where $\rho_i(x)=r(\widehat x_1(x,t_i))$, $\rho_L(x)=r(x)$
and $\ell_0^j=\rho_0(x_0)$. With $s$ the preceding potential step,
update:
\begin{equation}
    \ell_i^j=\ell_s^j+\rho_i(X_i^j)-\rho_s(X_s^j),\qquad
    \mathrm{ESS}_i=
    \left(\sum_{j=1}^N
    \operatorname{softmax}(\ell_i)_j^2\right)^{-1}.
    \label{eq:competitor-smc-weight}
\end{equation}
Each step resamples systematically if $\mathrm{ESS}_i<N/2$,
resetting weights but retaining potentials. As the number of particles $N \rightarrow{} \infty$ this procedure results in conditional tilt $p^r_{1\mid0}(y\mid x_0)
    \propto e^{r(y)}p^P_{1\mid0}(y\mid x_0)$ \cite{delmoral2006}. At the end of the procedure we sample the single final sample with log weights $l$.

 \section{General design choices for \tsbm{}}\label{app:tsbm-design-choices}

\wasyparagraph~\ref{sec:method} states the \tsbm{} updates in terms of an incremental controller $u$ and a full terminal corrector $h$. Here we describe the neural parameterization and sampling procedure used in the experiments.

\paragraph{Neural parameterization.}
DSBM pretraining \citep{shi2023dsbm} provides a forward drift $b_\psi$ and an old-bridge corrector $h_{\mathrm{old},\psi}$. We freeze both networks during \tsbm{} training. We initialize a trainable full forward drift $b_\theta$ and full corrector $h_\phi$ from these pretrained networks, then carry their parameters forward between stages. The incremental controller in \wasyparagraph~\ref{sec:method} is represented implicitly by
\begin{equation}
    u_\theta(x,t)=\frac{b_\theta(x,t)-b_\psi(x,t)}{\sigma_t}.
    \label{eq:practical-controller-parameterization}
\end{equation}
Thus, simulating the current process requires one evaluation of $b_\theta$ per time step rather than separate evaluations of $b_\psi$ and a controller network. The frozen $b_\psi$ is still used to form the controller-matching loss and the lean-adjoint target. Likewise, the frozen $h_{\mathrm{old},\psi}$ is subtracted from $h_\phi$ in the terminal adjoint condition.

\paragraph{Replay buffers.}
We maintain a buffer of $N$ sampling trajectories and their corresponding backpropagated "lean" adjoints from the current drift $b_\psi^{(k)}$ and replace it in full every $L$ controller-gradient steps. Each step uses a uniformly sampled minibatch of stored trajectories. After updating the drift, we draw endpoint pairs $(X_0,X_1)$ under that drift $b_\psi^{(k)}$ for the corrector $h_\phi^{(k)}$ update and do not refresh their buffer due to the start of the next stage $k+1$. 

\paragraph{Reference process.} In all our experiments the reference process is Wiener, i.e.,  $R = W^\epsilon$. Which leads to $\sigma_t=\epsilon$ and $s_R(X_0,X_1) = -\frac{X_1 - X_0}{\epsilon^2}$.

\paragraph{Simulation and regression targets.}
We simulate the controlled SDE with drift $b_\theta$ by Euler-Maruyama. Along each sampled trajectory, we integrate the lean-adjoint ODE backward from
\begin{equation}
    \widetilde a_1=-\nabla r(X_1)+h_\phi(X_1)-h_{\mathrm{old},\psi}(X_1),
    \qquad
    -\frac{\mathrm{d}\widetilde a_t}{\mathrm{d}t}
    =\nabla_x b_\psi(X_t,t)^\top\widetilde a_t.
    \label{eq:practical-lean-adjoint}
\end{equation}
The Jacobian-vector product uses the frozen pretrained drift, not the trainable drift. We regress $b_\phi(X_t,t)^{(k)}$ against $b_\theta-\sigma_t^2\widetilde a_t$ using Eq~\ref{eq:controller-loss}. For the corrector update, we regress $h_\phi(X_1)$ against the endpoint transition score $s_R(X_0,X_1) = s_{W^\epsilon}(X_0,X_1)=-\frac{X_1 - X_0}{\epsilon^2}$ of the original reference process $W^\epsilon$, as in Eq~\ref{eq:tsbm-corrector-update}.

\begin{algorithm}[t]
\caption{Tilted Schrödinger Bridge Matching (\tsbm{}) Practical}
\label{alg:tsbm_practical}
\begin{algorithmic}[1]
\Require frozen old drift $\Fref_\theta$, old corrector $ h_{\text{old}, \theta}$, reference process $R$, reward $r$, stages $K$, number of controller updates $M_{\rm ctrl}$, number of corrector updates $M_{\rm corr}$, freq of Replay Buffer updates $N$
\State initialize $b^{(0)}_\psi \gets \Fref_\theta$, $h^{(0)}_\psi\gets h_{\text{old}, \theta}$, $S \gets \emptyset$
\For{$k=0,\ldots,K-1$}
    \State $b_\phi \gets \text{copy}(b^{(k)}_\phi)$
    \For{$m=0,\ldots,M_{\rm ctrl}-1$}
    \If{$m \bmod N = 0$}
            \State Sample \textit{paths} $(X_t)_{t \in [0,1]}$ by Euler-Maruyama inference of the SDE with drift $b_\phi$
            \State $\tilde{a}^{(k)}_1 \gets -\nabla r(X_1) + h^{(k)}_\phi(X_1) -  h_{\text{old}, \theta}(X_1)$
            \State Propagate $\tilde{a}^{(k)}_t$ by solving the backward lean-adjoint ODE using Eq~\ref{eq:lean-adj}
            \State Add paths $(X_t)_{t \in [0,1]}$ and lean adjoints $(\tilde{a}^{(k)}_t)_{t \in [0,1]}$ to replay buffer $S$
        \EndIf
\State Sample \textit{paths} $(X_t)_{t \in [0,1]}$ and lean adjoints $\tilde{a}$ from replay buffer $S$
\State Make a gradient step on $\nabla_\phi \mathcal{L}_{\rm ctrl}$ w.r.t $b_\phi$ parameters;
   \EndFor
   \State Sample many \textit{endpoints} $(X_0, X_1)$ by the inference of SDE with drift $ b_\phi $; 
   \State $h_\phi \gets \text{copy}(h^{(k)}_\phi)$
   \For{$m=0,\ldots,M_{\rm corr}-1$}
      \State Sample a minibatch of endpoint pairs      \State Make a gradient step on $\nabla_\phi \mathcal{L}_{\rm corr}$  w.r.t $b_\phi$ parameters;
  \EndFor $b^{(k+1)}_\phi \gets b_\phi $, $h^{(k+1)}_\phi\gets h_\phi$
  \State 
\EndFor
\State \Return $u^{(K)}, h^{(K)}$
\end{algorithmic}
\end{algorithm}
 \section{Reward Functions}
\label{app:reward-functions}

\subsection{Toy illustrative experiment}

For the 2D toy experiment, we use the log-density-ratio reward
\begin{equation}
    r(x)=\log\frac{p_1^{\mathrm{tilt}}(x)}{p_1(x)}, \quad p_1 = \sum_{k=1}^4 \alpha_k \mathcal{N}(\mu_k, \sigma_k), \quad  p_1^{\rm tilt} = \sum_{k=1}^4 \alpha^{\rm tilt}_k \mathcal{N}(\mu_k, \sigma_k)
\end{equation}
where the mixture weights change from
$\alpha=(0.25,0.25,0.25,0.25)$ to $\alpha^{\rm tilt}=(0,0.2,0.2,0.6)$.
Thus, $p_1(x)e^{r(x)}=p_1^{\mathrm{tilt}}(x)$, while the Gaussian
means and covariances remain unchanged.

\begin{equation}
    E(x) = -r(x) = \log p_1(x) - \log p^{\rm tilt}_1(x)
\end{equation}

\subsection{Colored MNIST unpaired translation}\label{app:reward-cmnist}

This appendix gives the terminal energies used for the image experiments in Section~\ref{sec:experiments}. 

\begin{table}[H]
\caption{Colored MNIST terminal energies and desired outputs. Each pair shows a source $x_0$ (left) and the terminal TSBM generation from the same column of the corresponding comparison grid (right).}
\label{tab:colored-mnist-reward-intent}
\centering
\small
\setlength{\tabcolsep}{3pt}
\begin{tabular}{p{0.18\linewidth}p{0.37\linewidth}p{0.32\linewidth}}
\toprule
Reward & Intended terminal attribute & Desired output \\
\midrule
Red chroma & A digit 3 with pronounced red chroma and a neutral background. & \includegraphics[height=30pt]{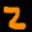}\,$\rightarrow$\,\includegraphics[height=30pt]{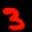} \\
Thin stroke (target 3.65) & A digit 3 with estimated stroke thickness close to $3.65$. & \includegraphics[height=30pt]{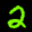}\,$\rightarrow$\,\includegraphics[height=30pt]{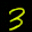} \\
Thick stroke (target 7) & A digit 3 with estimated stroke thickness close to $7$. & \includegraphics[height=30pt]{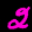}\,$\rightarrow$\,\includegraphics[height=30pt]{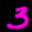} \\
\bottomrule
\end{tabular}
\end{table}

\paragraph{Red-chroma reward.}
Let $x\in[-1,1]^{3\times H\times W}$ and index spatial locations by $i$. The red-chroma score favors red image channel $x_R$ relative to green and blue channels , i.e., $x_G$ and $x_B$, while neutral gray backgrounds do self cancel and do contribute to this reward:
\begin{align}
c_{\mathrm{red}}(x)
&=\frac{1}{HW}\sum_i
\left[x_{R,i}-\frac{x_{G,i}+x_{B,i}}{2}\right],\\
E_{\mathrm{red}}(x)&=-c_{\mathrm{red}}(x).
\label{eq:red-chroma-reward}
\end{align}

\paragraph{Stroke-thickness energies.}
For normalized to $[0, 1]$ image $u=(x+1)/2\in[0,1]^{3\times H\times W}$, we first construct a differentiable, digit stroke area estimate:
\begin{align}
I_i(x)
&=\sum_c u_{c,i}
  \frac{\exp(\kappa_{\mathrm{rgb}}u_{c,i})}
       {\sum_{c'}\exp(\kappa_{\mathrm{rgb}}u_{c',i})},
&M_i(x)&=\sigma\!\left(\kappa_m[I_i(x)-t_{\mathrm{fg}}]\right),\\
A(x)&=\sum_i M_i(x),
\end{align}

where $I_i$ is a \textit{brightness-like score} at pixel \(i\), $M_i(x)$ is the \textit{soft foreground mask}, $A(x)$ estimates the \textit{stroke’s area}. $I_i$  gives more weight to whichever color channel is strongest, acting like a smooth maximum of red, green, and blue. $M_i(x)$ acts as the soft foregrouhnd mask, where the sigmoid turns \(I_i\) into a value near 0 for a dark background pixel and near 1 for a bright stroke pixel. Other parameters: $u_{c, i}$ is the color $c$ channel of pixel $i$, $\kappa_{\rm rgb}=20$ is the strength of concentration on the brightest channel, $\kappa_{\rm m}=40$ controls how sharply the mask changes around that threshold and the $t_{\mathrm{fg}}=0.1$ is the foreground threshold applied to \(I_i\). These parameters were selected empirically.

\begin{align}
&P(x)=\sum_i\sqrt{(K_x*M)_i^2+(K_y*M)_i^2+\epsilon}, \qquad
w(x)=\frac{2A(x)}{P(x)+\epsilon}.
\end{align}

Here $P$ approximates \textit{foreground perimeter} obtained by adding the strength of the mask’s edges and $w(x)$ is the ratio between strokes area and foreground perimeter or \textbf{resulting stroke-width estimate}, measured roughly in pixels. Other parameters and functions are: $K_x$ and $K_y$ are the $3\times3$ Sobel filters, scaled by $1/8$ and applied with replicate padding, $\epsilon=10^{-8}$ a tiny constant that prevents numerical problems at zero.  

The \textbf{thin-stroke} and \textbf{thick-stroke} energies are:
\begin{align}
r_{\mathrm{thin}}(x)&=-\left(\frac{w(x)-3.65}{0.9935646}\right)^2,\\
r_{\mathrm{thick}}(x)&=-\left(\frac{w(x)-7}{0.9935646}\right)^2,
\label{eq:stroke-thickness-reward}
\end{align}

which are just approximate stroke width $w(x)$ MSE towards targets, i.e., $3.65$ and $7$, normalized by $0.9935646$, which is difference between the 25th and 75th quantiles of $w(x)$ measure on 32x32 Colored MNIST dataset digit 3s.

\subsection{CelebA unpaired translation}\label{app:reward-celeba}

Let $S(x,p)$ denote the contribution of a preference model \cite{xu2023imagereward, kirstain2023pickapic, wu2023hpsv2} for image $x$ and prompt $p$. The prompt-conditioned reward can be written as:
\begin{equation}
r(x;p^+,p^-)=S(x,p^+)-S(x,p^-).
\label{eq:celeba-prompt-reward}
\end{equation}
For a positive-only target we set $S(x,p^-)=0$.

\paragraph{Prompts.} The positive prompts used for the three $128\times128$ male-to-female translation experiments are quoted verbatim from the run configurations:
\begin{itemize}
    \item \emph{Natural Toothy Smile}: ``a natural portrait photo of a woman with a relaxed smile and naturally visible teeth''
    \item \emph{Heavy Makeup}: ``a natural portrait photo with glamorous eye makeup, eyeliner, and lipstick''
    \item \emph{Elderly Woman}: ``a natural portrait of an elderly woman with visible age lines and mature facial features''
\end{itemize}
The elderly-woman reward additionally uses the negative prompt \textbf{only during training} ``a natural portrait of a young woman with smooth youthful facial features''. The smile and makeup energies use only their positive prompts.

\paragraph{ImageReward reward.} We use the frozen ImageReward-v1.0 text-image preference model~\citep{xu2023imagereward} to define $S$ in Eq.~\ref{eq:celeba-prompt-reward} as the preference score, with the run-specific calibration where configured. For scoring, images are clamped to $[-1,1]$ and bicubic-resized to $224\times224$. The main-paper ImageReward column reports the raw \emph{positive-prompt} score, not the calibrated reward or elderly-woman margin. For this evaluation, each saved output is clamped, then given a seeded reflect-padded translation of up to four pixels, random resized crop, and horizontal flip before the $224\times224$ resize.

\paragraph{Normalization.} In addition, the ImageReward was normalized w.r.t. our CelebA data to stabilize the learning procedure and make choice of strength parameter $\lambda$ easier. We pick $1024$ train samples from the dataset, score them with unnormalized ImageReward and normalized by mean and std.

\paragraph{PickScore reward.} PickScore-v1~\citep{kirstain2023pickapic} is a CLIP-H model finetuned on Pick-a-Pic preference comparisons. Its image--text similarity could provide $S$ in Eq.~\ref{eq:celeba-prompt-reward}, but PickScore is not the training reward in these three runs: we use it only for post-hoc evaluation. The reported PickScore column is the raw positive-prompt similarity, with no subtraction of the elderly-woman negative prompt. We apply the same seeded augmentation to each saved output: clamp to $[-1,1]$, reflect-padded translation of up to four pixels, random resized crop, and horizontal flip, followed by bicubic resizing to $224\times224$ for scoring. 
 \section{Experimental details}\label{app:exp-details}
\label{app:image-protocols}

Tables~\ref{tab:tsbm-hyperparameters} and~\ref{tab:dsbm-hyperparameters}
summarize the hyperparameters of the \tsbm{} and pretrained DSBM models respectively.
Here $\epsilon$ is the diffusion coefficient in the SDE. Gaussian noise is omitted on the final sampling step for image experiments. Parameter counts are for each trainable network, excluding its frozen pretrained counterpart. The EMA states for Exponential Moving Average. The Adam is used for optimization \cite{kingma2015adam}. The SDE was simulated using Euler Maryama and then "lean" adjoint ODE was integrated using the SDE trajectory and Euler integrator. Image data the was normalized to $[-1, 1]$. For DSBM hyperparamter names explanation visit corresponding Appendix in \cite{shi2023dsbm}.

For image experiments we've used the official DSBM github repository:

\begin{center}
    \url{https://github.com/yuyang-shi/dsbm-pytorch}
\end{center}

\begin{table}[H]
\centering
\caption{TSBM hyperparamters.}
\label{tab:tsbm-hyperparameters}
\footnotesize
\setlength{\tabcolsep}{4pt}
\renewcommand{\arraystretch}{1.13}
\begin{tabular}{@{}p{0.27\linewidth}p{0.20\linewidth}p{0.24\linewidth}p{0.21\linewidth}@{}}
\toprule
Hyperparameter & Toy 2D & Colored MNIST & CelebA \\
\midrule
Controller gradient steps / stage & 1,000 & 1,000 & 1,000 \\
Corrector gradient steps / stage & 782 & 1,000 & 1,000 \\
Training batch size & 128 & 32 & 32 \\
Controller replay size & None (fresh batch for every step) & 512 (red); 1,024 (strokes) & 1,024 \\
Controller replay refresh & Every step & Every 50 steps & Every 100 steps \\
Corrector data-set size / stage & 100,000 endpoints & 4,096 endpoints & 8,192 endpoints \\
EMA rate & None & 0.99 & 0.99 \\
Optimizer and learning rate & Adam, $10^{-3}$ & Adam, $10^{-5}$ (red); $3\cdot10^{-5}$ (strokes) & Adam, $10^{-5}$ \\
Controller / corrector network & Two-hidden-layer SiLU MLP, width 64 & U-Net, 128 base channels, 2 residual blocks / scale & U-Net, 128 base channels, 2 residual blocks / scale \\
Controller parameters & 4,546 & $\approx39.6$M & $\approx38.3$M \\
Corrector parameters & 4,482 & $\approx39.6$M & $\approx38.3$M \\
Diffusion coefficient $\epsilon$ & 1 & 1 & 1 \\
Reward strength $\lambda$ & 1 & 1,000 & 30 \\
Controller / corrector damping & 1 / 1 & 0.9 / 0.9 (red, thick); 1 / 1 (thin) & 0.9 / 0.9 \\
NFE & 40 & 30 & 100 \\
CondSNIS particles $K$ per input & 64 & 8 & 4 \\
CondSMC particles $N$ per input & 64 & 8 & 4 \\
TSBM stages & 20 & 5 & 5 \\
Training wall clock time & 15min & 1h & 13h \\
Hardware & CPU & 2 GPU & 2 GPU \\
\bottomrule
\end{tabular}
\end{table}

\begin{table}[H]
\centering
\caption{DSBM pretraining hyperparamters.}
\label{tab:dsbm-hyperparameters}
\footnotesize
\setlength{\tabcolsep}{4pt}
\renewcommand{\arraystretch}{1.13}
\begin{tabular}{@{}p{0.27\linewidth}p{0.20\linewidth}p{0.24\linewidth}p{0.21\linewidth}@{}}
\toprule
Hyperparameter & Toy 2D & Colored MNIST & CelebA \\
\midrule
Forward / backward gradient steps, initial IMF iter & 10,000 & 100,000 & 200,000 \\
Forward / backward gradient steps & 10,000 & 5,000  & 20,000 \\
Training batch size & 128 & 128 & 64 \\
Coupling cache size & 100,000 pairs & 10,000 pairs & 1,000 pairs \\
Coupling cache refresh & Once / direction / cycle & Every 1,000 steps & Every 1,000 steps \\
EMA rate & None & 0.999 & 0.999 \\
Optimizer and learning rate & Adam, $10^{-4}$ & Adam, $10^{-4}$ & Adam, $10^{-4}$ \\
Forward / backward network & Two-hidden-layer SiLU MLP, width 128 & U-Net, 128 base channels, 2 residual blocks / scale & U-Net, 128 base channels, 2 residual blocks / scale \\
Forward model parameters & 17,282 & $\approx39.6$M & $\approx38.3$M \\
Backward model parameters & 17,282 & $\approx39.6$M & $\approx38.3$M \\
Diffusion coefficient $\epsilon$ & 1 & 1 & 1 \\
NFE & 40 & 30 & 100 \\
DSBM stages used & 20 & 20 & 20 \\
\bottomrule
\end{tabular}
\end{table}

 \clearpage
\section{Additional qualitative results}
\label{app:additional-qualitative-results}
\raggedbottom

\subsection{Colored MNIST}

This section gives the complete qualitative Colored MNIST comparisons corresponding to Figure~\ref{fig:colored-mnist-qualitative}. In addition to the methods retained in the main text, the grids show 16 rather than six aligned sources and include DPS with $\gamma=0.5$ and CondSNIS with $K=8$ proposals per fixed source.

\begin{figure}[H]
\centering
\setlength{\tabcolsep}{0pt}
\begin{tabular}{@{}r@{\hspace{4pt}}p{0.78\linewidth}@{}}
\textbf{Input} & \includegraphics[width=\linewidth,trim=192bp 274bp 544bp 0bp,clip]{figures/colored_mnist_red_chroma_target_source.png}\\[-2pt]
\textbf{TSBM (ours)} & \includegraphics[width=\linewidth,trim=192bp 240bp 544bp 34bp,clip]{figures/colored_mnist_red_chroma_target_source.png}\\[-2pt]
\textbf{DPS $\gamma=0.5$} & \includegraphics[width=\linewidth,trim=192bp 172bp 544bp 102bp,clip]{figures/colored_mnist_red_chroma_target_source.png}\\[-2pt]
\textbf{DPS $\gamma=1$} & \includegraphics[width=\linewidth,trim=192bp 138bp 544bp 136bp,clip]{figures/colored_mnist_red_chroma_target_source.png}\\[-2pt]
\textbf{DPS $\gamma=2$} & \includegraphics[width=\linewidth,trim=192bp 104bp 544bp 170bp,clip]{figures/colored_mnist_red_chroma_target_source.png}\\[-2pt]
\textbf{CondSNIS} & \includegraphics[width=\linewidth,trim=192bp 36bp 544bp 238bp,clip]{figures/colored_mnist_red_chroma_target_source.png}\\[-2pt]
\textbf{CondSMC} & \includegraphics[width=\linewidth,trim=192bp 0bp 544bp 272bp,clip]{figures/colored_mnist_red_chroma_target_source.png}
\end{tabular}
\caption{Full red-chroma qualitative comparison on Colored MNIST.}
\label{fig:colored-mnist-red-chroma-full}
\end{figure}

\begin{figure}[H]
\centering
\setlength{\tabcolsep}{0pt}
\begin{tabular}{@{}r@{\hspace{4pt}}p{0.78\linewidth}@{}}
\textbf{Input} & \includegraphics[width=\linewidth,trim=192bp 886bp 544bp 0bp,clip]{figures/colored_mnist_thick_target7_all_stages_source.png}\\[-2pt]
\textbf{TSBM (ours)} & \includegraphics[width=\linewidth,trim=192bp 716bp 544bp 170bp,clip]{figures/colored_mnist_thick_target7_all_stages_source.png}\\[-2pt]
\textbf{DPS $\gamma=0.5$} & \includegraphics[width=\linewidth,trim=192bp 172bp 544bp 714bp,clip]{figures/colored_mnist_thick_target7_all_stages_source.png}\\[-2pt]
\textbf{DPS $\gamma=1$} & \includegraphics[width=\linewidth,trim=192bp 138bp 544bp 748bp,clip]{figures/colored_mnist_thick_target7_all_stages_source.png}\\[-2pt]
\textbf{DPS $\gamma=2$} & \includegraphics[width=\linewidth,trim=192bp 104bp 544bp 782bp,clip]{figures/colored_mnist_thick_target7_all_stages_source.png}\\[-2pt]
\textbf{CondSNIS} & \includegraphics[width=\linewidth,trim=192bp 36bp 544bp 850bp,clip]{figures/colored_mnist_thick_target7_all_stages_source.png}\\[-2pt]
\textbf{CondSMC} & \includegraphics[width=\linewidth,trim=192bp 0bp 544bp 884bp,clip]{figures/colored_mnist_thick_target7_all_stages_source.png}
\end{tabular}
\caption{Full Thick-stroke qualitative comparison on Colored MNIST.}
\label{fig:colored-mnist-thick-q90-full}
\end{figure}

\begin{figure}[H]
\centering
\setlength{\tabcolsep}{0pt}
\begin{tabular}{@{}r@{\hspace{4pt}}p{0.78\linewidth}@{}}
\textbf{Input} & \includegraphics[width=\linewidth,trim=192bp 274bp 544bp 0bp,clip]{figures/colored_mnist_thin_target365_source.png}\\[-2pt]
\textbf{TSBM (ours)} & \includegraphics[width=\linewidth,trim=192bp 240bp 544bp 34bp,clip]{figures/colored_mnist_thin_target365_source.png}\\[-2pt]
\textbf{DPS $\gamma=0.5$} & \includegraphics[width=\linewidth,trim=192bp 172bp 544bp 102bp,clip]{figures/colored_mnist_thin_target365_source.png}\\[-2pt]
\textbf{DPS $\gamma=1$} & \includegraphics[width=\linewidth,trim=192bp 138bp 544bp 136bp,clip]{figures/colored_mnist_thin_target365_source.png}\\[-2pt]
\textbf{DPS $\gamma=2$} & \includegraphics[width=\linewidth,trim=192bp 104bp 544bp 170bp,clip]{figures/colored_mnist_thin_target365_source.png}\\[-2pt]
\textbf{CondSNIS} & \includegraphics[width=\linewidth,trim=192bp 36bp 544bp 238bp,clip]{figures/colored_mnist_thin_target365_source.png}\\[-2pt]
\textbf{CondSMC} & \includegraphics[width=\linewidth,trim=192bp 0bp 544bp 272bp,clip]{figures/colored_mnist_thin_target365_source.png}
\end{tabular}
\caption{Full Thin-stroke qualitative comparison on Colored MNIST.}
\label{fig:colored-mnist-thin-q10-full}
\end{figure}
\flushbottom

\subsection{CelebA}

This section gives the complete qualitative CelebA comparisons corresponding to Figure~\ref{fig:celeba-imagereward-grouped}. In addition to the methods retained in the main text we include DPS with $\gamma=0.5, 1, 2$ and CondSNIS with $K=4$ proposals per fixed source.

\newcommand{\celebaTransposedRows}[1]{\begin{tabular}{@{}r@{\hspace{4pt}}c@{}}
& \includegraphics[width=0.80\textwidth]{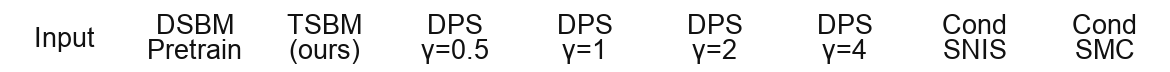}\\[-2pt]
\textbf{0} & \includegraphics[width=0.80\textwidth,trim=0bp 1950bp 0bp 0bp,clip]{#1}\\[-2pt]
\textbf{1} & \includegraphics[width=0.80\textwidth,trim=0bp 1820bp 0bp 130bp,clip]{#1}\\[-2pt]
\textbf{2} & \includegraphics[width=0.80\textwidth,trim=0bp 1690bp 0bp 260bp,clip]{#1}\\[-2pt]
\textbf{3} & \includegraphics[width=0.80\textwidth,trim=0bp 1560bp 0bp 390bp,clip]{#1}\\[-2pt]
\textbf{4} & \includegraphics[width=0.80\textwidth,trim=0bp 1430bp 0bp 520bp,clip]{#1}\\[-2pt]
\textbf{5} & \includegraphics[width=0.80\textwidth,trim=0bp 1300bp 0bp 650bp,clip]{#1}\\[-2pt]
\textbf{6} & \includegraphics[width=0.80\textwidth,trim=0bp 1170bp 0bp 780bp,clip]{#1}\\[-2pt]
\textbf{7} & \includegraphics[width=0.80\textwidth,trim=0bp 1040bp 0bp 910bp,clip]{#1}\\[-2pt]
\textbf{8} & \includegraphics[width=0.80\textwidth,trim=0bp 910bp 0bp 1040bp,clip]{#1}\\[-2pt]
\textbf{9} & \includegraphics[width=0.80\textwidth,trim=0bp 780bp 0bp 1170bp,clip]{#1}\\[-2pt]
\textbf{10} & \includegraphics[width=0.80\textwidth,trim=0bp 650bp 0bp 1300bp,clip]{#1}\\[-2pt]
\textbf{11} & \includegraphics[width=0.80\textwidth,trim=0bp 520bp 0bp 1430bp,clip]{#1}\\[-2pt]
\textbf{12} & \includegraphics[width=0.80\textwidth,trim=0bp 390bp 0bp 1560bp,clip]{#1}\\[-2pt]
\textbf{13} & \includegraphics[width=0.80\textwidth,trim=0bp 260bp 0bp 1690bp,clip]{#1}\\[-2pt]
\textbf{14} & \includegraphics[width=0.80\textwidth,trim=0bp 130bp 0bp 1820bp,clip]{#1}\\[-2pt]
\textbf{15} & \includegraphics[width=0.80\textwidth,trim=0bp 0bp 0bp 1950bp,clip]{#1}
\end{tabular}}

\begin{figure}[H]
\centering
\scriptsize
\setlength{\tabcolsep}{0pt}
\resizebox{0.7\linewidth}{!}{ \celebaTransposedRows{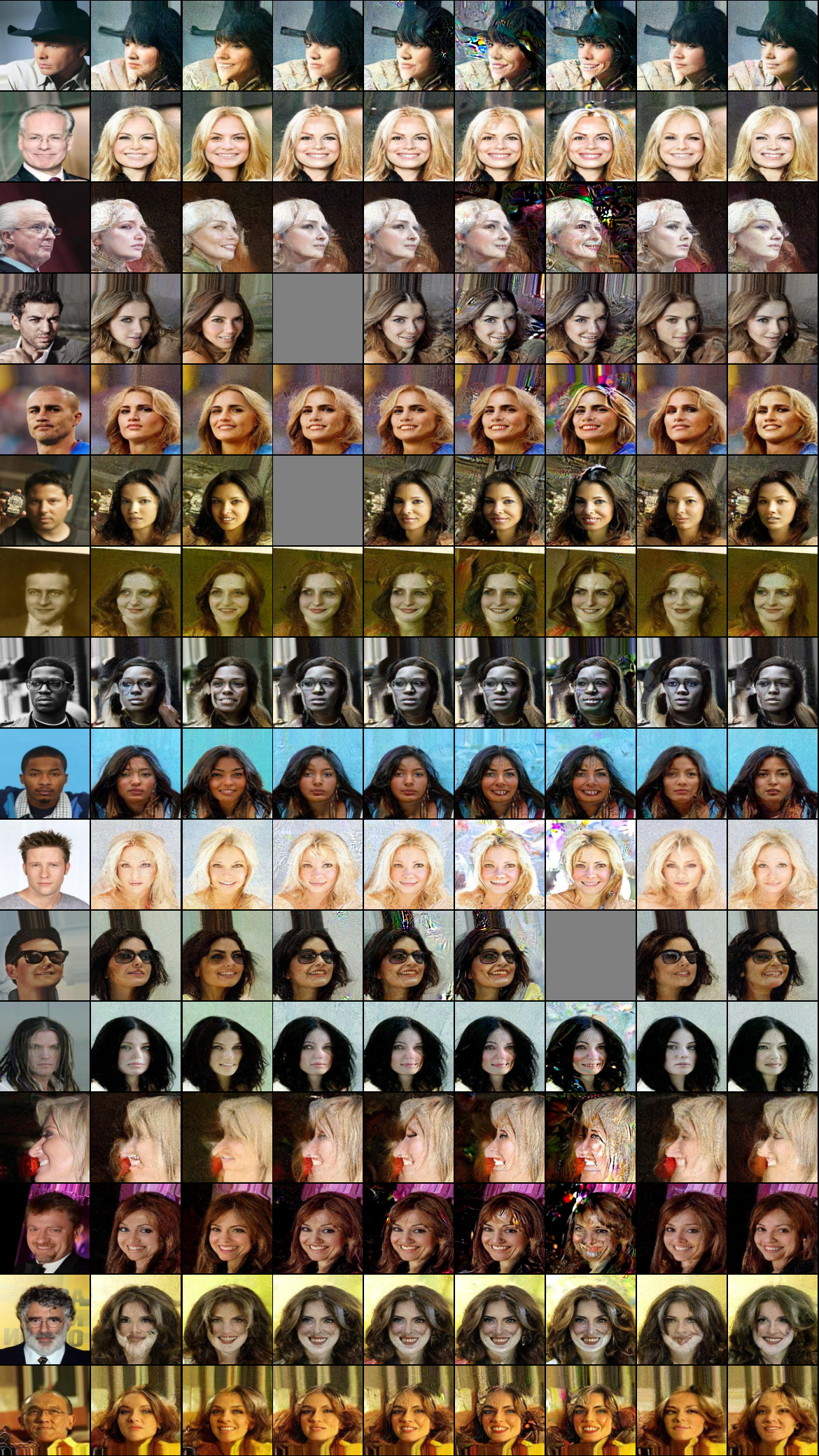}}
\caption{Full qualitative comparison for "Natural Toothy Smile" reward on CelebA. Gray outputs and visual artifacts are retained from the saved samples.}
\label{fig:celeba-smile-full}
\end{figure}

\clearpage
\begin{figure}[H]
\centering
\scriptsize
\setlength{\tabcolsep}{0pt}
\celebaTransposedRows{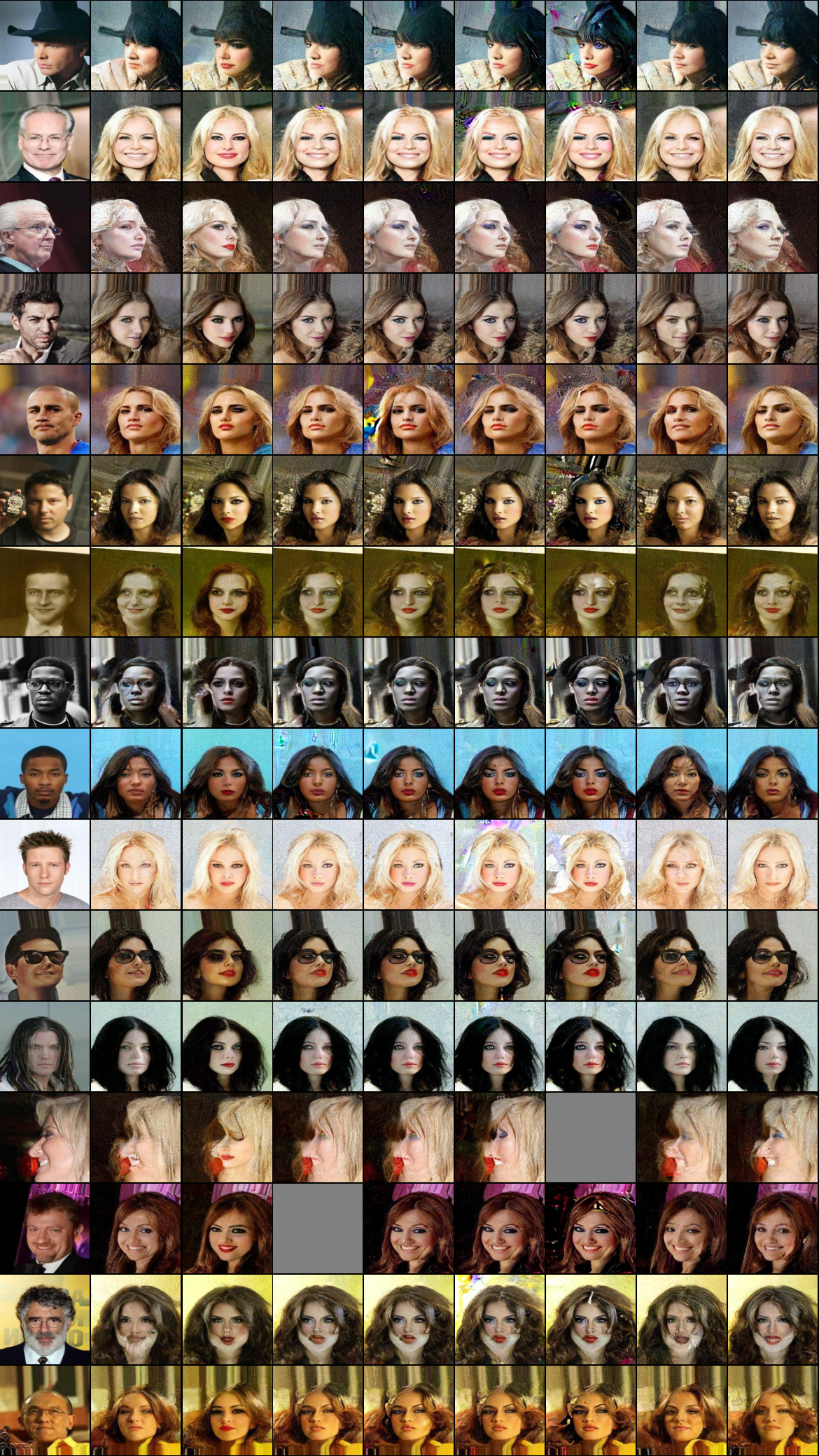}
\caption{Full qualitative comparison for "Heavy Makeup" reward on CelebA. Gray outputs and visual artifacts are retained from the saved samples.}
\label{fig:celeba-makeup-full}
\end{figure}

\clearpage
\begin{figure}[H]
\centering
\scriptsize
\setlength{\tabcolsep}{0pt}
\celebaTransposedRows{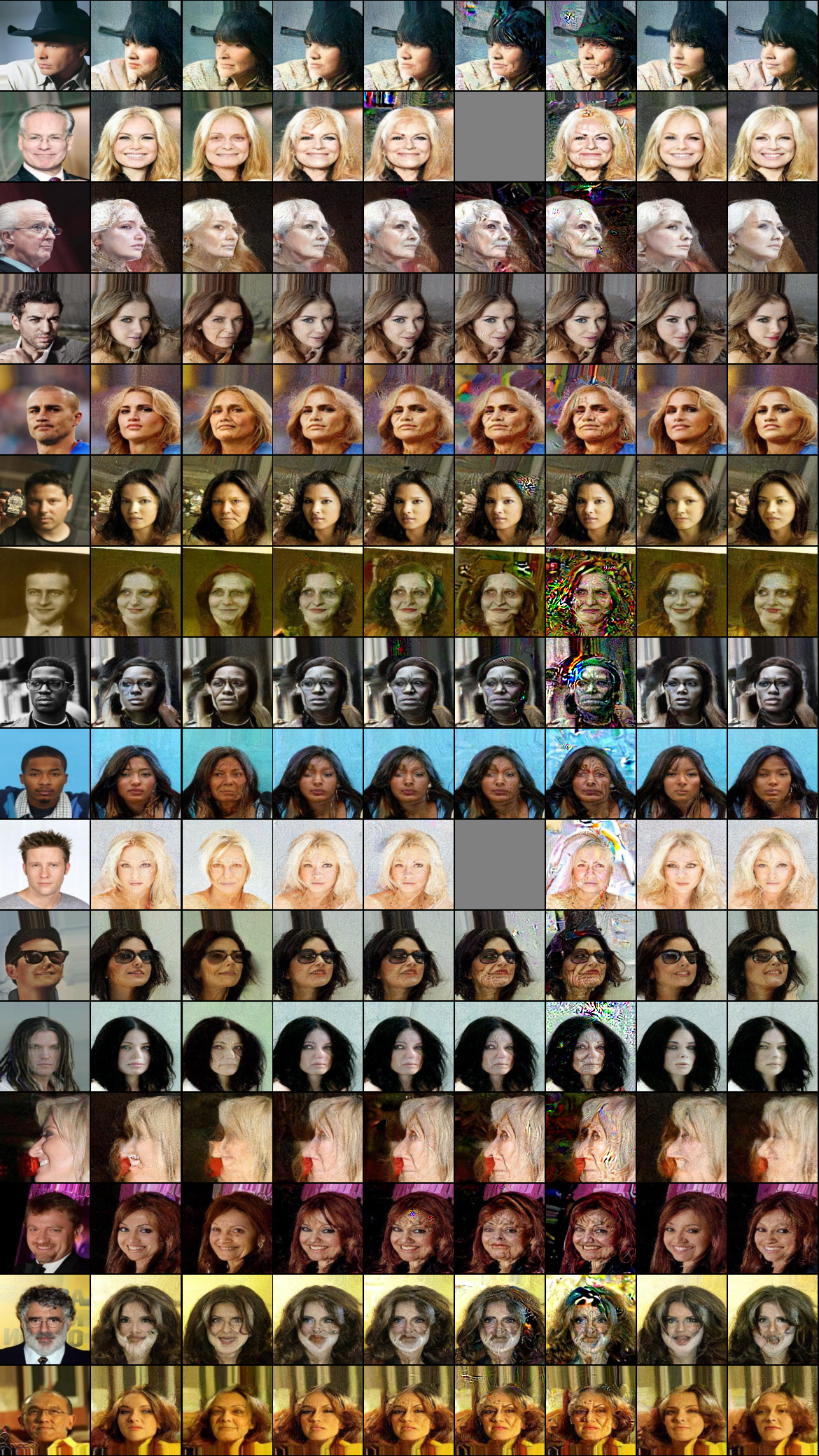}
\caption{Full qualitative comparison for "Elderly Woman" reward on CelebA.  Gray outputs and visual artifacts are retained from the saved samples.}
\label{fig:celeba-elderly-full}
\end{figure}
\flushbottom \clearpage
\section{Additional experiments}
\label{app:additional-experiments}

\subsection{Toy illustrative experiment}

\paragraph{Setup.} The source is an equally weighted mixture of eight Gaussians, and the original target is an equally weighted mixture of four Gaussians. We train the pretrain bridge with $20$ DSBM stages and $\sigma=1$ using $10^{5}$ training samples from each marginal. The reward changes the four target mixture weights from $(0.25,0.25,0.25,0.25)$ to $(0,0.2,0.2,0.6)$ and is kep with strength $\lambda=1$. We finetune the frozen DSBM for $20$ TSBM stages. Figure~\ref{fig:toy-quantitative} reports all methods on the same $10^{4}$ validation sources, with 40 Euler steps. Other experimental details are described in Appendix~\ref{app:exp-details}.

\paragraph{Results.} Figure~\ref{fig:toy-recovery} shows the original and tilted targets alongside samples from the pretrained DSBM and TSBM. While Figure~\ref{fig:toy-quantitative} shows quantitative evaluation w.r.t. tilted distribution: Total Variance, slide Wasserstein-$1$ distance and average cost of translation from source till target. One can see from quantitative results that TSBM shows the best target distributions fitting metrics, i.e., TV and sliced $W_1$, while outperfroming significantly both the inference time alignment method and TSBM without corrector. We also apply Sinkhorn to GT tilted validation samples to estimate the \textit{transport cost} of the GT tilted SB. TSBM achieves the closest cost to this reference, supporting recovery of the reference GT bridge's transport behavior. These results show that \textbf{TSBM} is the only evaluated method that both \textbf{closely matches the tilted target marginal} and \textbf{approximates the reference bridge's transport cost}.

\begingroup
\newsavebox{\toyPanelBox}
\newcommand{\toyPanel}[2]{\sbox{\toyPanelBox}{\includegraphics[
      width=0.49\linewidth,
      trim=#1bp 0bp #2bp 0bp,
      clip
    ]{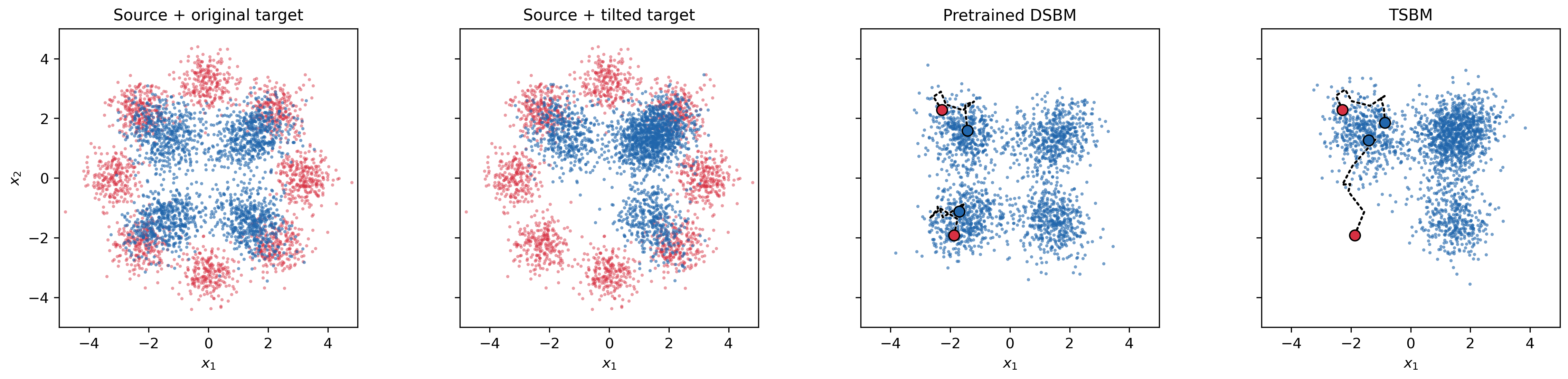}}\usebox{\toyPanelBox}\llap{\makebox[\wd\toyPanelBox][l]{\includegraphics[
      height=\ht\toyPanelBox,
      trim=0bp 0bp 1049.76bp 0bp,
      clip
    ]{figures/toy_2d_neural_four_panels.png}}}}

\begin{figure}[t]
\centering
\captionsetup{font=footnotesize}
\captionsetup[subfigure]{justification=centering}

\begin{subfigure}[t]{0.42\textwidth}
  \centering
  \vspace{0pt}

  \toyPanel{0}{835.2}\hfill
  \toyPanel{278.4}{556.8}\\[-1pt]
  \toyPanel{556.8}{278.4}\hfill
  \toyPanel{835.2}{0}

  \caption{}
  \label{fig:toy-recovery}
\end{subfigure}
\hfill
\begin{subfigure}[t]{0.56\textwidth}
  \centering
  \vspace{0pt}
\setlength{\tabcolsep}{2.2pt}
  \renewcommand{\arraystretch}{1.08}

  \begin{tabular}{@{}lrrr@{}}
    \toprule
    Method
    & TV $\downarrow$
    & \shortstack{sliced\\$W_1$ $\downarrow$}
    & Cost \\
    \midrule
    Exact Data Sinkhorn & -             & -             & 2.300 \\
    DSBM                & .354          & .788          & 1.604 \\
    DPS $\gamma=0.5$    & .309          & .560          & 1.785 \\
    DPS $\gamma=1$      & .281          & .493          & 1.858 \\
    DPS $\gamma=2$      & .254          & .441          & 1.896 \\
    DPS $\gamma=4$      & .234          & .402          & 1.933 \\
    SNIS        & .298          & .536          & 1.788 \\
    SMC         & .287          & .511          & 1.828 \\
TSBM (ours)         & \textbf{.117} & \textbf{.117} & \textbf{2.180} \\
    \bottomrule
  \end{tabular}

  \caption{}
  \label{fig:toy-metrics}
  \label{fig:toy-quantitative}
\end{subfigure}

\caption{\textbf{Two-dimensional toy experiment.}
(a) The first two panels overlay the red source with the blue
original and tilted targets; the remaining panels show terminal
samples and generation trajectories from DSBM and TSBM.Red and blue dots with black outlines mark trajectory starts and
endpoints, respectively; black dotted lines trace the trajectories.
(b) Total variation and sliced $W_1$ are measured against the
ground-truth tilted distribution. Cost denotes
$\mathbb{E}\lVert \hat{x}_1-x_0\rVert_2$.
Bold marks the best distributional metrics and the cost closest
to the Sinkhorn reference.
}
\label{fig:toy-results}
\end{figure}
\endgroup

\subsection{CelebA-128 inference time and working memory}
\label{app:celeba-inference-cost}

We benchmark inference in the representative CelebA $128\times128$ male-to-female translation setting with the "Natural Toothy Smile" reward at strength $\lambda=30$. The benchmark uses the corresponding stage-2 TSBM checkpoint, DPS with $\gamma=1$, and one NVIDIA A100-SXM4-80GB. Every trajectory uses 100 drift-network evaluations, i.e., NFE. Model and checkpoint loading, dataset construction, post-hoc metrics, serialization, and etc are excluded from the timed region, and CUDA is synchronized around each trial. The single-output protocol runs the algorithm to process one input image, while for CondSNIS and CondSMC this means that they run processing of $K$ parallel samples, i.e., on population. Table~\ref{tab:celeba-inference-cost} reports mean and standard deviation over 16 timed trials after one warm-up. The batch-16 protocol test the \textit{batched processing} and processes 16 input images and measures the time needed to produce one output for each image. CondSNIS and CondSMC use $K$ particles per input, with at most 16 trajectories evaluated simultaneously. Table~\ref{tab:celeba-inference-cost} reports the median total time over three timed repetitions, divided by 16 to give the time per output. Working memory is the peak increase in live PyTorch GPU tensor memory during a trial, measured relative to the allocation immediately before it starts. It measures the extra memory inference needs, rather than the model’s total GPU memory footprint.

\begin{table}[H]
\caption{\textbf{Inference cost on CelebA $128\times128$.} Single-output wall-clock time is the mean $\pm$ standard deviation over 16 trials. Batch-16 time is the median total time divided by 16 source outputs. Relative time uses the single-output TSBM mean. An asterisk ($^\ast$) indicates visual artifacts.}
\label{tab:celeba-inference-cost}
\centering
\small
\setlength{\tabcolsep}{4pt}
\begin{tabular}{lrrrrr}
\toprule
Method & $K$ & \shortstack{Single-output\\time (s)} & \shortstack{Batch-16\\s/output} & \shortstack{Relative to\\TSBM} & \shortstack{Working\\memory (GiB)} \\
\midrule
Pretrained DSBM & 1 & $1.468 \pm 0.010$ & 0.355 & $1.00\times$ & 0.180 \\
TSBM (ours) & 1 & $1.460 \pm 0.005$ & 0.357 & $1.00\times$ & 0.180 \\
DPS$^\ast$ $\gamma=1$ & 1 & $7.116 \pm 0.011$ & 1.836 & $4.88\times$ & 2.229 \\
SNIS & 4 & $2.089 \pm 0.004$ & 1.452 & $1.43\times$ & 0.379 \\
SNIS & 8 & $3.094 \pm 0.001$ & 2.905 & $2.12\times$ & 0.758 \\
SNIS & 16 & $5.794 \pm 0.002$ & 5.811 & $3.97\times$ & 1.512 \\
SMC & 4 & $4.282 \pm 0.006$ & 3.629 & $2.93\times$ & 0.383 \\
SMC & 8 & $7.431 \pm 0.002$ & 7.256 & $5.09\times$ & 0.767 \\
SMC & 16 & $14.459 \pm 0.004$ & 14.512 & $9.91\times$ & 1.527 \\
\bottomrule
\end{tabular}
\end{table}

TSBM has essentially the same inference cost as the pretrained bridge because it uses the same network architecture and requires neither reward evaluation nor differentiation at sampling time. In the single-output protocol, DPS is $4.88\times$ slower and requires $2.229$~GiB of working memory, compared with $0.180$~GiB for TSBM. Its larger workspace comes from differentiating through the endpoint prediction and reward model at every step. SNIS and SMC costs increase with $K$ because all particles are propagated for each fixed input. Their working memory grows approximately linearly with $K$. SMC is slower than SNIS at a fixed $K$ because it additionally evaluates incremental rewards and performs sequential weighting and resampling. 

All methods are measured in one process that keeps the DSBM, TSBM, and reward models resident. Consequently, working memory is the appropriate method comparison, whereas the absolute process peak is not a standalone deployment-memory requirement. We define:
\[
M_{\mathrm{working}}
=
M_{\mathrm{peak\ allocated\ during\ trial}}
-
M_{\mathrm{allocated\ immediately\ before\ trial}}.
\]

\subsection{Colored MNIST Red-Chroma. Static TSBM corrector and TSBM stages ablation.}\label{app:cmnist-red-chroma-static-corrector-ablation}

In this section we provide additional analysis for the Colored MNIST translation with "red chroma" reward. We assert the stage-wise dynamics and convergence of \tsbm{} in Figure~\ref{fig:red-chroma-iteration-curves}, which tracks reward and red target hue error, i.e., deviation of translated image HUE from the target red one, across five TSBM stages, which asserts how much the model does follow the reward. Furthermore, we compare the full method (\tsbm{}) with the no-corrector ablation. In that case, \tsbm{} corrector isn't trained and $h^{(k)}=\hold$, while method is trained with the same computational budget in terms of controller optimization. Both metrics are evaluated on the same $1,024$ fixed sources with 30 solver steps. Lower values are better for both reward and red-target hue error. Finally, we present qualitative evaluation of \tsbm{} vs \tsbm{} with static corrector in Figure~\ref{fig:red-chroma-last-sixteen}.

\begin{figure}[H]
\centering
\includegraphics[width=\textwidth]{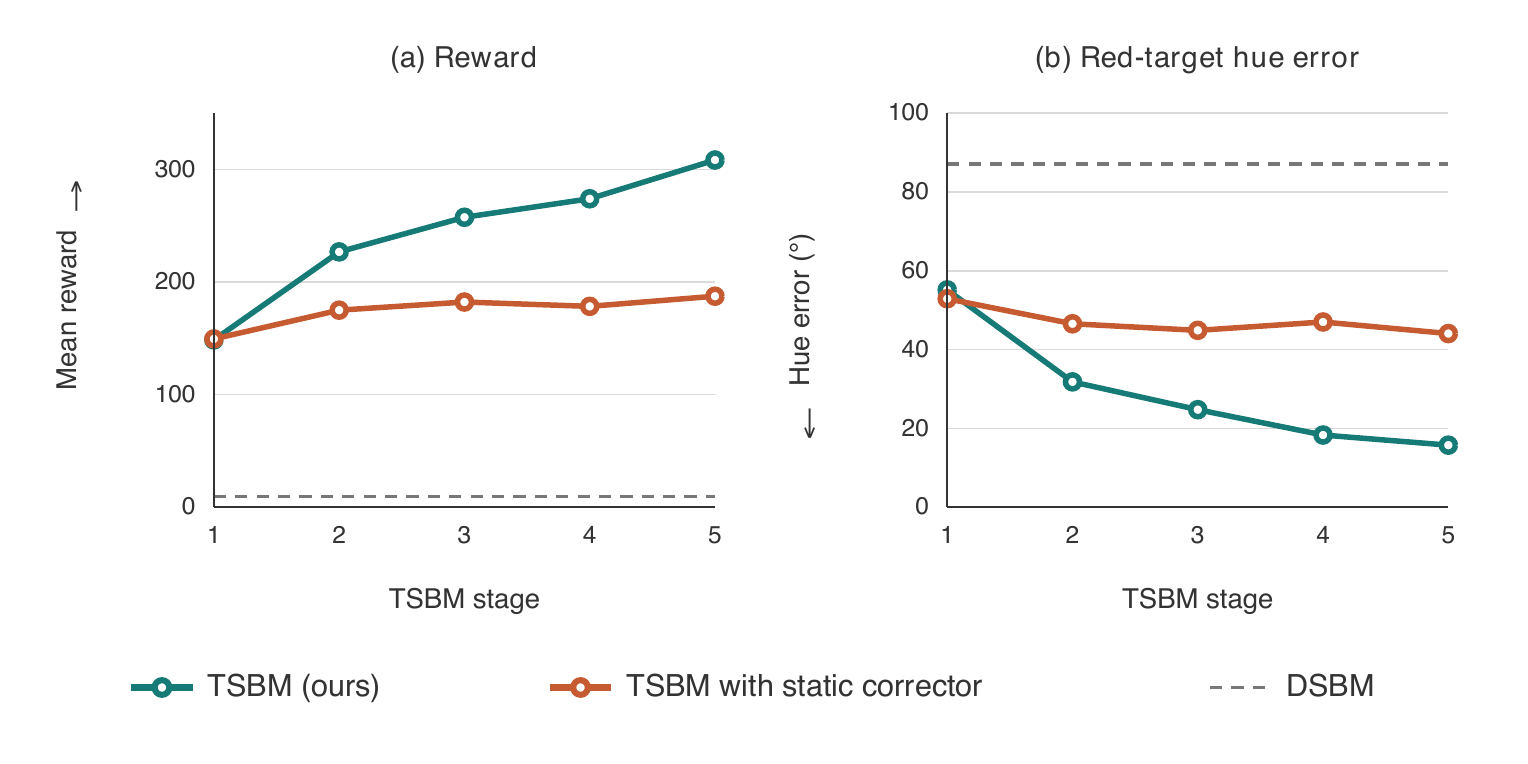}
\caption{Quantitative comparison of \tsbm{} and \tsbm{} with static corrector on Colored MNIST dataset with "red chroma" reward. The left panel shows mean terminal reward and the right panel the red-target hue error. Solid curves are TSBM and the no-corrector ablation, while dashed gray lines mark the frozen Pretrained DSBM result.}
\label{fig:red-chroma-iteration-curves}
\end{figure}

One can see that \tsbm{} improves both the reward and hue error with iteration, while \tsbm{} with static corrector does improve with iterations, but much slower than regular \tsbm{}, see Figure~\ref{fig:red-chroma-iteration-curves}. Figure~\ref{fig:red-chroma-last-sixteen} confirms the idea that regular \tsbm{} follows the reward noticeably better than \tsbm{} with static corrector. These results supports the theory and the need for accurate corrector $h^{(k)}$.

\begin{figure}[H]
\centering
\setlength{\tabcolsep}{0pt}
\begin{tabular}{@{}p{0.17\linewidth}@{\hspace{4pt}}p{0.80\linewidth}@{}}
\raggedleft\textbf{Input} & \includegraphics[width=\linewidth,trim=736bp 274bp 0bp 0bp,clip]{figures/colored_mnist_red_chroma_target_source.png}\\[-2pt]
\raggedleft\textbf{TSBM} & \includegraphics[width=\linewidth,trim=736bp 240bp 0bp 34bp,clip]{figures/colored_mnist_red_chroma_target_source.png}\\[-2pt]
\raggedleft\textbf{\shortstack[r]{TSBM with\\static corrector}} & \includegraphics[width=\linewidth,trim=736bp 206bp 0bp 68bp,clip]{figures/colored_mnist_red_chroma_target_source.png}
\end{tabular}
\caption{Qualitative comparison of \tsbm{} and \tsbm{} with static corrector samples on Colored MNIST dataset with "red chroma" reward.}
\label{fig:red-chroma-last-sixteen}
\end{figure}

\subsection{Colored MNIST Thick-target. TSBM stages ablation.}\label{app:cmnist-thick-stages-ablation}

In this section we provide additional analysis on TSBM dynamics across stages on the Colored MNIST dataset with "thick stroke" reward. We train perform the 20 \tsbm{} stages. In Figure~\ref{fig:thick-target7-iteration-curves} we show the reward, stroke thickness as \textit{reward optimization metrics}  and  hue error w.r.t. input as \textit{fidelity metric}. 

One can see that \tsbm{} improves reward across the \tsbm{} stages, while most of the gains are received in the first several stages. Furthermore, the actual thickness target seems almost achieved in the first \tsbm{} stages, however it indeed improves within \tsbm{} later stages. The hue error seems somewhat stochastic, however it is almost always within $13^\circ$ and $16^\circ$ degrees and doesn't diverge much. We assume that its stochasticy is the product of stocastic evaluation and reward side effects.

\begin{figure}[H]
\centering
\begin{subfigure}[t]{\textwidth}
\centering
\includegraphics[width=\linewidth]{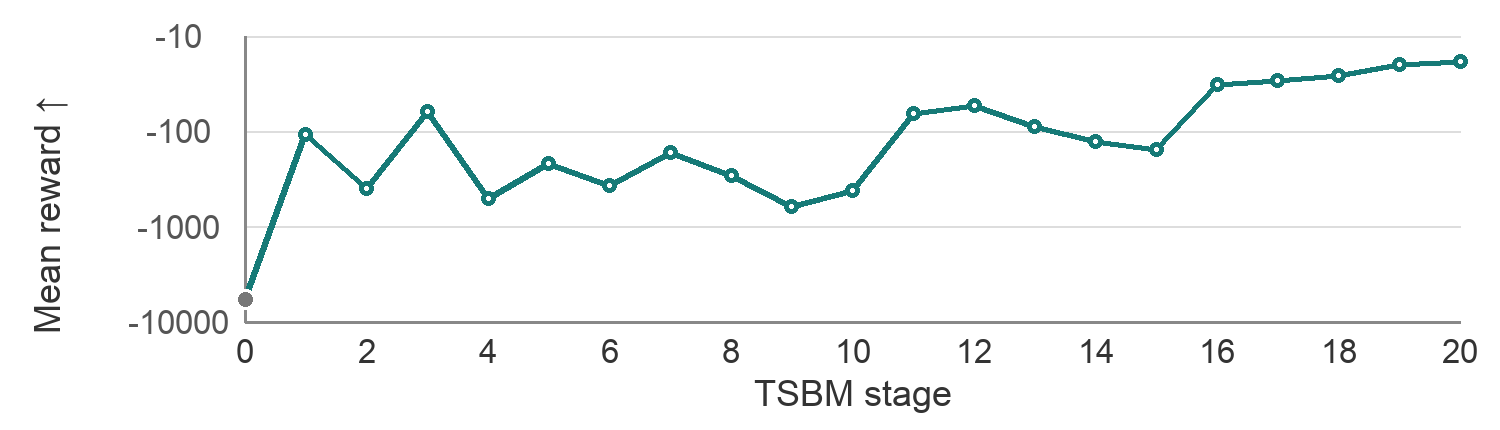}
\caption{Mean reward $\uparrow$; logarithmic magnitude scale.}
\end{subfigure}

\begin{subfigure}[t]{\textwidth}
\centering
\includegraphics[width=\linewidth]{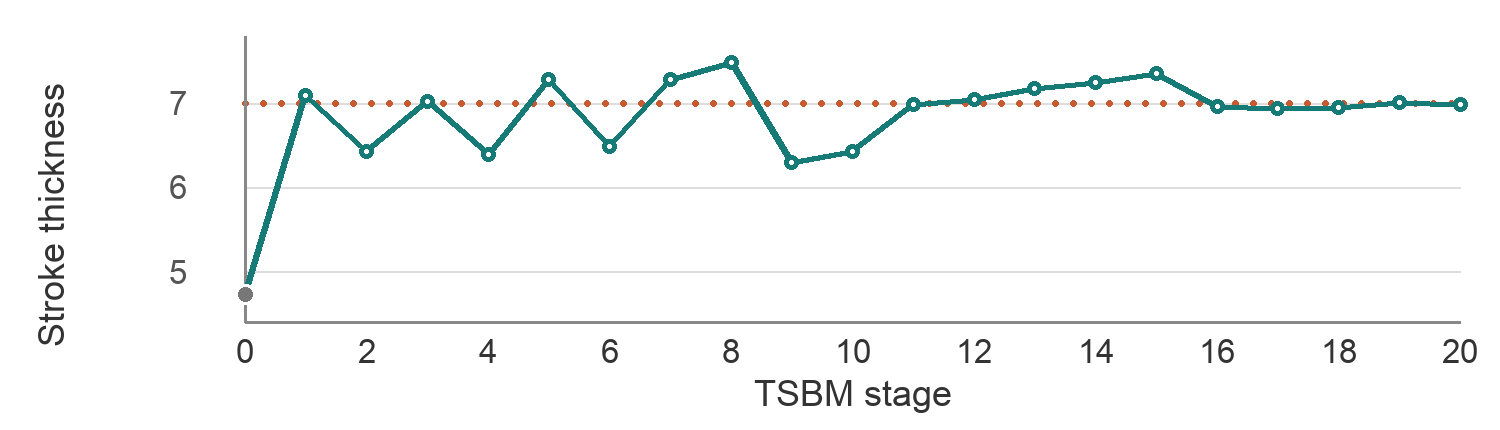}
\caption{Stroke thickness.}
\end{subfigure}

\begin{subfigure}[t]{\textwidth}
\centering
\includegraphics[width=\linewidth]{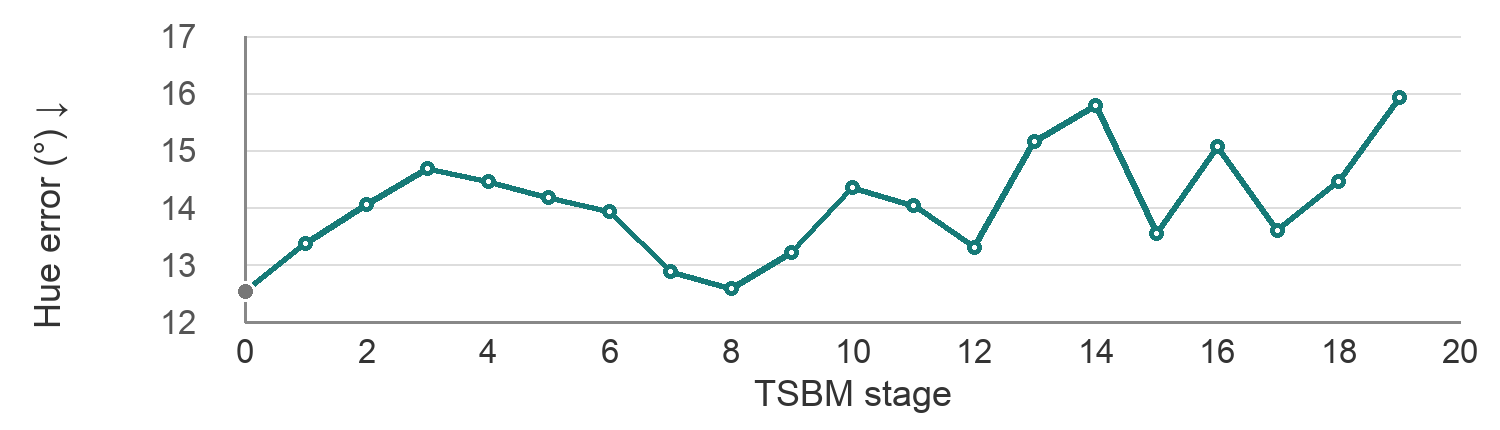}
\caption{Foreground hue error $\downarrow$.}
\end{subfigure}

\medskip
{\normalsize
\textcolor[HTML]{167A77}{\rule{1.5em}{1.4pt}}\,TSBM (ours)\hspace{1em}
\tikz[baseline=-0.6ex]{\fill[gray] (0,0) circle (0.8mm);}\,Pretrained DSBM (stage 0 reference)\hspace{1em}
\textcolor[HTML]{C65B32}{\tikz[baseline=-0.6ex]{\draw[dotted,line width=1.4pt] (0,0)--(0.45,0);}}\,Target thick stroke\par}
\caption{
Quantitative analysis of different \tsbm{} stages on Colored MNIST dataset with "thick stroke" reward. The panels show mean reward, stroke thickness, and foreground hue error over $1,024$ fixed test source images. The gray marker is the frozen pretrained DSBM. The dotted line marks the stroke-thickness target.}
\label{fig:thick-target7-iteration-curves}
\end{figure}

\subsection{CelebA additional quantitative analysis}
\label{app:celeba-additional-quantitative}

Here we present additional metrics complementing the study in Table~\ref{tab:celeba-imagereward-grouped} for CelebA dataset with all the rewards. Table~\ref{tab:celeba-additional-quantitative} shows source-to-output distances, i.e., MSE and LPIPS \cite{zhang2018lpips}, and two additional prompt-alignment scores, i.e., HPSV2.1 \cite{wu2023hpsv2} and CLIPScore \cite{hessel2021clipscore}, which recieve data preprocessed the same way as for ImageReward and PickScore, see Appendix~\ref{app:reward-celeba}. MSE and LPIPS measure how far an output $y$ moves from its aligned input $x$, i.e., $\text{MSE}(x, y)$ and $\text{LPIPS}(x, y)$. 

One can see that \tsbm{} and DPS $\gamma=4$ are the two best methods by HPS and CLIPScore. This confirms the strong \tsbm{} generalization properties among several rewards, but shows DPS $\gamma=4$ as competitor. However, as we noticed before DPS methods, especially with higher $\gamma$ introduce a significant level of artifacts, see Figures~\ref{fig:celeba-elderly-full}, \ref{fig:celeba-smile-full}, \ref{fig:celeba-makeup-full}, \ref{fig:celeba-imagereward-grouped}.  Across all three prompts, TSBM has the largest source-to-output MSE but lower LPIPS than every DPS variant. It therefore changes more pixels while remaining closer to the input under the perceptual feature metric. CondSNIS and CondSMC have smaller input-to-output distances but also weaker prompt scores, so those distances alone do not indicate better edits.

\begin{table}[H]
\centering
\caption{Additional CelebA quantitative results. MSE and LPIPS compare each output with its input, smaller values indicate less change. HPS v2.1 and CLIPScore use the same prompts as in Appendix~\ref{app:reward-celeba} with the same translation, crop, and horizontal-flip augmentation, larger values indicate better prompt alignment. \textbf{Bold} and \underline{underline} marks the largest and second largest HPS or CLIPScore mean within each prompt. ($^\ast$) denotes the DPS settings marked for visual artifacts in Table~\ref{tab:celeba-imagereward-grouped}.}
\label{tab:celeba-additional-quantitative}
\small
\setlength{\tabcolsep}{9pt}
\renewcommand{\arraystretch}{0.94}
\begin{tabular}{@{}lrrrr@{}}
\toprule
Method & \shortstack{MSE\\input-output} & \shortstack{LPIPS\\input-output} & \shortstack{HPS v2.1 $\uparrow$} & \shortstack{CLIPScore $\uparrow$} \\
\midrule
\multicolumn{5}{@{}l}{\textbf{Natural Toothy Smile}} \\
Pretrained DSBM & 0.1677 & 0.2983 & 0.1806 & 19.6923 \\
TSBM (ours) & 0.1923 & 0.3074 & \textbf{0.2012} & \underline{21.5642} \\
DPS $\gamma=0.5$ & 0.1815 & 0.3363 & 0.1867 & 20.2365 \\
DPS$^\ast$ $\gamma=1$ & 0.1827 & 0.3409 & 0.1928 & 20.8415 \\
DPS$^\ast$ $\gamma=2$ & 0.1843 & 0.3483 & 0.1982 & 21.3689 \\
DPS$^\ast$ $\gamma=4$ & 0.1901 & 0.3653 & \underline{0.1997} & \textbf{21.6390} \\
CondSNIS & 0.1676 & 0.2981 & 0.1830 & 19.9052 \\
CondSMC & 0.1682 & 0.2975 & 0.1890 & 20.4165 \\
\midrule
\multicolumn{5}{@{}l}{\textbf{Heavy Makeup}} \\
Pretrained DSBM & 0.1677 & 0.2983 & 0.1540 & 18.1912 \\
TSBM (ours) & 0.1955 & 0.3159 & \underline{0.1858} & \textbf{21.6400} \\
DPS $\gamma=0.5$ & 0.1828 & 0.3402 & 0.1687 & 19.7912 \\
DPS$^\ast$ $\gamma=1$ & 0.1820 & 0.3460 & 0.1761 & 20.4257 \\
DPS$^\ast$ $\gamma=2$ & 0.1848 & 0.3532 & 0.1825 & 20.8767 \\
DPS$^\ast$ $\gamma=4$ & 0.1827 & 0.3528 & \textbf{0.1872} & \underline{21.1343} \\
CondSNIS & 0.1677 & 0.2983 & 0.1564 & 18.4695 \\
CondSMC & 0.1686 & 0.3021 & 0.1653 & 19.5327 \\
\midrule
\multicolumn{5}{@{}l}{\textbf{Elderly Woman}} \\
Pretrained DSBM & 0.1677 & 0.2983 & 0.1885 & 18.8912 \\
TSBM (ours) & 0.1878 & 0.3018 & \textbf{0.2003} & \textbf{23.4400} \\
DPS $\gamma=0.5$ & 0.1722 & 0.3160 & 0.1877 & 20.3012 \\
DPS$^\ast$ $\gamma=1$ & 0.1725 & 0.3279 & 0.1903 & 21.2336 \\
DPS$^\ast$ $\gamma=2$ & 0.1761 & 0.3452 & 0.1935 & 22.0910 \\
DPS$^\ast$ $\gamma=4$ & 0.1859 & 0.3811 & \underline{0.1953} & \underline{22.7870} \\
CondSNIS & 0.1675 & 0.2987 & 0.1884 & 19.2562 \\
CondSMC & 0.1666 & 0.2998 & 0.1875 & 19.8498 \\
\bottomrule
\end{tabular}
\end{table}

\clearpage
\subsection{Additional CelebA rewards}
\label{app:celeba-additional-rewards}

We evaluate two further ImageReward preferences on CelebA $128\times128$ male-to-female translation. Both use the contrastive reward in Eq.~\ref{eq:celeba-prompt-reward} with $\lambda=30$. The positive and negative prompts are, respectively:
\begin{itemize}
\item \emph{Aesthetics}: ``a sharp, clean, well-exposed photograph of a face'' and ``a blurry, noisy, distorted, low-quality photograph of a face''.
\item \emph{Eyeglasses with Thick Frames}: ``a natural portrait of a person wearing eyeglasses with thick visible frames'' and ``a natural portrait of a person without eyeglasses''.
\end{itemize}
As in the main CelebA experiment, ImageReward and PickScore are evaluated on the positive prompt alone, after translation, crop, and horizontal-flip augmentation. CLIP-IQA measures naturalness. Each table reports means over the same \textit{128} fixed inputs within its reward experiment, using 100 bridge steps. CondSNIS and CondSMC use four particles per input. The TSBM was ran for 5 stages. 

\begin{table}[H]
\centering
\caption{Aesthetics reward on CelebA-128. ImageReward and PickScore use the positive prompt with translation, crop, and horizontal-flip augmentation; CLIP-IQA measures naturalness. Means are over 128 fixed inputs, with TSBM evaluated after stage 5. All metrics are higher-is-better. Bold marks the largest mean. ($^\ast$) indicates visible artifacts in the saved samples.}
\label{tab:celeba-aesthetics}
\small
\setlength{\tabcolsep}{7pt}
\begin{tabular}{@{}lrrr@{}}
\toprule
Method & ImageReward $\uparrow$ & PickScore $\uparrow$ & CLIP-IQA $\uparrow$ \\
\midrule
Pretrained DSBM & -0.2353 & 18.0223 & 0.5446 \\
\midrule
TSBM (ours) & \textbf{0.6283} & \textbf{18.6727} & \textbf{0.5738} \\
\midrule
DPS$^\ast$ $\gamma=0.5$ & 0.1858 & 18.1216 & 0.5453 \\
DPS$^\ast$ $\gamma=1$ & 0.2383 & 17.9490 & 0.4901 \\
DPS$^\ast$ $\gamma=2$ & 0.5524 & 17.5891 & 0.3833 \\
DPS$^\ast$ $\gamma=4$ & 0.5592 & 17.2191 & 0.2649 \\
CondSNIS & -0.1098 & 18.0474 & 0.5684 \\
CondSMC & -0.0164 & 18.2051 & 0.5632 \\
\bottomrule
\end{tabular}
\end{table}

\begin{table}[H]
\centering
\caption{Eyeglasses with thick frames reward on CelebA-128. Metrics and augmentation follow Table~\ref{tab:celeba-aesthetics}. Means are over 128 fixed inputs, with TSBM evaluated after stage 5. All metrics are higher-is-better. Bold marks the largest mean. ($^\ast$) indicates visible artifacts in the saved samples.}
\label{tab:celeba-eyeglasses-thick-frames}
\small
\setlength{\tabcolsep}{7pt}
\begin{tabular}{@{}lrrr@{}}
\toprule
Method & ImageReward $\uparrow$ & PickScore $\uparrow$ & CLIP-IQA $\uparrow$ \\
\midrule
Pretrained DSBM & -1.8730 & 17.2151 & 0.5446 \\
\midrule
TSBM (ours) & \textbf{-0.0468} & \textbf{18.1322} & 0.4754 \\
\midrule
DPS$^\ast$ $\gamma=0.5$ & -1.8729 & 17.1511 & 0.5538 \\
DPS$^\ast$ $\gamma=1$ & -1.8260 & 17.0452 & 0.5499 \\
DPS$^\ast$ $\gamma=2$ & -1.7740 & 16.9616 & 0.5365 \\
DPS$^\ast$ $\gamma=4$ & -1.7151 & 16.8643 & 0.4952 \\
CondSNIS & -1.8324 & 17.1915 & 0.5462 \\
CondSMC & -1.8651 & 17.1799 & \textbf{0.5547} \\
\bottomrule
\end{tabular}
\end{table}

TSBM has the largest ImageReward and PickScore means for both rewards. For aesthetics it also has the largest CLIP-IQA naturalness mean. For eyeglasses, CLIP-IQA falls from 0.5446 for the pretrained DSBM to 0.4754 for TSBM, while CondSMC has the largest naturalness mean at 0.5547. The full sample comparisons in Figures~\ref{fig:celeba-aesthetics-full} and~\ref{fig:celeba-eyeglasses-thick-frames-full} show the corresponding edits and the artifacts produced by DPS. The task of adding eyeglasses to a person appears to be particularly challenging: the baselines achieve poor reward alignment under both the training ImageReward and evaluation PickScore metrics. While TSBM substantially improves over the baselines, it does not fully solve the problem, as shown in Figure~\ref{fig:celeba-eyeglasses-thick-frames-full}.

\clearpage
\begin{figure}[H]
\centering
\scriptsize
\setlength{\tabcolsep}{0pt}
\resizebox{0.7\linewidth}{!}{\celebaTransposedRows{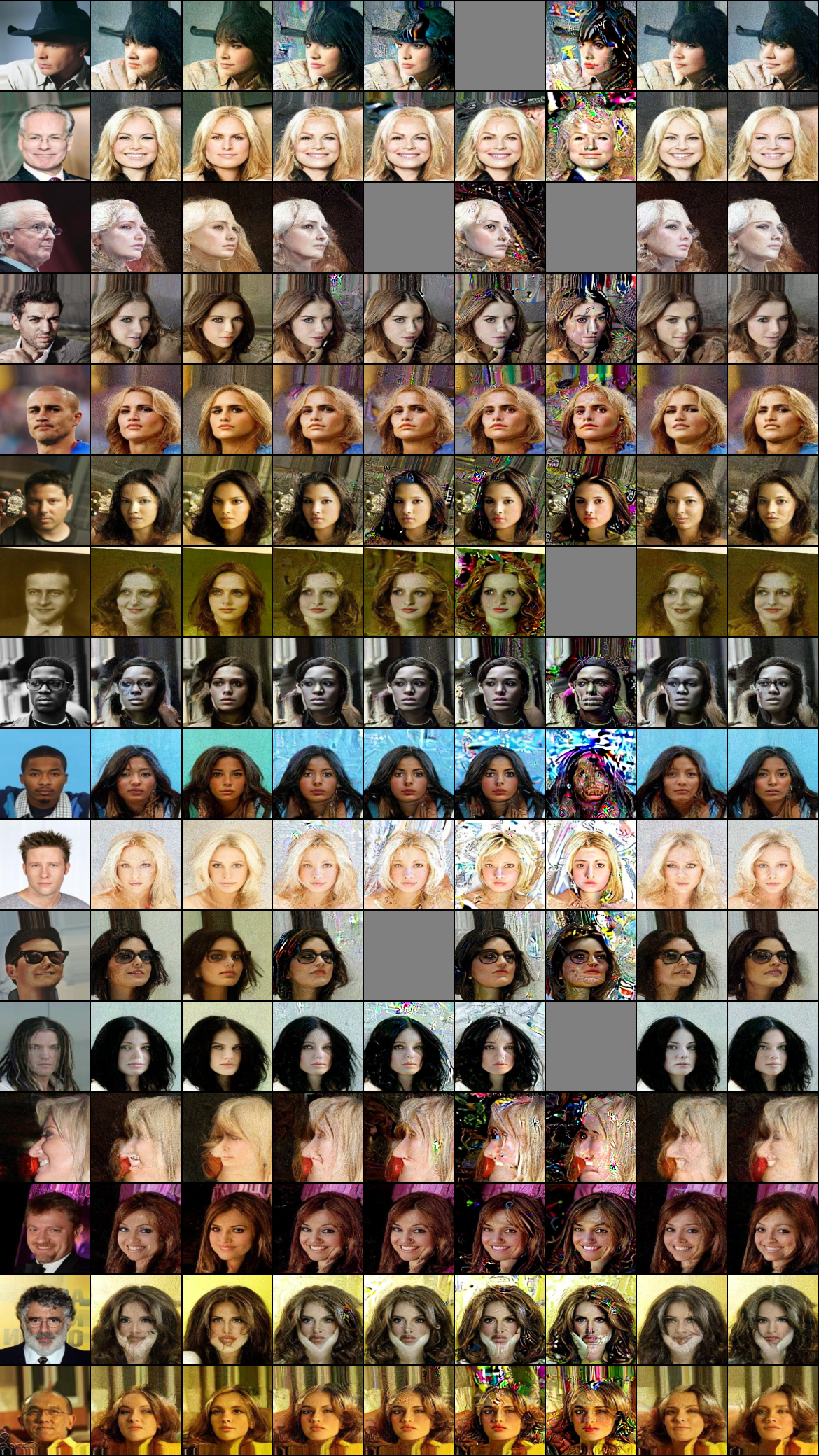}}
\caption{Full qualitative comparison for the aesthetics reward on CelebA. Rows show the first 16 fixed sources; columns show the input, pretrained DSBM, TSBM (ours) at stage 10, DPS with $\gamma\in\{0.5,1,2,4\}$, CondSNIS, and CondSMC. Gray outputs and visual artifacts are retained from the saved samples.}
\label{fig:celeba-aesthetics-full}
\end{figure}

\clearpage
\begin{figure}[H]
\centering
\scriptsize
\setlength{\tabcolsep}{0pt}
\resizebox{0.7\linewidth}{!}{\celebaTransposedRows{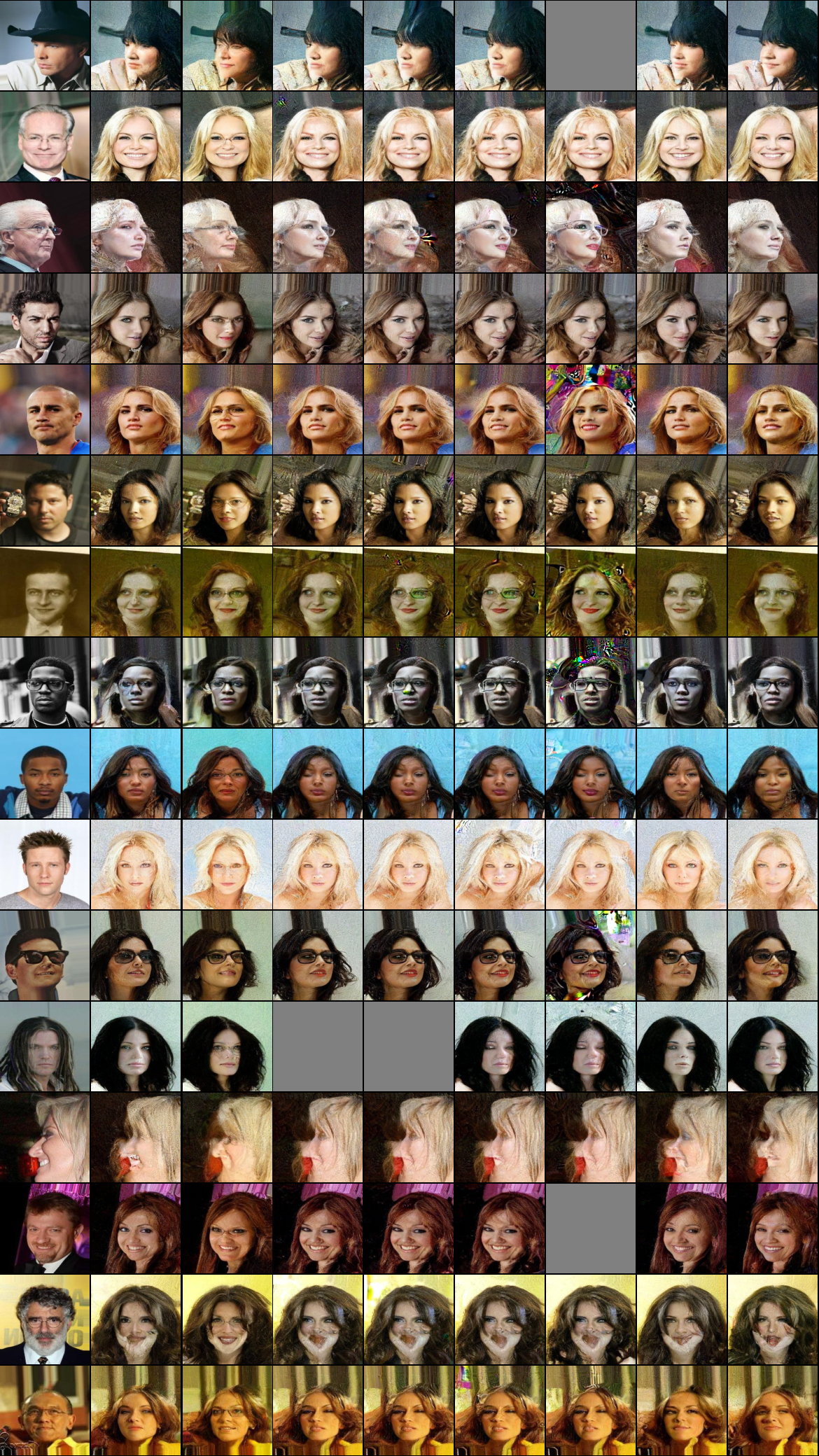}}
\caption{Full qualitative comparison for the eyeglasses with thick frames reward on CelebA. Rows show the first 16 fixed sources; columns show the input, pretrained DSBM, TSBM (ours) at stage 5, DPS with $\gamma\in\{0.5,1,2,4\}$, CondSNIS, and CondSMC. Gray outputs and visual artifacts are retained from the saved samples.}
\label{fig:celeba-eyeglasses-thick-frames-full}
\end{figure}
 
\end{document}